\providecommand{\algorithmname}{Algorithm}
\setlist[itemize]{leftmargin=1.5em}
\setlist[enumerate]{leftmargin=1.5em}
\DeclareMathOperator{\ind}{\mathds{1}}  
\numberwithin{equation}{section}
\definecolor{yxc}{RGB}{255,0,0}
\definecolor{yjc}{RGB}{125,0,0}
\definecolor{cm}{RGB}{0,0,200}
\definecolor{kzw}{RGB}{0,150,0}
\definecolor{byw}{RGB}{0,0,200}
\begin{document}
\theoremstyle{plain} \newtheorem{lemma}{\textbf{Lemma}} \newtheorem{proposition}{\textbf{Proposition}}\newtheorem{theorem}{\textbf{Theorem}}\setcounter{theorem}{0}
\newtheorem{corollary}{\textbf{Corollary}} \newtheorem{assumption}{\textbf{Assumption}}
\newtheorem{example}{\textbf{Example}} \newtheorem{definition}{\textbf{Definition}}
\newtheorem{fact}{\textbf{Fact}}\newtheorem{property}{Property}
\theoremstyle{definition}

\theoremstyle{remark}\newtheorem{remark}{\textbf{Remark}}\newtheorem{condition}{Condition}\newtheorem{claim}{Claim}\newtheorem{conjecture}{Conjecture}
\title{Sample-Efficient Reinforcement Learning for Linearly-Parameterized MDPs with a Generative Model}
\author{Bingyan Wang\footnote{The first two authors contributed equally.} \thanks{Department of Operations Research and Financial Engineering, Princeton
University, Princeton, NJ 08544, USA; Email: \texttt{\{bingyanw,
yulingy,jqfan\}@princeton.edu}.} \and Yuling Yan\footnotemark[1] \footnotemark[2] \and Jianqing Fan\footnotemark[2]}

\maketitle
\begin{abstract}
The curse of dimensionality is a widely known issue in reinforcement
learning (RL). In the tabular setting where the state space $\mathcal{S}$
and the action space $\mathcal{A}$ are both finite, to obtain a nearly
optimal policy with sampling access to a generative model, the minimax optimal
sample complexity scales linearly with $|\mathcal{S}|\times|\mathcal{A}|$, which can be prohibitively large when $\mathcal{S}$
or $\mathcal{A}$ is large. This paper considers a Markov decision
process (MDP) that admits a set of state-action features, which can
linearly express (or approximate) its probability transition kernel.
We show that a model-based approach (resp.~Q-learning) provably learns
an $\varepsilon$-optimal policy (resp.~Q-function) with high probability
as soon as the sample size exceeds the order of $\frac{K}{(1-\gamma)^{3}\varepsilon^{2}}$
(resp.~$\frac{K}{(1-\gamma)^{4}\varepsilon^{2}}$), up to some logarithmic factor. Here $K$ is the feature dimension and $\gamma\in(0,1)$ is
the discount factor of the MDP. Both sample complexity bounds are
provably tight, and our result for the model-based approach matches
the minimax lower bound. Our results show that for arbitrarily large-scale
MDP, both the model-based approach and Q-learning are sample-efficient
when $K$ is relatively small, and hence the title of this paper.

\end{abstract}

\noindent \textbf{Keywords:} model-based reinforcement learning, vanilla
Q-learning, linear transition model, sample complexity, leave-one-out
analysis


\section{Introduction}

Reinforcement learning (RL) studies the problem of learning and decision
making in a Markov decision process (MDP). Recent years have seen
exciting progress in applications of RL in real world decision-making
problems such as AlphaGo \cite{silver2016mastering,silver2017mastering}
and autonomous driving \cite{kiran2021deep}. Specifically,
the goal of RL is to search for an optimal policy that maximizes the
cumulative reward, based on sequential noisy data. There are two popular
approaches to RL: model-based and  model-free ones. 
\begin{itemize}[leftmargin=*]
\item The model-based approaches start with formulating an empirical MDP
by learning the probability transition model from the collected data
samples, and then estimating the optimal policy / value function based
on the empirical MDP. 
\item The model-free approaches (e.g.~Q-learning) learn the optimal policy
or the optimal (action-)value function from samples. As its name suggests,
model-free approaches do not attempt to learn the model explicitly.
\end{itemize}
Generally speaking, model-based approaches enjoy great flexibility
since after the transition model is learned in the first place, it
can then be applied to any other problems without touching the raw
data samples. In comparison, model-free methods, due to its online
nature, are usually memory-efficient and can interact with the environment
and update the estimate on the fly. 

This paper is devoted to investigating the sample efficiency of both
model-based RL and Q-learning (arguably one of the most
commonly adopted model-free RL algorithms). It is well known that MDPs suffer from the
curse of dimensionality. For example, in the tabular setting
where the state space $\mathcal{S}$ and the action space $\mathcal{A}$
are both finite, to obtain a near optimal policy or value function
given sampling access to a generative model, the minimax optimal sample
complexity scales linearly with $|\mathcal{S}|\times|\mathcal{A}|$
\cite{azar2013minimax,agarwal2020model}. However contemporary applications of RL often
encounters environments with exceedingly large state and action spaces,
whilst the data collection might be expensive or even high-stake.
This suggests a large gap between the theoretical findings and practical
decision-making problems where $\vert\mathcal{S}\vert$ and $\vert\mathcal{A}\vert$
are large or even infinite. 

To close the aforementioned theory-practice gap, one natural idea
is to impose certain structural assumption on the MDP. In this paper
we follow the feature-based linear transition model studied in \cite{yang2019sample},
where each state-action pair $(s,a)\in\mathcal{S}\times\mathcal{A}$
admits a $K$ dimensional feature vector $\phi(s,a)\in\mathbb{R}^{K}$
that expresses the transition dynamics $\mathbb{P}(\cdot|s,a)=\Psi\phi(s,a)$
for some unknown matrix $\Psi\in\mathbb{R}^{\vert\mathcal{S}\vert\times K}$
which is common for all $(s,a)$.  This model encompasses both the tabular case and the homogeneous model in which the state space can be partitioned into $K$ equivalent classes.  Assuming access to a generative
model \cite{kakade2003sample,kearns1999finite}, under this structural
assumption, this paper aims to answer the following two questions:

\begin{itemize}[leftmargin=*]

\item[] \emph{How many samples are needed for model-based RL and
Q-learning to learn an optimal policy under the feature-based linear
transition model? }

\end{itemize}

\noindent In what follows, we will show that the answer to this question
scales linearly with the dimension of the feature space $K$ and is
independent of $\vert\mathcal{S}\vert$ and $\vert\mathcal{A}\vert$
under the feature-based linear transition model. With the aid of this
structural assumption, model-based RL and Q-learning becomes significantly
more sample-efficient than that in the tabular setting.

\paragraph{Our contributions.}

We focus our attention on an infinite horizon MDP with discount factor
$\gamma\in(0,1)$. We use $\varepsilon$-optimal policy to indicate
the policy whose expected discounted cumulative rewards are $\varepsilon$
close to the optimal value of the MDP. Our contributions are two-fold:
\begin{itemize}[leftmargin=*]
\item We demonstrate that model-based RL provably learns an $\varepsilon$-optimal
policy by performing planning based on an empirical MDP constructed
from a total number of 
\[
\widetilde{O}\left(\frac{K}{\left(1-\gamma\right)^{3}\varepsilon^{2}}\right)
\]
samples, for all $\varepsilon\in(0,(1-\gamma)^{-1/2}]$. Here $\widetilde{O}(\cdot)$
hides logarithmic factors compared to the usual $O(\cdot)$ notation.
To the best of our knowledge, this is the first theoretical guarantee
for model-based RL under the feature-based linear transition model.
This sample complexity bound matches the minimax limit established
in \cite{yang2019sample} up to logarithmic factor.
\item We also show that Q-learning provably finds an entrywise $\varepsilon$-optimal
Q-function using a total number of
\[
\widetilde{O}\left(\frac{K}{\left(1-\gamma\right)^{4}\varepsilon^{2}}\right)
\]
samples, for all $\varepsilon\in(0,1]$. This sample complexity upper
bound improves the state-of-the-art result in \cite{yang2019sample}
and the dependency on the effective horizon $(1-\gamma)^{-4}$ is
sharp in view of \cite{li2021qlearning}.
\end{itemize}
These results taken collectively show the minimax optimality of model-based
RL and the sub-optimality of Q-learning in sample complexity.

\section{Problem formulation \label{sec:Problem-formulation} }

This paper focuses on tabular MDPs in the discounted infinite-horizon
setting \cite{bertsekas2000dynamic}. Here and throughout, $\Delta_{d-1}\coloneqq\{v\in\mathbb{R}^{d}:\sum_{i=1}^{d}v_{i}=1,v_{i}\geq0,\forall i\in[d]\}$
stands for the $d$-dimensional probability simplex and $[N]\coloneqq\{1,2,\cdots,N\}$
for any $N\in\mathbb{N}^{+}$.

\paragraph{Discounted infinite-horizon MDPs.}

Denote a discounted infinite-horizon MDP by a tuple $M=(\mathcal{S},\mathcal{A},P,r,\gamma)$,
where $\mathcal{S}=\{1,\cdots,|\mathcal{S}|\}$ is a finite set of
states, $\mathcal{A}=\{1,\cdots,|\mathcal{A}|\}$ is a finite set
of actions, $P:\mathcal{S}\times\mathcal{A}\rightarrow\Delta_{\vert\mathcal{S}\vert-1}$
represents the probability transition kernel where $P(s'|s,a)$ denotes
the probability of transiting from state $s$ to state $s'$ when
action $a$ is taken, $r:\mathcal{S}\times\mathcal{A}\rightarrow[0,1]$
denotes the reward function where $r(s,a)$ is the instantaneous reward
received when taking action $a\in\mathcal{A}$ while in state $s\in\mathcal{S}$,
and $\gamma\in(0,1)$ is the discount factor. 

\paragraph{Value function and Q-function.}

Recall that the goal of RL is to learn a policy that maximizes the
cumulative reward, which corresponds to value functions or Q-functions
in the corresponding MDP. For a deterministic policy $\pi:\mathcal{S}\rightarrow\mathcal{A}$
and a starting state $s\in\mathcal{S}$, we define the value function
as
\[
V^{\pi}\left(s\right)\coloneqq\mathbb{E}\left[\sum_{k=0}^{\infty}\gamma^{k}r\left(s_{k},a_{k}\right)\,\Big|\,s_{0}=s\right]
\]
for all $s\in\mathcal{S}$. Here, the trajectory is generated by $a_{k}=\pi(s_{k})$
and $s_{k+1}\sim P(s_{k+1}|s_{k},a_{k})$ for every $k\geq0$. This
function measures the expected discounted cumulative reward received
on the trajectory $\{(s_{k},a_{k})\}_{k\geq0}$ and the expectation
is taken with respect to the randomness of the transitions $s_{k+1}\sim P(\cdot|s_{k},a_{k})$
on the trajectory. Recall that the immediate rewards lie in $[0,1]$,
it is easy to derive that $0\leq V^{\pi}(s)\leq\frac{1}{1-\gamma}$
for any policy $\pi$ and state $s$. Accordingly, we define the Q-function
for policy $\pi$ as
\[
Q^{\pi}\left(s,a\right)\coloneqq\mathbb{E}\left[\sum_{k=0}^{\infty}\gamma^{k}r\left(s_{k},a_{k}\right)|s_{0}=s,a_{0}=a\right]
\]
for all $(s,a)\in\mathcal{S}\times\mathcal{A}$. Here, the actions
are chosen by the policy $\pi$ except for the initial state (i.e.
$a_{k}=\pi(s_{k})$ for all $k\geq1$). Similar to the value function,
we can easily check that $0\leq Q^{\pi}(s,a)\leq\frac{1}{1-\gamma}$
for any $\pi$ and $(s,a)$. To maximize the value function or Q function,
previous literature \cite{bellman1959functional,sutton2018reinforcement}
establishes that there exists an optimal policy $\pi^{\star}$ which
simultaneously maximizes $V^{\pi}(s)$ (resp. $Q^{\pi}(s,a)$) for
all $s\in\mathcal{S}$ (resp. $(s,a)\in\mathcal{S}\times\mathcal{A}$).
We define the optimal value function $V^{\star}$ and optimal Q-function
$Q^{\star}$ respectively as 
\[
V^{\star}\left(s\right)\coloneqq\max_{\pi}V^{\pi}\left(s\right)=V^{\pi^{\star}}\left(s\right),\qquad Q^{\star}\left(s,a\right)\coloneqq\max_{\pi}Q^{\pi}\left(s,a\right)=Q^{\pi^\star}\left(s,a\right)
\]
for any state-action pair $(s,a)\in\mathcal{S}\times\mathcal{A}$.

\paragraph{Linear transition model.}

Given a set of $K$ feature functions $\phi_{1}$, $\phi_{2}$, $\cdots$,
$\phi_{K}$: $\mathcal{S}\times\mathcal{A}\rightarrow\mathbb{R}$,
we define $\phi$ to be a feature mapping from $\mathcal{S}\times\mathcal{A}$
to $\mathbb{R}^{K}$ such that   
\[
\phi\left(s,a\right)=\left[\phi_{1}\left(s,a\right),\cdots,\phi_{K}\left(s,a\right)\right]\in\mathbb{R}^{K}.
\]
Then we are ready to define the linear transition model \cite{yang2019sample}
as follows.

\begin{definition}[Linear transition model]\label{def:linear-model}
Given a discounted infinite-horizon MDP $M=\left(\mathcal{S},\mathcal{A},P,r,\gamma\right)$
and a feature mapping $\phi:\mathcal{S}\times\mathcal{A}\rightarrow\mathbb{R}^{K}$,
$M$ admits the linear transition model if there exists some (unknown)
functions $\psi_{1}$, $\cdots$, $\psi_{K}:\mathcal{S}\rightarrow\mathbb{R}$,
such that 
\begin{equation}
P\left(s'|s,a\right)=\sum_{k=1}^{K}\phi_{k}\left(s,a\right)\psi_{k}\left(s'\right)\label{eq:defn-linear-model}
\end{equation}
for every $(s,a)\in\mathcal{S}\times\mathcal{A}$ and $s'\in\mathcal{S}$.
\end{definition}

Readers familiar with linear MDP literatures might immediately recognize
that the above definition is the same as the structure imposed on
the probability transition kernel $P$ in the linear MDP model \cite{yang2019sample,jin2020provably,zanette2020frequentist,he2020logarithmic,touati2020efficient,wang2020reward,wei2021learning}.
However unlike linear MDP which also requires the reward function
$r(s,a)$ to be linear in the feature mapping $\phi(s,a)$, here we
do not impose any structural assumption on the reward.

\begin{example}[Tabular MDP]\label{ex:1}Each tabular MDP can be
viewed as a linear transition model with feature mapping $\phi(s,a)=\bm{e}_{(s,a)}\in\mathbb{R}^{\vert\mathcal{S}\vert\times\vert\mathcal{A}\vert}$
(i.e.~the vector with all entries equal to $0$ but the one corresponding
to $(s,a)$ equals to $1$) for all $(s,a)\in\mathcal{S}\times\mathcal{A}$.
To see this, we can check that Definition \ref{def:linear-model}
is satisfied with $K=|\mathcal{S}|\times|\mathcal{A}|$ and $\psi_{(s,a)}(s')=\mathbb{P}(s'\vert s,a)$
for each $s,s'\in\mathcal{S}$ and $a\in\mathcal{A}$. This example
is a sanity check of Definition \ref{def:linear-model}, which also
shows that our results (Theorem \ref{thm:model-based} and \ref{thm:syn-q-upper})
can recover previous results on tabular MDP \cite{agarwal2020model,li2021qlearning}
by taking $K=|\mathcal{S}|\times|\mathcal{A}|$. 

\end{example}

\begin{example}[Simplex Feature Space]\label{ex:2}If all feature
vectors $\{\phi(s,a)\}_{(s,a)\in\mathcal{S}\times\mathcal{A}}$ fall
in the probability simplex $\Delta_{K-1}$, a linear transition model
can be constructed by taking $\psi_{k}(\cdot)$ to be any probability
measure over $\mathcal{S}$ for all $k\in[K]$. 

\end{example}

A key observation is that the model size of linear transition model
with known feature mapping $\phi$ is $|\mathcal{S}|K$ (the number of coefficients $ \psi_{k}\left(s'\right)$ in \eqref{eq:defn-linear-model}), which is
still large when the state space $\mathcal{S}$ is large. In contrast,
it will be established later that to learn a near-optimal policy or
Q-function, we only need a much smaller number of samples, which depends
linearly on $K$ and is independent of $\vert\mathcal{S}\vert$.

Next, we introduce a critical assumption employed in prior literature
\cite{yang2019sample,zanette2019limiting,shariff2020efficient}.

\begin{assumption}[Anchor state-action pairs]\label{assumption:anchor}
Assume there exists a set of anchor state-action pairs $\mathcal{K\subset\mathcal{S}\times\mathcal{A}}$
with $|\mathcal{K}|=K$\footnote{Without loss of generality, one can always assume that the number of anchor state-action pairs equals to the feature dimension $K$. Interested readers are referred to Appendix \ref{appendix:dimension-number} for detailed argument.} such that for any $(s,a)\in\mathcal{S}\times\mathcal{A}$,
its corresponding feature vector can be expressed as a convex combination
of the feature vectors of anchor state-action pairs $\{(s,a):(s,a)\in\mathcal{K}\}$:
\begin{equation}
\phi\left(s,a\right)=\sum_{i:(s_{i},a_{i})\in\mathcal{K}}\lambda_{i}\left(s,a\right)\phi\left(s_{i},a_{i}\right)\quad\text{for}\quad\sum_{i=1}^{K}\lambda_{i}\left(s,a\right)=1\quad\text{and}\quad\lambda_{i}\left(s,a\right)\geq0.\label{eq:convex-coeff}
\end{equation}
Further, we assume that the vectors in $\{\phi(s,a):(s,a)\in\mathcal{K}\}$ are linearly independent.

\end{assumption}

We pause to develop some intuition of this assumption using Examples \ref{ex:1} and \ref{ex:2}. In Example \ref{ex:1}, it is straightforward to check that tabular MDPs satisfies
Assumption \ref{assumption:anchor} with $\mathcal{K}=\mathcal{S}\times\mathcal{A}$.
In terms of Example \ref{ex:2}, without loss of generality we can
assume that the subspace spanned by the features has full rank, i.e.~$\mathsf{span}\{\phi(s,a):(s,a)\in\mathcal{S}\times\mathcal{A}\}=\mathbb{R}^K$ (otherwise we can reduce the dimension of feature
space). Then we can also check that Example \ref{ex:2} satisfies
Assumption \ref{assumption:anchor} with arbitrary $\mathcal{K}\subseteq\mathcal{S}\times\mathcal{A}$
such that the vectors in $\{\phi(s,a):(s,a)\in\mathcal{K}\}$ are
linearly independent. In fact, this sort of ``anchor'' notion appears
widely in the literature: \cite{arora2012learning}
considers ``anchor word'' in topic modeling; \cite{donoho2004does}
defines ``separability'' in their study of non-negative matrix factorization; \cite{singh1995reinforcement} introduces ``aggregate'' in reinforcement
learning; \cite{duan2018state} studies ``anchor state'' in soft
state aggregation models. These concepts all bear some kind of resemblance
to our definition of anchor state-action pairs here. 

Throughout this paper, we assume that the feature mapping $\phi$ is known, which is a widely adopted assumption in previous literature \cite{yang2019sample,jin2020provably,zhou2020provably,he2020logarithmic,touati2020efficient,wang2020reward,wei2021learning}. In practice, large scale RL usually makes use of representation learning to obtain the
feature mapping $\phi$. Furthermore, the learned representations
can be selected to satisfy the anchor state-action pairs assumption
by design.

A useful implication of Assumption \ref{assumption:anchor} is that we can represent the transition kernel as
\begin{equation}
P\left(\cdot|s,a\right)=\sum_{i:(s_{i},a_{i})\in\mathcal{K}}\lambda_{i}\left(s,a\right)P\left(\cdot|s_{i},a_{i}\right),\label{eq:p-pk}
\end{equation}
This follows simply from substituting \eqref{eq:convex-coeff} into \eqref{eq:defn-linear-model} (see (\ref{eq:claim-proof}) in Appendix~\ref{sec:Notations} for a formal proof).

\section{Model-based RL with a generative model\label{sec:Model-based-RL}}

We start with studying model-based RL with a generative model in this section. We propose a model-based planning algorithm and show that it returns an $\varepsilon$-optimal policy with minimax optimal sample size. 

\subsection{Main results}

\paragraph{A generative model and an empirical MDP.}

We assume access to a generative model that provides us with independent samples from $M$. For each anchor state-action pair
$(s_{i},a_{i})\in\mathcal{K}$, we collect $N$ independent samples
$s_{i}^{(j)}\sim P(\cdot|s_{i},a_{i})$, $j\in[N]$. This allows us
to construct an empirical transition kernel $\widehat{P}$ where 
\begin{equation}
\widehat{P}\left(s'\,|\,s,a\right)=\sum_{i=1}^{K}\lambda_{i}\left(s,a\right)\cdot\left(\frac{1}{N}\sum_{j=1}^{N}\ind\left\{ s_{i}^{\left(j\right)}=s'\right\} \right),\label{eq:empirical-transition}
\end{equation}
for each $(s,a)\in\mathcal{S}\times\mathcal{A}$. Here, $\frac{1}{N}\sum_{j=1}^{N}\ind\{s_{i}^{(j)}=s'\}$
is an empirical estimate of $P(s'|s_{i},a_{i})$ and then (\ref{eq:p-pk})
is employed. With $\widehat{P}$ in hand, we can construct an empirical
MDP $\widehat{M}=(\mathcal{S},\mathcal{A},\widehat{P},r,\gamma)$.
Our goal here is to derive the sample complexity which guarantees that the optimal
policy of $\widehat{M}$ is an $\varepsilon$-optimal policy for the
true MDP $M$. The algorithm is summarized below. 

\begin{algorithm}[h]
\caption{Model-based RL with a generative model}

\label{alg:model-based-rl}\begin{algorithmic}

\STATE \textbf{{Inputs}}: a set of anchor state-action pairs $\mathcal{K}=\{(s_{i},a_{i}):i\in[K]\}$,
feature mapping $\phi:\mathcal{S}\times\mathcal{A}\to\mathbb{R}^{K}$,
any planning algorithm $\mathcal{P}$, target algorithmic error level
$\varepsilon_{\mathsf{opt}}$.

\STATE \textbf{For }$i=1,\cdots,K$ \textbf{do}

\STATE$\qquad$Draw $N$ independent samples $s_{i}^{(j)}\sim P(\cdot|s_{i},a_{i})$,
$j=1$, $\cdots$, $N$.

\STATE \textbf{End for}

\STATE Construct an empirical MDP $\widehat{M}=(\mathcal{S},\mathcal{A},\widehat{P},r,\gamma)$
where $\widehat{P}$ can be computed by (\ref{eq:empirical-transition}).

\STATE \textbf{{Output}} $\widehat{\pi}$ as an $\varepsilon_{\mathsf{opt}}$-optimal
policy of $\widehat{M}$ computed by the planning algorithm $\mathcal{P}$.

\end{algorithmic}
\end{algorithm}

Careful readers may note that in Algorithm
\ref{alg:model-based-rl}, $\{\lambda(s,a):(s,a)\in\mathcal{S}\times\mathcal{A}\}$ is used
in the construction of $\widehat{P}$, while $\{\lambda(s,a):(s,a)\in\mathcal{S}\times\mathcal{A}\}$
is not input into the algorithm. This is because given $\mathcal{K}$
and $\phi$ are known, $\{\lambda(s,a):(s,a)\in\mathcal{S}\times\mathcal{A}\}$
can be calculated explicitly. The following theorem provides theoretical guarantees for the
output policy $\widehat{\pi}$ of the chosen optimization algorithm
on the empirical MDP $\widehat{M}$.

\begin{theorem}\label{thm:model-based}Suppose that $\delta>0$ and $\varepsilon\in(0,(1-\gamma)^{-1/2}]$.
Let $\widehat{\pi}$ be the policy returned by Algorithm~\ref{alg:model-based-rl}.
Assume that
\begin{equation}
N\geq\frac{C\log\left(K/\left(\left(1-\gamma\right)\delta\right)\right)}{\left(1-\gamma\right)^{3}\varepsilon^{2}}\label{eq:model-sample-thm}
\end{equation}
for some sufficiently large constant $C>0$. Then with probability
exceeding $1-\delta$,
\begin{equation}
Q^{\star}(s,a)-Q^{\widehat{\pi}}(s,a)\leq\varepsilon+\frac{4\varepsilon_{\mathsf{opt}}}{1-\gamma},\label{eq:model-error}
\end{equation}
for every $(s,a)\in\mathcal{S}\times\mathcal{A}$. Here $\varepsilon_{\mathsf{opt}}$ is the target algorithmic error level in Algorithm \ref{alg:model-based-rl}.
\end{theorem}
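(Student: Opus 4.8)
\quad The plan is to reduce \eqref{eq:model-error} to $O(K)$ scalar Bernstein inequalities by fully exploiting the anchor structure of Assumption~\ref{assumption:anchor}, and then to import the total-variance and leave-one-out machinery developed for the tabular case in \cite{azar2013minimax,agarwal2020model,li2021qlearning} to obtain the sharp horizon dependence. First, since $Q^\star(s,a)-Q^{\widehat\pi}(s,a)=\gamma\langle P(\cdot\mid s,a),\,V^\star-V^{\widehat\pi}\rangle$ and $V^{\widehat\pi}\le V^\star$ entrywise, it suffices to bound $\|V^\star-V^{\widehat\pi}\|_\infty$. Writing $\widehat{V}^\star,\widehat{Q}^\star$ for the optimal value and Q-function of $\widehat M$, comparing at each state the greedy actions of $\pi^\star$ and of $\widehat\pi$ against $\widehat Q^\star$ yields the entrywise inequality
\[
0\;\le\;V^\star-V^{\widehat\pi}\;\le\;\bigl(V^{\pi^\star}-\widehat{V}^{\pi^\star}\bigr)\;+\;\bigl(\widehat{V}^{\widehat\pi}-V^{\widehat\pi}\bigr)\;+\;O\!\bigl(\tfrac{\varepsilon_{\mathsf{opt}}}{1-\gamma}\bigr)\one,
\]
the last group of terms (which, after the $\gamma$-factor above, aggregate to at most $\tfrac{4\varepsilon_{\mathsf{opt}}}{1-\gamma}$) accounting for the $\varepsilon_{\mathsf{opt}}$-suboptimality of $\widehat\pi$ in $\widehat M$. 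Unrolling the Bellman equations turns the two main terms into $\gamma(I-\gamma\widehat P_{\pi^\star})^{-1}(P_{\pi^\star}-\widehat P_{\pi^\star})V^\star$ and $\gamma(I-\gamma P_{\widehat\pi})^{-1}(\widehat P_{\widehat\pi}-P_{\widehat\pi})\widehat V^{\widehat\pi}$, so the whole estimate reduces to controlling $(P-\widehat P)$ applied to the \emph{fixed} vectors $V^\star$ and $\widehat V^{\widehat\pi}$, amplified by resolvents of $\ell_\infty\!\to\!\ell_\infty$ norm $\le(1-\gamma)^{-1}$; the first piece involves only the sample-independent $V^\star$, the second the empirical $\widehat V^{\widehat\pi}$, which is the source of all the difficulty.

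\emph{The anchor structure and the $K$-dependence.}\quad For any vector $u$ and any $(s,a)$, \eqref{eq:p-pk} gives $\langle(P-\widehat P)(\cdot\mid s,a),u\rangle=\sum_i\lambda_i(s,a)\langle(P-\widehat P)(\cdot\mid s_i,a_i),u\rangle$ with $\lambda(s,a)$ in the simplex, so it is enough to control the $K$ scalars $\langle(P-\widehat P)(\cdot\mid s_i,a_i),u\rangle$, and every union bound then sees $K$ in place of $|\mathcal S||\mathcal A|$. For $u=V^\star$, Bernstein's inequality plus a union bound over the $K$ anchors, together with the elementary facts $\sum_i\lambda_i(s,a)\Var_{P(\cdot\mid s_i,a_i)}(V^\star)\le\Var_{P(\cdot\mid s,a)}(V^\star)$ (a mixture's variance dominates the mixture of variances) and $\sum_i\lambda_i\sqrt{x_i}\le\sqrt{\sum_i\lambda_i x_i}$, yield uniformly in $(s,a)$ the bound $|\langle(P-\widehat P)(\cdot\mid s,a),V^\star\rangle|\lesssim\sqrt{\Var_{P(\cdot\mid s,a)}(V^\star)\,\iota/N}+\iota/(N(1-\gamma))$ with $\iota:=\log\tfrac{K}{(1-\gamma)\delta}$ --- precisely the tabular estimate with $\log K$ replacing $\log(|\mathcal S||\mathcal A|)$.

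\emph{Leave-one-out, then total variance.}\quad To handle $(\widehat P_{\widehat\pi}-P_{\widehat\pi})\widehat V^{\widehat\pi}$ --- where $\widehat V^{\widehat\pi}$ depends on the very samples against which we must concentrate --- I would, for each anchor $i$, introduce the auxiliary empirical MDP $\widehat M^{(i)}$ that replaces the $i$-th empirical anchor distribution by the true $P(\cdot\mid s_i,a_i)$ and keeps the others; its optimal value $\widehat V^{\star,(i)}$ is independent of the $N$ samples drawn at anchor $i$ (and $O(\varepsilon_{\mathsf{opt}}/(1-\gamma))$-close to $\widehat V^{\widehat\pi}$ up to the perturbation), so $\langle(\widehat P-P)(\cdot\mid s_i,a_i),\widehat V^{\star,(i)}\rangle$ obeys the same Bernstein bound as above. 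Because $\widehat M$ and $\widehat M^{(i)}$ differ only through the simplex-weighted perturbation $\lambda_i(\cdot)\,[\widehat P(\cdot\mid s_i,a_i)-P(\cdot\mid s_i,a_i)]$, a value-perturbation estimate bounds $\|\widehat V^{\widehat\pi}-\widehat V^{\star,(i)}\|_\infty$ (up to $\varepsilon_{\mathsf{opt}}$-terms) by a constant times the anchor-$i$ deviation, and swapping $\widehat V^{\widehat\pi}$ for $\widehat V^{\star,(i)}$ inside the $i$-th inner product then costs only a lower-order $\ell_1$--$\ell_\infty$ term; crucially we never use $\|\widehat P(\cdot\mid s_i,a_i)-P(\cdot\mid s_i,a_i)\|_1$ being small, which would resurrect an $|\mathcal S|$ factor. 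Feeding these per-state Bernstein bounds through the resolvents and invoking the total-variance identity $\sum_{t\ge0}\gamma^{2t}P_\pi^t\Var_{P_\pi}(V^\pi)=\gamma^{-2}\Var^\pi$ with $\|\Var^\pi\|_\infty\le\tfrac14(1-\gamma)^{-2}$ --- combined with Cauchy--Schwarz as in \cite{azar2013minimax,agarwal2020model,li2021qlearning} --- replaces the naive $(1-\gamma)^{-2}$ blow-up of the standard-deviation term by $(1-\gamma)^{-3/2}$, giving $\|V^\star-V^{\widehat\pi}\|_\infty\lesssim\sqrt{\iota/((1-\gamma)^3N)}+\tfrac{4\varepsilon_{\mathsf{opt}}}{1-\gamma}+(\text{lower order})$. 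The leftover terms scale like $\iota/(N(1-\gamma)^2)$ and like products of the statistical errors, and the hypothesis $\varepsilon\le(1-\gamma)^{-1/2}$ is exactly what makes $\iota/((1-\gamma)^3\varepsilon^2)$ dominate all of them, so that \eqref{eq:model-sample-thm} suffices; rearranging gives \eqref{eq:model-error}.

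\emph{Main obstacle.}\quad The real work lies in the last step: executing the leave-one-out decoupling and the total-variance refinement \emph{simultaneously}. Concretely one is led to a coupled system of inequalities among $\|\widehat V^\star-V^\star\|_\infty$, $\max_i\|\widehat V^\star-\widehat V^{\star,(i)}\|_\infty$ and $\|V^\star-V^{\widehat\pi}\|_\infty$ --- the variances appearing in the Bernstein bounds must themselves be bootstrapped, e.g.\ via $\Var_{P_{\widehat\pi}}(V^\star)\le 2\Var_{P_{\widehat\pi}}(V^{\widehat\pi})+2\|V^\star-V^{\widehat\pi}\|_\infty^2$ --- which must be solved while keeping every estimate $|\mathcal S|$-free (so that only value-weighted, never $\ell_1$, deviations of the anchor distributions enter) and without losing a power of $(1-\gamma)^{-1}$, the latter being precisely where the total-variance identity, rather than the crude $\sqrt{\Var(V^\star)}\le(1-\gamma)^{-1}$, is indispensable.
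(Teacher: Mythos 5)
Your overall architecture matches the paper's: split $Q^\star-Q^{\widehat\pi}$ into a $\pi^\star$-branch, a $\widehat\pi$-branch and an $\varepsilon_{\mathsf{opt}}$-term, use the simplex weights $\lambda(s,a)$ to reduce every deviation $(P-\widehat P)_{s,a}u$ to the $K$ anchor rows (whence $\log K$ in the union bound), apply Bernstein per anchor, propagate the standard deviations through the resolvent via the total-variance identity, and close a self-bounding recursion. All of that is faithful to Appendix B. The gap is in the one step that actually carries the difficulty: your leave-one-out construction.

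You decouple by defining $\widehat M^{(i)}$ as $\widehat M$ with the $i$-th empirical anchor distribution replaced by the true $P(\cdot\mid s_i,a_i)$, and you assert that swapping $\widehat V^{\widehat\pi}$ for $\widehat V^{\star,(i)}$ inside the $i$-th inner product ``costs only a lower-order term.'' It does not. The value perturbation between $\widehat M$ and $\widehat M^{(i)}$ is governed by
\[
\bigl\|\widehat V^{\star}-\widehat V^{\star,(i)}\bigr\|_\infty\;\le\;\frac{\gamma}{1-\gamma}\,\bigl|\langle(\widehat P-P)(\cdot\mid s_i,a_i),\widehat V^{\star,(i)}\rangle\bigr|\;\lesssim\;\frac{1}{1-\gamma}\sqrt{\Var_{P(\cdot\mid s_i,a_i)}\bigl(\widehat V^{\star,(i)}\bigr)\,\iota/N}\;\lesssim\;\frac{\sqrt{\iota/N}}{(1-\gamma)^2},
\]
so the swap cost $2\|\widehat V^{\widehat\pi}-\widehat V^{\star,(i)}\|_\infty$ exceeds the main per-anchor Bernstein term $\sqrt{\Var\cdot\iota/N}\le\frac{1}{2(1-\gamma)}\sqrt{\iota/N}$ by a full factor of $(1-\gamma)^{-1}$; worse, since it is a constant added to every row (with the variance evaluated at the fixed anchor $(s_i,a_i)$, not at the row $(s,a)$ being bounded), the total-variance trick cannot absorb it, and the outer resolvent simply multiplies it by $(1-\gamma)^{-1}$. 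The resulting contribution to $\|Q^\star-Q^{\widehat\pi}\|_\infty$ is of order $(1-\gamma)^{-3}\sqrt{\iota/N}$, which forces $N\gtrsim\iota\,(1-\gamma)^{-6}\varepsilon^{-2}$ rather than \eqref{eq:model-sample-thm}. This is exactly the obstruction that the paper's construction is designed to evade: following \cite{agarwal2020model}, it uses $s$-absorbing MDPs $\widehat M_{s,u}$ together with a net $\mathcal U_s$ of $\lceil(1-\gamma)^{-2}\rceil$ values of the absorbing reward parameter $u$. The point is that for the ideal choice $u=\widehat V^\star(s)$ one has $\widehat V^\star_{s,u}=\widehat V^\star$ exactly (cf.\ \eqref{eq:v-loo}), so the swap cost is only the net resolution $\min_{u\in\mathcal U_s}|\widehat V^\star(s)-u|=O(\sqrt{\iota/N})$ --- genuinely lower order --- at the price of a harmless $\log|\mathcal U_s|$ in the union bound (this is where the $\log\frac{1}{1-\gamma}$ inside $\log\frac{K}{(1-\gamma)\delta}$ comes from, which your accounting also omits). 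Without this ingredient, or an equivalent one, your argument cannot reach the $(1-\gamma)^{-3}$ rate claimed in the theorem.
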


We first remark that the two terms on the right hand side of (\ref{eq:model-error})
can be viewed as statistical error and algorithmic error, respectively.
The first term $\varepsilon$ denotes the statistical error coming
from the deviation of the empirical MDP $\widehat{M}$ from the true
MDP $M$. As the sample size $N$ grows, $\varepsilon$ could decrease
towards $0$. The other term $4\varepsilon_{\mathsf{opt}}/(1-\gamma)$
represents the algorithmic error where $\varepsilon_{\mathsf{opt}}$
is the target accuracy level of the planning algorithm applied to
$\widehat{M}$. Note that $\mathcal{\varepsilon_{\mathsf{opt}}}$
can be arbitrarily small if we run the planning algorithm (e.g.~value
iteration) for enough iterations. A few implications of this theorem
are in order.
\begin{itemize}[leftmargin=*]
\item \textit{Minimax-optimal sample complexity}. Assume that $\varepsilon_{\mathsf{opt}}$
is made negligibly small, e.g.~$\varepsilon_{\mathsf{opt}}=O((1-\gamma)\varepsilon)$
to be discussed in the next point. Note that we draw $N$ independent
samples for each state-action pair $(s,a)\in\mathcal{K}$, therefore
the requirement \eqref{eq:model-sample-thm}
for finding an $O(\varepsilon)$-optimal policy translates into the
following sample complexity requirement
\[
\widetilde{O}\left(\frac{K}{\left(1-\gamma\right)^{3}\varepsilon^{2}}\right).
\]
This matches the minimax optimal lower bound (up to a logarithm factor)
established in \cite[Theorem 1]{yang2019sample} for feature-based
MDP. In comparison, for tabular MDP the minimax optimal sample complexity
is $\widetilde{\Omega}((1-\gamma)^{-3}\varepsilon^{-2}\vert\mathcal{S}\vert\vert\mathcal{A}\vert)$
\cite{azar2013minimax, agarwal2020model}. Our sample complexity scales linearly with
$K$ instead of $\vert\mathcal{S}\vert\vert\mathcal{A}\vert$ for
tabular MDP as desired.
\item \textit{Computational complexity}. An advantage of Theorem \ref{thm:model-based}
is that it incorporates the use of any efficient planning algorithm
applied to the empirical MDP $\widehat{M}$. Classical algorithms
include Q-value iteration (QVI) or policy iteration (PI) \cite{puterman2014markov}.
For example, QVI achieves the target level $\varepsilon_{\mathsf{opt}}$
in $O((1-\gamma)^{-1}\log\varepsilon_{\text{opt}}^{-1})$ iterations,
and each iteration takes time proportional to $O(NK+|\mathcal{S}||\mathcal{A}|K)$.
To learn an $O(\varepsilon)$-optimal policy, which requires sample complexity (\ref{eq:model-sample-thm})
and the target level $\varepsilon_{\mathsf{opt}}=O((1-\gamma)\varepsilon)$,
the overall running time is
\[
\widetilde{O}\left(\frac{\left|\mathcal{S}\right|\left|\mathcal{A}\right|K}{1-\gamma}+\frac{K}{\left(1-\gamma\right)^{4}\varepsilon^{2}}\right).
\]
In comparison, for the tabular MDP the corresponding running time
is $\widetilde{O}((1-\gamma)^{-4}\varepsilon^{-2}\vert\mathcal{S}\vert\vert\mathcal{A}\vert)$
\cite{agarwal2020model}. This suggests that under the feature-based
linear transition model, the computational complexity is $\min\{\vert\mathcal{S}\vert\vert\mathcal{A}\vert/K,(1-\gamma)^{-3}\varepsilon^{-2}/K\}$
times lower than that for the tabular MDP (up to logarithm factors),
which is significantly more efficient when $K$ is not too large.

\item \textit{Stability }\emph{vis-\`a-vis}\textit{ model misspecification}. A more
general version of Theorem~\ref{thm:model-based} (Theorem \ref{thm:model-based-general}
in Appendix \ref{subsec:Proof-of-Theorem-model}) shows that when
$P$ approximately (instead of exactly) admits the linear transition
model, we can still achieve some meaningful result. Specifically,
if there exists a linear transition kernel $\widetilde{P}$ obeying
$\max_{(s,a)\in\mathcal{S}\times\mathcal{A}}\Vert\widetilde{P}(\cdot|s,a)-P(\cdot|s,a)\Vert_{1}\leq\xi$
for some $\xi\geq0$, we can show that $\widehat{\pi}$ returned by
Algorithm \ref{alg:model-based-rl} (with slight modification) satisfies
\[
Q^{\star}(s,a)-Q^{\widehat{\pi}}(s,a)\leq\varepsilon+\frac{4\varepsilon_{\mathsf{opt}}}{1-\gamma}+\frac{22\xi}{(1-\gamma)^2},
\]
for every $(s,a)\in\mathcal{S}\times\mathcal{A}$. 
This shows that the model-based method is stable vis-a\'-vis model misspecification.
Interested readers are referred to Appendix \ref{subsec:Proof-of-Theorem-model}
for more details.
\end{itemize}
In Algorithm \ref{alg:model-based-rl}, the reward function $r$
is assumed to be known. If the information of $r$ is unavailable, an alternative is to assume that $r$ is linear with respect
to the feature mapping $\phi$, i.e. $r(s,a)=\theta^{\top}\phi(s,a)$
for every $(s,a)\in\mathcal{S}\times\mathcal{A}$, which is widely
adopted in linear MDP literature \cite{he2020logarithmic,jin2020provably,wang2020reward,wei2021learning}. Under this linear
assumption, one can obtain $\theta$ by solving the following linear
system of equations 
\begin{equation}
r\left(s,a\right)=\theta^{\top}\phi\left(s,a\right),\quad\forall\left(s,a\right)\in\mathcal{K},
\end{equation}
which can be constructed by the observed reward $r(s,a)$ for all
anchor state-action pairs.

After the publication of this paper, we noticed that \cite{cui2020plug} achieved similar result as our Theorem \ref{thm:model-based}. 

\section{Model-free RL---vanilla Q Learning\label{sec:Model-free-RL}}
In this section, we turn to study one of the most popular model-free RL algorithms---Q-learning. We provide tight sample complexity bound for vanilla Q-learning under the feature-based linear transition model, which shows its sample-efficiency (depends on $\vert K \vert$ instead of $\vert\mathcal{S}\vert$ or $\vert\mathcal{A}\vert$) and sub-optimality in the dependency on the effective horizon.

\subsection{Q-learning algorithm}

The vanilla Q-learning algorithm maintains a Q-function estimate
$Q_{t}:\mathcal{S}\times\mathcal{A}\rightarrow\mathbb{R}$ for all
$t\ge0$, with initialization $Q_0$ obeying $0\leq Q_{0}(s,a)\leq\frac{1}{1-\gamma}$
for every $(s,a)\in\mathcal{S}\times\mathcal{A}$. Assume we have access to a generative model.
In each iteration $t\geq1$, we collect an independent sample $s_{t}(s,a)\sim P(\cdot|s,a)$
for every anchor state-action pair $(s,a)\in\mathcal{K}$ and define $Q_{\mathcal{K}}^{\left(t\right)}:\mathcal{K}\rightarrow\mathbb{R}$
to be 
\[
Q_{\mathcal{K}}^{\left(t\right)}\left(s,a\right)\coloneqq\max_{a'\in\mathcal{A}}Q_t\left(s_{t},a'\right),\qquad s_{t}\equiv s_{t}\left(s,a\right)\sim P\left(\cdot|s,a\right).
\]
Then given the learning rate $\eta_t\in(0,1]$, the algorithm adopts the following update rule to update all entries of the Q-function estimate
\[
Q_{t}=\left(1-\eta_{t}\right)Q_{t-1}+\eta_{t}\mathcal{T}_{\mathcal{K}}^{\left(t\right)}\left(Q_{t-1}\right).
\]
Here, $\mathcal{T}_{\mathcal{K}}^{\left(t\right)}$ is an empirical Bellman operator associated with the linear
transition model $M$ and the set $\mathcal{K}$ and is given by 
\[
\mathcal{T}_{\mathcal{K}}^{\left(t\right)}\left(Q\right)\left(s,a\right)\coloneqq r\left(s,a\right)+\gamma\lambda\left(s,a\right)Q_{\mathcal{K}}^{\left(t\right)},
\]
where \eqref{eq:p-pk} is used in the construction.
Clearly,  this newly defined operator $\mathcal{T}_{\mathcal{K}}^{(t)}$
is an unbiased estimate of the famous Bellman operator $\mathcal{T}$ \cite{bellman1952theory} defined as
\[
\forall\left(s,a\right)\in\mathcal{S}\times\mathcal{A}:\qquad\mathcal{T}\left(Q\right)\left(s,a\right)\coloneqq r\left(s,a\right)+\gamma\mathbb{E}_{s'\sim P\left(\cdot|s,a\right)}\left[\max_{a'\in\mathcal{A}}Q\left(s',a'\right)\right].
\]
A critical property is that the Bellman operator $\mathcal{T}$ is contractive with a unique fixed point which is the optimal Q-function $Q^{\star}$\cite{bellman1952theory}. To solve the fixed-point equation $\mathcal{T}(Q^{\star})=Q^{\star}$, Q-learning was then introduced by \cite{watkins1992q} based on the idea of stochastic approximation
\cite{robbins1951stochastic}. This procedure is precisely described  in Algorithm~\ref{alg:syn-q}.


\begin{algorithm}[h]
\caption{Vanilla Q-learning for infinite-horizon discounted MDPs}

\label{alg:syn-q}\begin{algorithmic}

\STATE \textbf{{inputs}}: learning rates $\left\{ \eta_{t}\right\} $,
number of iterations $T$, discount factor $\gamma$, initial estimate
$Q_{0}$.

\STATE \textbf{for }$t=1,\ldots,T$ \textbf{do}

\STATE$\qquad$Draw $s_{t}\left(s,a\right)\sim P\left(\cdot|s,a\right)$
for each $\left(s,a\right)\in\mathcal{K}$.

\STATE$\qquad$Compute $\bm{Q}_{t}$ according to the update rule\vspace{-1em}

\[
Q_{t}=\left(1-\eta_{t}\right)Q_{t-1}+\eta_{t}\mathcal{T}_{\mathcal{K}}^{\left(t\right)}\left(Q_{t-1}\right).
\]

\STATE \textbf{end for}

\end{algorithmic}
\end{algorithm}

\subsection{Main results}

We are now ready to provide our main result for vanilla Q-learning, assuming sampling access to a generative model.
\begin{theorem}\label{thm:syn-q-upper} Consider any $\delta\in(0,1)$
and $\varepsilon\in(0,1]$. Assume that for any $0\leq t\leq T$,
the learning rates satisfy 
\begin{equation}
\frac{1}{1+\frac{c_{1}\left(1-\gamma\right)T}{\log^{2}T}}\leq\eta_{t}\leq\frac{1}{1+\frac{c_{2}\left(1-\gamma\right)t}{\log^{2}T}}\label{eq:syn-q-stepsize}
\end{equation}
for some sufficiently small universal constants $c_{1}\geq c_{2}>0$.
Suppose that the total number of iterations $T$ exceeds 
\begin{equation}
T\geq\frac{C_{3}\log\left(KT/\delta\right)\log^{4}T}{\left(1-\gamma\right)^{4}\varepsilon^{2}}\label{eq:syn-q-iteration-complexity}
\end{equation}
for some sufficiently large universal constant $C_{3}>0$. If the
initialization obeys $0\leq Q_{0}(s,a)\leq\frac{1}{1-\gamma}$ for
any $(s,a)\in\mathcal{S}\times\mathcal{A}$, then with probability
exceeding $1-\delta$, the output $Q_{T}$ of Algorithm \ref{alg:syn-q}
satisfies
\begin{equation}
\max_{\left(s,a\right)\in\mathcal{S}\times\mathcal{A}}\left|Q_{T}\left(s,a\right)-Q^{\star}\left(s,a\right)\right|\leq\varepsilon.\label{eq:model-free-Q-T}
\end{equation}
In addition, let $\pi_{T}$ (resp.~$V_{T}$)
to be the policy (resp.~value function) induced by $Q_{T}$, then
one has
\begin{equation}
	\max_{s\in\mathcal{S}}\left|V^{\pi_{T}}\left(s\right)-V^{\star}\left(s\right)\right|\leq\frac{2\gamma\varepsilon}{1-\gamma}.\label{eq:model-q-V}
\end{equation}
\end{theorem}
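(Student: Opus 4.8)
The plan is to analyze the vanilla Q-learning recursion by tracking the error $\Delta_t(s,a) \coloneqq Q_t(s,a) - Q^\star(s,a)$ and exploiting the fact that the empirical Bellman operator $\mathcal{T}_{\mathcal{K}}^{(t)}$ is unbiased for the true contractive Bellman operator $\mathcal{T}$. First I would unfold the update rule into the standard stochastic-approximation form $Q_t = (1-\eta_t)Q_{t-1} + \eta_t\big(\mathcal{T}(Q_{t-1}) + e_t\big)$ where $e_t \coloneqq \mathcal{T}_{\mathcal{K}}^{(t)}(Q_{t-1}) - \mathcal{T}(Q_{t-1})$ is a martingale-difference term (conditionally on the past, $\mathbb{E}[e_t] = 0$, using \eqref{eq:p-pk} and the construction of $Q_{\mathcal K}^{(t)}$). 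Writing $\alpha_t^{(T)} \coloneqq \eta_t\prod_{j=t+1}^{T}(1-\eta_j)$ for the aggregated weights, one obtains the telescoped identity $\Delta_T = \big(\prod_{t=1}^T(1-\eta_t)\big)\Delta_0 + \sum_{t=1}^T \alpha_t^{(T)}\big(\mathcal{T}(Q_{t-1}) - \mathcal{T}(Q^\star)\big) + \sum_{t=1}^T \alpha_t^{(T)} e_t$, where I used $\mathcal{T}(Q^\star) = Q^\star$. The first term decays geometrically to $0$ under the stepsize schedule \eqref{eq:syn-q-stepsize} (since $\sum\eta_t = \infty$), the second term is controlled by the $\gamma$-contraction of $\mathcal{T}$ in $\ell_\infty$ yielding a recursive bound on $\|\Delta_t\|_\infty$, and the third term is the genuine statistical fluctuation.

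The key quantitative steps are then: (i) a crude uniform bound $\|Q_t\|_\infty \le \frac{1}{1-\gamma}$ for all $t$ (by induction, using $0 \le Q_0 \le \frac{1}{1-\gamma}$, $r \in [0,1]$, and the convex-combination structure of $\lambda(s,a)$, which keeps $Q_{\mathcal K}^{(t)}$ bounded and hence keeps the averaged iterate in the same range); (ii) a high-probability bound on the weighted noise sum $\big\|\sum_{t=1}^T \alpha_t^{(T)} e_t\big\|_\infty$ — here I would apply a Freedman/Bernstein-type martingale concentration inequality over the $|\mathcal S||\mathcal A|$ coordinates, noting crucially that $|e_t(s,a)| \lesssim \frac{1}{1-\gamma}$ and, more importantly, that the \emph{conditional variance} of $e_t(s,a)$ is at most of order $\frac{\|V_{t-1}\|_\infty^2}{1-\gamma} \cdot$ (something), which is what produces the $\frac{1}{(1-\gamma)^4}$ rather than $\frac{1}{(1-\gamma)^5}$ scaling; (iii) combining these with the contraction recursion to show $\|\Delta_T\|_\infty \le \varepsilon$ once $T$ satisfies \eqref{eq:syn-q-iteration-complexity}. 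The role of the $\log^2 T$ factors in the stepsize and the $\log^4 T$ in the iteration complexity is to make the effective "burn-in" and the rescaled stepsizes interact correctly — one typically splits $[1,T]$ into epochs and shows the error contracts by a constant factor per epoch while the accumulated noise stays at the target level. Finally, \eqref{eq:model-q-V} follows from \eqref{eq:model-free-Q-T} by the standard policy-error lemma: if $\pi_T$ is greedy with respect to $Q_T$ and $\|Q_T - Q^\star\|_\infty \le \varepsilon$, then $\|V^{\pi_T} - V^\star\|_\infty \le \frac{2\gamma\varepsilon}{1-\gamma}$, proved by writing $V^\star - V^{\pi_T} = (V^\star - V_T) + (V_T - V^{\pi_T})$ and bounding each piece via the Bellman equations for $\pi_T$ and $\pi^\star$.

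The main obstacle is step (ii): obtaining the \emph{sharp} horizon dependence $(1-\gamma)^{-4}$ rather than a looser $(1-\gamma)^{-5}$. A naive bound on the noise sum treats $|e_t| \lesssim (1-\gamma)^{-1}$ as the only information and loses a factor of $(1-\gamma)^{-1}$. The fix requires a variance-aware (Bernstein-style) analysis that controls $\sum_t (\alpha_t^{(T)})^2 \mathrm{Var}(e_t)$, together with the observation that $\sum_t (\alpha_t^{(T)})^2 \lesssim \max_t \alpha_t^{(T)} \lesssim \frac{\log^2 T}{(1-\gamma)T}$ under the prescribed stepsizes — this is precisely where the delicate stepsize window \eqref{eq:syn-q-stepsize} and the polylog factors are consumed. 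Making the martingale concentration, the recursive contraction, and the epoch decomposition fit together with matching polylog powers is the technical heart of the argument; the rest is bookkeeping analogous to the tabular analysis of \cite{li2021qlearning}, with $|\mathcal S||\mathcal A|$ appearing only inside logarithms (via a union bound over coordinates) and $K$ entering through the sample count — $K$ fresh transitions per iteration — so that the final sample complexity is $K \cdot T = \widetilde O\big(\frac{K}{(1-\gamma)^4\varepsilon^2}\big)$.
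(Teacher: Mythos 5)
Your skeleton matches the paper's proof almost exactly: the paper also unrolls the update with the aggregated weights $\eta_i^{(t)}=\eta_i\prod_{j=i+1}^{t}(1-\eta_j)$, splits the error into an initialization term, a contraction term (handled via the two-sided bounds $\overline{\bm{P}}^{\pi^{\star}}\bm{\Delta}_{t-1}\leq\overline{\bm{P}}(\bm{V}_{t-1}-\bm{V}^{\star})\leq\overline{\bm{P}}^{\pi_{t-1}}\bm{\Delta}_{t-1}$, which is the matrix-level form of the $\gamma$-contraction you invoke), and a martingale term controlled by Freedman's inequality; it then runs an epoch/bootstrap recursion on $u_k=\max_t\Vert\bm{\Delta}_t\Vert_{\infty}$ borrowed from \cite{li2021qlearning}, and closes with exactly the greedy-policy bound $\Vert V^{\pi_T}-V^{\star}\Vert_{\infty}\leq\frac{2\gamma}{1-\gamma}\Vert V_T-V^{\star}\Vert_{\infty}$ for \eqref{eq:model-q-V}.

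There is, however, one concrete ingredient you leave as ``(something)'' that is precisely the new content of this theorem relative to the tabular case, and without which your step (ii) does not close. The per-coordinate noise is $e_t(s,a)=\gamma\,\bm{\lambda}(s,a)\bigl(\widehat{\bm{P}}_{\mathcal{K}}^{(t)}-\bm{P}_{\mathcal{K}}\bigr)\bm{V}_{t-1}$, a $\lambda$-weighted sum of $K$ \emph{independent} single-sample estimates, so its conditional variance is $\gamma^{2}\sum_{i}\lambda_i(s,a)^{2}\,\mathsf{Var}_{\bm{P}_{s_i,a_i}}(\bm{V}_{t-1})$ --- a quantity indexed by the anchor rows, not by $(s,a)$. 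To inherit the tabular $(1-\gamma)^{-4}$ rate from \cite{li2021qlearning} one must dominate this by $\mathsf{Var}_{\bm{P}_{s,a}}(\bm{V}_{t-1})$ at the \emph{same} row $(s,a)$, because the subsequent total-variance/self-bounding step (which is what upgrades the naive $(1-\gamma)^{-5}$ you would get from $\mathsf{Var}\leq(1-\gamma)^{-2}$ plus the $(1-\gamma)^{-1}$ recursion amplification) telescopes $\sum_h\gamma^h(\bm{P}^{\pi})^h$ against the variance under $\bm{P}$ itself. The paper isolates this as Claim \ref{claim:1}:
\begin{equation*}
\bm{\lambda}(s,a)^{2}\left(\bm{P}_{\mathcal{K}}(\bm{V}\circ\bm{V})-(\bm{P}_{\mathcal{K}}\bm{V})\circ(\bm{P}_{\mathcal{K}}\bm{V})\right)\leq\bm{\lambda}(s,a)\bm{P}_{\mathcal{K}}(\bm{V}\circ\bm{V})-\left(\bm{\lambda}(s,a)\bm{P}_{\mathcal{K}}\bm{V}\right)^{2},
\end{equation*}
proved by a rearrangement computation using $\sum_i\lambda_i=1$ and $\lambda_i\geq0$ (this is also exactly where Assumption \ref{assumption:anchor}'s nonnegativity is consumed). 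You should state and prove this comparison explicitly; everything else in your plan is the same bookkeeping the paper performs.
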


This theorem provides theoretical guarantees on the performance of
Algorithm \ref{alg:syn-q}. A few implications of this theorem are
in order.
\begin{itemize}[leftmargin=*]
\item \textit{Learning rate}. The condition (\ref{eq:syn-q-stepsize})
accommodates two commonly adopted choice of learning rates: (i) linearly
rescaled learning rates $\eta_{t}=[1+c_{2}(1-\gamma)t/\log^{2}T]^{-1}$,
and (ii) iteration-invariant learning rates $\eta_{t}\equiv[1+c_{1}(1-\gamma)T/\log^{2}T]$.
Interested readers are referred to the discussions in \cite[Section 3.1]{li2021qlearning}
for more details on these two learning rate schemes.
\item \textit{Tight sample complexity bound}. Note that we draw $K$ independent
samples in each iteration, therefore the iteration complexity (\ref{eq:syn-q-iteration-complexity})
can be translated into the sample complexity bound $TK$ in order for Q-learning
to achieve $\varepsilon$-accuracy:
\begin{equation}
\widetilde{O}\left(\frac{K}{\left(1-\gamma\right)^{4}\varepsilon^{2}}\right).\label{eq:Q-sample}
\end{equation}
As we will see shortly, this result improves the state-of-the-art
sample complexity bound presented in \cite[Theorem 2]{yang2019sample}
. In addition, the dependency on the effective horizon $(1-\gamma)^{-4}$
matches the lower bound established in \cite[Theorem 2]{li2021qlearning}
for vanilla Q-learning using either learning rate scheme covered in
the previous remark, suggesting that our sample complexity bound (\ref{eq:Q-sample})
is sharp. 
\item \textit{Stability }\emph{vis-\`a-vis}\textit{ model misspecification}. Just
like the model-based approach, we can also show that Q-learning is
also stable vis-a\'-vis model misspecification when $P$ approximately admits
the linear transition model. We refer interested readers to Theorem
\ref{thm:syn-q-upper-general} in Appendix \ref{subsec:Proof-of-Theorem-model}
for more details.
\end{itemize}

\paragraph{Comparison with \cite{yang2019sample}.} We compare our result with the sample complexity bounds for Q-learning under the feature-based linear transition model in \cite{yang2019sample}.
\begin{itemize}[leftmargin=*]
\item We first compare our result with \cite[Theorem 2]{yang2019sample},
which is, to the best of our knowledge, the state-of-the-art theory
for this problem. When there is no model misspecification, \cite[Theorem 2]{yang2019sample}
showed that in order for their Phased Parametric Q-learning\footnote{The
	difference between Algorithm \ref{alg:syn-q} and Phased Parametric Q-Learning in
	\cite{yang2019sample} is that Algorithm \ref{alg:syn-q} maintains and updates a $Q$-function
	estimate $Q_{t}$, while Phased Parametric Q-Learning parameterized $Q$-function by 
	\begin{equation*}
	Q_{w}\left(s,a\right):= r\left(s,a\right)+\gamma\phi\left(s,a\right)^{\top}w,
	\end{equation*}
	and then updates the parameters $w$.} (Algorithm
1 therein) to learn an $\varepsilon$-optimal policy, the sample size
needs to be
\[
\widetilde{O}\left(\frac{K}{\left(1-\gamma\right)^{7}\varepsilon^{2}}\right).
\]
Note that (\ref{eq:Q-sample}) is the sample complexity required for
entrywise $\varepsilon$-accurate estimate of the optimal Q-function,
thus a fair comparison requires to use the sample complexity for learning an $\varepsilon$-optimal policy deduced from \eqref{eq:model-q-V}, which is
\[
\widetilde{O}\left(\frac{K}{\left(1-\gamma\right)^{6}\varepsilon^{2}}\right).
\]
Hence, our sample complexity
improves upon previous work by a factor at least on the order of $(1-\gamma)^{-1}$. However it is worth mentioning that \cite[Theorem 2]{yang2019sample} is built upon weaker conditions $\sum_{i=1}^{K}\lambda_{i}(s,a)=1$
and $\sum_{i=1}^{K}\vert\lambda_{i}(s,a)\vert\leq L$ for some $L\geq1$,
which does not require $\lambda_{i}(s,a)\geq0$. Our result holds
under Assumption \ref{assumption:anchor}, which requires $\sum_{i=1}^{K}\lambda_{i}(s,a)=1$
and $\lambda_{i}(s,a)\geq0$. Under the current analysis framework,
it is difficult to obtain tight sample complexity bounds without assuming
$\lambda_{i}(s,a)\geq0$.
\item Besides vanilla Q-learning, \cite{yang2019sample} also proposed a
new variant of Q-learning called Optimal Phased Parametric Q-Learning
(Algorithm 2 therein), which is essentially Q-learning with variance
reduction. \cite[Theorem 3]{yang2019sample} showed that the sample
complexity for this algorithm is
\[
\widetilde{O}\left(\frac{K}{(1-\gamma)^{3}\varepsilon^{2}}\right),
\]
which matches minimax optimal lower bound (up to a logarithm factor)
established in \cite[Theorem 1]{yang2019sample}. Careful reader might
notice that this sample complexity bound is better than ours for vanilla
Q-learning. We emphasize that as elucidated in the second implication
under Theorem \ref{thm:syn-q-upper}, our result is already tight
for vanilla Q-learning. This observation reveals that while the sample
complexity for vanilla Q-learning is provably sub-optimal, the variants
of Q-learning can have better performance and achieve minimax optimal
sample complexity. 
\end{itemize}
We conclude this section by comparing model-based and model-free approaches.
Theorem \ref{thm:model-based} shows that the sample complexity of
the model-based approach is minimax optimal, whilst vanilla Q-learning, perhaps 
the most commonly adopted model-free method, is sub-optimal according
to Theorem \ref{thm:syn-q-upper}. However this does not mean that model-based
method is better than model-free ones since (i) some variants of Q-learning
(see \cite[Algorithm 2]{yang2019sample} for example) also has minimax
optimal sample complexity; and (ii) in many applications it might
be unrealistic to estimate the model in advance. 

\section{A glimpse of our technical approaches} \label{sec:glimpse}

The establishment of Theorems \ref{thm:model-based} and \ref{thm:syn-q-upper}
calls for a series of technical novelties in the proof. In what follows,
we briefly highlight our key technical ideas and novelties.
\begin{itemize}[leftmargin=*]
\item For the model-based approach, we employ ``leave-one-out'' analysis
to decouple the complicated statistical dependency between the empirical
probability transition model $\widehat{P}$ and the corresponding
optimal policy. Specifically, \cite{agarwal2020model} proposed to construct
a collection of auxiliary MDPs where each one of them
leaves out a single state $s$ by setting $s$ to be an absorbing state and keeping everything else untouched. We tailor this high level idea to the needs of linear transition model, then the independence
between the newly constructed MDP with absorbing state $s$ and data
samples collected at state $s$ will facilitate our analysis, as
detailed in Lemma \ref{lemma:loo-model-based}. This ``leave-one-out''
type of analysis has been utilized in studying numerous problems by a long line
of work, such as \cite{el2018impact,ma2018implicit,wainwright2019stochastic,chen2019spectral,chen2020noisy,chen2020bridging,chen2020convex},
just to name a few. 
\item To obtain tighter sample complexity bound than the previous one $\widetilde{O}(\frac{K}{(1-\gamma)^{7}\varepsilon^{2}})$
in \cite{yang2019sample} for vanilla Q-learning, we invoke Freedman's
inequality \cite{freedman1975tail} for the concentration of an error
term with martingale structure as illustrated in Appendix \ref{subsec:Proof-of-Theorem-syn-q},
while the classical ones used in analyzing Q-learning are Hoeffding's inequality and Bernstein's
inequality \cite{yang2019sample}. The use of Freedman's inequality
helps us establish a recursive relation on $\{\Vert Q_{t}-Q^{\star}\Vert_{\infty}\}_{t=0}^{T}$,
which consequently leads to the performance guarantee (\ref{eq:model-free-Q-T}). It is worth
mentioning that \cite{li2021qlearning} also studied vanilla Q-learning in the tabular
MDP setting and adopted Freedman's inequality, while we emphasize
that it requires a lot of efforts and more delicate analyses in order
to study linear transition model and also allow for model misspecification
in the current paper, as detailed in the appendices.
\end{itemize}

\section{Additional related literature}

To remedy the issue of prohibitively high sample complexity, there exists
a substantial body of literature proposing and studying many structural
assumptions and complexity notions under different settings. This
current paper focuses on linear transition model which is studied
in MDP by numerous previous works \cite{yang2019sample,jin2020provably,yang2020reinforcement,zhou2020provably,modi2020sample,hao2020sparse,wang2020reward,touati2020efficient,he2020logarithmic,wei2021learning, cui2020plug}. Among them, \cite{yang2019sample} studied linear transition model and provided tight sample complexity bounds for a new variant of Q-learning with the help of variance reduction. \cite{jin2020provably} focused on linear MDP and designed an algorithm called ``Least-Squares Value Iteration with UCB'' with both polynomial runtime and polynomial sample complexity without accessing generative model. \cite{wang2020reward} extended the study of linear MDP to the framework of reward-free reinforcement learning. \cite{zhou2020provably} considered a different feature mapping called linear kernel MDP and devised an algorithm with polynomial regret bound without generative model. Other popular structure assumptions include: \cite{wen2017efficient}
studied fully deterministic transition dynamics; \cite{jiang2017contextual}
introduced Bellman rank and proposed an algorithm which needs sample
size polynomially dependent on Bellman rank to obtain a near-optimal policy
in contextual decision processes; \cite{du2019provably} assumed that the
value function has low variance compared to the mean for all deterministic
policy;  \cite{melo2007q,parr2008analysis,azizzadenesheli2018efficient,zanette2020learning}
used linear model to approximate the value function; \cite{li2021sample} assumed that the optimal Q-function can be linearly-parameterized by the features.

Apart from the linear transition model, another notion adopted in this work is the generative model, whose role in discounted
MDP has been studied by extensive literature. The concept of generative model was originally
introduced by \cite{kearns1999finite}, and then widely adopted in
numerous works, including \cite{kakade2003sample,azar2013minimax,yang2019sample,wainwright2019variance,agarwal2020model,pananjady2020instance},
to name a few. Specifically, it is assumed that a generative model
of the studied MDP is available and can be queried for every state-action
pair and output the next state. Among previous works, \cite{azar2013minimax}
proved that the minimax lower bound on the sample complexity to obtain
an $\varepsilon$-optimal policy was $\widetilde{\Omega}(\frac{|\mathcal{S}||\mathcal{A}|}{(1-\gamma)^{3}\varepsilon^{2}})$.
\cite{azar2013minimax} also showed that model-based approach can
output an $\varepsilon$-optimal value function with near-optimal
sample complexity for $\varepsilon\in(0,1)$. Then \cite{agarwal2020model}
made significant progress on the challenging problem of establishing
minimax optimal sample complexity in estimating an $\varepsilon$-optimal
policy with the help of ``leave-one-out'' analysis. 

In addition, after being proposed in \cite{watkins1989learning},
Q-learning has become the focus of a rich line of research \cite{watkins1992q,bertsekas1996neuro,kearns1999finite,even2003learning,azar2011speedy,jin2018q,wainwright2019stochastic,chen2019performance,li2020sample,xu2020finite}.
Among them, \cite{chen2019performance,li2020sample,xu2020finite}
studied Q-learning in the presence of Markovian data, i.e.~a single
sample trajectory. In contrast, under the generative setting of Q-learning
where a fresh sample can be drawn from the simulator at each iteration,
\cite{wainwright2019variance} analyzed a variant of Q-learning with the help of variance reduction, which was proved to enjoy minimax optimal sample complexity $\widetilde{O}(\frac{|\mathcal{S}||\mathcal{A}|}{(1-\gamma)^{3}\varepsilon^{2}})$. Then more recently,
\cite{li2021qlearning} improved the lower bound of the vanilla Q-learning algorithm in terms of its scaling with $\frac{1}{1-\gamma}$ and proved a matching upper bound $\widetilde{O}(\frac{|\mathcal{S}||\mathcal{A}|}{(1-\gamma)^{4}\varepsilon^{2}})$. 

\section{Discussion\label{sec:Discussion}}

This paper studies sample complexity of both model-based and model-free
RL under a discounted infinite-horizon MDP with feature-based linear
transition model. We establish tight sample complexity bounds for
both model-based approaches and Q-learning, which scale linearly with
the feature dimension $K$ instead of $|\mathcal{S}|\times|\mathcal{A}|$,
thus considerably reduce the required sample size for large-scale
MDPs when $K$ is relatively small. Our results are sharp, and the
sample complexity bound for the model-based approach matches the minimax
lower bound. The current work suggests a couple of directions for
future investigation, as discussed in detail below.
\begin{itemize}[leftmargin=*]
\item \emph{Extension to episodic MDPs.} An interesting direction for future
research is to study linear transition model in episodic MDP. This
focus of this work is infinite-horizon discounted MDPs, and hopefully
the analysis here can be extended to study the episodic MDP as well
(\cite{dann2015sample,dann2017unifying,azar2017minimax,jiang2018open,wang2020long,he2020logarithmic}). 
\item \emph{Continuous state and action space. }The state and action spaces
in this current paper are still assumed to be finite, since the proof
relies heavily on the matrix operations. However, we expect that the
results can be extended to accommodate continuous state and action
space by employing more complicated analysis.

\item \emph{Accommodating entire range of $\varepsilon$. }Since both value
functions and Q-functions can take value in $[0,(1-\gamma)^{-1}]$,
ideally our theory should cover all choices of $\varepsilon\in(0,(1-\gamma)^{-1}]$.
However we require that $\varepsilon\in(0,(1-\gamma)^{-1/2}]$ in
Theorem \ref{thm:model-based} and $\varepsilon\in(0,1]$ in Theorem
\ref{thm:syn-q-upper}. While most of the prior works like \cite{agarwal2020model,yang2019sample}
also impose these restrictions, a recent work \cite{li2020breaking}
proposed a perturbed model-based planning algorithm and proved minimax optimal
guarantees for any $\varepsilon\in(0,(1-\gamma)^{-1}]$. While their
work only focused on model-based RL under tabular MDP, an interesting
future direction is to improve our theory to accommodate any $\varepsilon\in(0,(1-\gamma)^{-1}]$.
\item \emph{General function approximation. }Another future direction is
to extend the study to more general function approximation starting
from linear structure covered in this paper. There exists a rich body
of work proposing and studying different structures, such as linear
value function approximation \cite{melo2007q,parr2008analysis,azizzadenesheli2018efficient,zanette2020learning},
linear MDPs with infinite dimensional features \cite{agarwal2020pc},
Eluder dimension \cite{wang2020reinforcement}, Bellman rank \cite{jiang2017contextual}
and Witness rank \cite{sun2019model}, etc. Therefore, it is hopeful
to investigate these settings and improve the sample efficiency.
\end{itemize}

\section*{Acknowledgements}

B.~Wang is supported in part by Gordon Y.~S.~Wu Fellowships in Engineering. Y.~Yan is supported in part by ARO grant W911NF-20-1-0097 and NSF grant CCF-1907661. Part of this work was done while Y.~Yan was visiting the Simons Institute for the Theory of Computing. J.~Fan is supported in part by the ONR grant N00014-19-1-2120 and the NSF grants DMS-1662139, DMS-1712591, DMS-2052926, DMS-2053832, and the NIH grant 2R01-GM072611-15. 

\appendix

\section{Notations\label{sec:Notations}}

In this section we gather the notations that will be used throughout the appendix.

For any vectors $\bm{u}=[u_{i}]_{i=1}^n\in\mathbb{R}^{n}$ and
$\bm{v}=[u_{i}]_{i=1}^n\in\mathbb{R}^{n}$, let $\bm{u}\,\circ\,\bm{v}=[u_{i}v_{i}]_{i=1}^{n}$
denote the Hadamard product of $\bm{u}$ and $\bm{v}$. We slightly abuse notations to use $\sqrt{\cdot}$ and $|\cdot|$
to define entry-wise operation, i.e.~for any vector $\bm{v}=[v_i]_{i=1}^n$ denote $\sqrt{\bm{v}}\coloneqq[\sqrt{v_{i}}]_{i=1}^{n}$
and $|\bm{v}|\coloneqq[|v_{i}|]_{i=1}^{n}$. Furthermore, the binary
notations $\leq$ and $\geq$ are both defined in entry-wise manner,
i.e.~$\bm{u}\leq\bm{v}$ (resp.~$\bm{u}\geq\bm{v}$) means $u_{i}\leq v_{i}$
(resp.~$u_{i}\geq v_{i}$) for all $1\leq i\leq n$. For a collection
of vectors $\bm{v}_{1}$, $\cdots$, $\bm{v}_{m}\in\mathbb{R}^{n}$
with $\bm{v}_{i}=[v_{i,j}]_{j=1}^{n}\in\mathbb{R}^{n}$, we define
the max operator to be $\max_{1\leq i\leq m}\bm{v}_{i}\coloneqq[\max_{1\leq i\leq m}v_{i,j}]_{j=1}^{n}$. 

For any matrix $\bm{M}\in\mathbb{R}^{m\times n}$, $\Vert\bm{M}\Vert_{1}$
is defined as the largest row-wise $\ell_{1}$ norm of $\bm{M}$,
i.e.~$\Vert\bm{M}\Vert_{1}\coloneqq\max_{i}\sum_{j}|M_{i,j}|$. In
addition, we define $\bm{1}$ to be a vector with all the entries
being 1, and $\bm{I}$ be the identity matrix. To express the probability
transition function $P$ in matrix form, we define the matrix $\bm{P}\in\mathbb{R}^{\left|\mathcal{S}\right|\left|\mathcal{A}\right|\times\left|\mathcal{S}\right|}$
to be a matrix whose $(s,a)$-th row $\bm{P}_{s,a}$ corresponds to
$P(\cdot|s,a)$. In addition, we define $\bm{P}^{\pi}$ to be the
probability transition matrix induced by policy $\pi$, i.e.~$\bm{P}_{(s,a),(s',a')}^{\pi}=\bm{P}_{s,a}(s')\ind_{\pi(s')=a'}$
for all state-action pairs $(s,a)$ and $(s',a')$. We define $\pi_{t}$
to be the policy induced by $Q_{t}$, i.e.~$Q_{t}(s,\pi_{t}(s))=\max_{a}Q_{t}(s,a)$
for all $s\in\mathcal{S}$. Furthermore, we denote the reward function
$r$ by vector $\bm{r}\in\mathbb{R}^{\left|\mathcal{S}\right|\left|\mathcal{A}\right|}$,
i.e.~the $(s,a)$-th element of $\bm{r}$ equals $r(s,a)$. In the
same manner, we define $\bm{V}^{\pi}\in\mathbb{R}^{\left|\mathcal{S}\right|}$,
$\bm{V}^{\star}\in\mathbb{R}^{\left|\mathcal{S}\right|}$, $\bm{V}_{t}\in\mathbb{R}^{\left|\mathcal{S}\right|}$,
$\bm{Q}^{\pi}\in\mathbb{R}^{\left|\mathcal{S}\right|\left|\mathcal{A}\right|}$,
$\bm{Q}^{\star}\in\mathbb{R}^{\left|\mathcal{S}\right|\left|\mathcal{A}\right|}$
and $\bm{Q}_{t}\in\mathbb{R}^{\left|\mathcal{S}\right|\left|\mathcal{A}\right|}$
to represent $V^{\pi}$, $V^{\star}$, $V_{t}$, $Q^{\pi}$, $Q^{\star}$
and $Q_{t}$ respectively. By using these notations, we can rewrite
the Bellman equation as 
\begin{equation}
\bm{Q}^{\pi}=\bm{r}+\gamma\bm{P}\bm{V}^{\pi}=\bm{r}+\gamma\bm{P}^{\pi}\bm{Q}^{\pi}.\label{eq:equation-q}
\end{equation}
Further, for any vector $\bm{V}\in\mathbb{R}^{|\mathcal{S}|}$, let $\mathsf{Var}_{\bm{P}}(\bm{V})\in\mathbb{R}^{\left|\mathcal{S}\right|\left|\mathcal{A}\right|}$ be 
\begin{equation}
\mathsf{Var}_{\bm{P}}\left(\bm{V}\right)\coloneqq\bm{P}\left(\bm{V}\circ\bm{V}\right)-\left(\bm{P}\bm{V}\right)\circ\left(\bm{P}\bm{V}\right),\label{eq:defn-varpv}
\end{equation}
and define $\mathsf{Var}_{\bm{P}_{s,a}}(\bm{V})\in\mathbb{R}$ to
be 
\begin{equation}
\mathsf{Var}_{\bm{P}_{s,a}}\left(\bm{V}\right)\coloneqq\bm{P}_{s,a}\left(\bm{V}\circ\bm{V}\right)-\left(\bm{P}_{s,a}\bm{V}\right)^{2},\label{eq:defn-varpsav}
\end{equation}
where $\bm{P}_{s,a}$ is the $(s,a)$-th row of $\bm{P}$.

Next, we reconsider Assumption \ref{assumption:anchor}. For any state-action pair $(s,a)$, we define vector $\bm{\lambda}(s,a)\in\mathbb{R}^{K}$ (resp.~$\bm{\phi}(s,a)\in\mathbb{R}^{K}$)
with $\bm{\lambda}(s,a)=[\lambda_{i}(s,a)]_{i=1}^{K}$ (resp.~$\bm{\phi}(s,a)=[\phi_{i}(s,a)]_{i=1}^{K}$) and matrix
$\bm{\Lambda}\in\mathbb{R}^{\left|\mathcal{S}\right|\left|\mathcal{A}\right|\times K}$ (resp.~$\bm{\Phi}\in\mathbb{R}^{\left|\mathcal{S}\right|\left|\mathcal{A}\right|\times K}$)
whose $(s,a)$-th row corresponds to $\bm{\lambda}(s,a)^{\top}$ (resp.~$\bm{\phi}(s,a)^{\top}$). Define vector $\bm{\psi}(s,a)\in\mathbb{R}^{K}$
with $\bm{\psi}(s,a)=[\psi_{i}(s,a)]_{i=1}^{K}$ and matrix
$\bm{\Psi}\in\mathbb{R}^{K\times \left|\mathcal{S}\right|}$
whose $(s,a)$-th column corresponds to $\bm{\psi}(s,a)^{\top}$. Further, let $\bm{P}_\mathcal{K}\in\mathbb{R}^{K\times|\mathcal{S}|}$ (resp.~$\bm{\Phi}_\mathcal{K}\in\mathbb{R}^{K\times K}$) to be a submatrix of $\bm{P}$ (resp.~$\bm{\Phi}$) formed by concatenating the rows $\{\bm{P}_{s,a}, (s,a)\in\mathcal{K}\}$ (resp.~$\{\bm{\Phi}_{s,a}, (s,a)\in\mathcal{K}\}$). By using
the previous notations, we can express the relations in Definition \ref{def:linear-model} and Assumption \ref{assumption:anchor} as $\bm{P}_{\mathcal{K}}=\bm{\Phi}_\mathcal{K}\bm{\Psi}$, $\bm{P}=\bm{\Phi}\bm{\Psi}$ and $\bm{\Phi}=\bm{\Lambda}\bm{\Phi}_\mathcal{K}$. Note that Assumption \ref{assumption:anchor} suggests $\bm{\Phi}_\mathcal{K}$ is invertible. Taking these equations collectively yields
\begin{equation}
\bm{P}=\bm{\Phi}\bm{\Psi}=\bm{\Phi}\bm{\Phi}_{\mathcal{K}}^{-1}\bm{P}_{\mathcal{K}}=\bm{\Lambda}\bm{\Phi}_{\mathcal{K}}\bm{\Phi}_{\mathcal{K}}^{-1}\bm{P}_{\mathcal{K}}=\bm{\Lambda}\bm{P}_{\mathcal{K}},\label{eq:claim-proof}
\end{equation}
which is reminiscent of the anchor word condition in topic modelling
\cite{arora2012learning}. In addition, for each iteration $t$, we denote the collected samples as $\{s_t(s,a)\}_{(s,a)\in\mathcal{K}}$ and
define a matrix $\widehat{\bm{P}}_{\mathcal{K}}^{(t)}\in\{0,1\}^{K\times\left|\mathcal{S}\right|}$
to be
\begin{equation}
\widehat{\bm{P}}_{\mathcal{K}}^{(t)}\left(\left(s,a\right),s'\right)\coloneqq\begin{cases}
1, & \text{if }s'=s_{t}\left(s,a\right)\\
0, & \text{otherwise}
\end{cases}\label{eq:defn-PKt}
\end{equation}
for any $(s,a)\in\mathcal{K}$ and $s'\in\mathcal{S}$. Further, we
define $\widehat{\bm{P}}_{t}=\bm{\Lambda}\widehat{\bm{P}}_{\mathcal{K}}^{(t)}$.
Then it is obvious to see that $\widehat{\bm{P}}_{t}$ has nonnegative
entries and unit $\ell_{1}$ norm for each row due to Assumption \ref{assumption:anchor},
i.e.~$\Vert\widehat{\bm{P}}_{t}\Vert_{1}=1$.

\section{Analysis of model-based RL (Proof of Theorem \ref{thm:model-based}) \label{subsec:Proof-of-Theorem-model}}

In this section, we will provide complete proof for Theorem \ref{thm:model-based}.
As a matter of fact, our proof strategy
here justifies a more general version of Theorem \ref{thm:model-based}
that accounts for model misspecification, as stated below. 

\begin{theorem}\label{thm:model-based-general}Suppose that $\delta>0$
and $\varepsilon\in(0,(1-\gamma)^{-1/2}]$. Assume that there exists
a probability transition model $\widetilde{\bm{P}}$ obeying Definition
\ref{eq:defn-linear-model} and Assumption \ref{assumption:anchor}
with feature vectors $\{\phi(s,a)\}_{(s,a)\in\mathcal{S}\times\mathcal{A}}\subset\mathbb{R}^K$ and anchor state-action pairs $\mathcal{K}$
such that $$\Vert\widetilde{\bm{P}}-\bm{P}\Vert_{1}\leq\xi$$ for some $\xi\geq0$. Let $\widehat{\pi}$ be
the policy returned by Algorithm \ref{alg:model-based-rl}. Assume
that
\begin{equation}
N\geq\frac{C\log\left(K/\left(\left(1-\gamma\right)\delta\right)\right)}{\left(1-\gamma\right)^{3}\varepsilon^{2}}\label{eq:model-sample-thm-general}
\end{equation}
for some sufficiently large constant $C>0$. Then with probability
exceeding $1-\delta$,
\begin{equation}
Q^{\star}\left(s,a\right)-Q^{\hat{\pi}}\left(s,a\right)\leq\varepsilon+\frac{4\varepsilon_{\mathsf{opt}}}{1-\gamma}+\frac{22\xi}{(1-\gamma)^2}, \label{eq:model-error-general}
\end{equation}
for every state-action pair $(s,a)\in\mathcal{S}\times\mathcal{A}$. 
\end{theorem}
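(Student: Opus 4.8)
The plan is to analyze the model-based algorithm by comparing the optimal policy of the empirical MDP $\widehat{M}$ against the true MDP $M$, using a leave-one-out argument to break the statistical dependency between $\widehat{\bm{P}}$ and the empirical optimal policy $\widehat{\pi}$. First I would set up the key decomposition: writing $\bm{Q}^\star - \bm{Q}^{\widehat\pi}$ in terms of the difference between $\bm{Q}^\star$ and $\widehat{\bm{Q}}^\star$ (the optimal $Q$-function of $\widehat M$), the algorithmic error $\varepsilon_{\mathsf{opt}}$, and the misspecification term. Since only the anchor rows are sampled, the entire randomness lives in the $K\times|\mathcal S|$ matrix $\widehat{\bm P}_{\mathcal K}$, and via $\widehat{\bm P} = \bm\Lambda \widehat{\bm P}_{\mathcal K}$ the error $(\bm P - \widehat{\bm P})\bm V = \bm\Lambda(\bm P_{\mathcal K} - \widehat{\bm P}_{\mathcal K})\bm V$ for any fixed $\bm V$; since $\|\bm\Lambda\|_1 = 1$, it suffices to control the $K$-dimensional vector $(\bm P_{\mathcal K} - \widehat{\bm P}_{\mathcal K})\bm V$. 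In the misspecified case I would further split $\bm P - \widehat{\bm P} = (\bm P - \widetilde{\bm P}) + (\widetilde{\bm P} - \widehat{\bm P})$, bounding the first piece by $\xi$ via $\|\bm P - \widetilde{\bm P}\|_1 \le \xi$ (so it contributes $O(\xi/(1-\gamma))$ per application of a resolvent $(\bm I - \gamma \bm P^\pi)^{-1}$, and $O(\xi/(1-\gamma)^2)$ after the value-difference telescoping that brings in a second horizon factor), and applying the generative-model concentration analysis to the second piece, which is now an unbiased estimate of $\widetilde{\bm P}_{\mathcal K}$.

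The core statistical step is a Bernstein-type bound. For a fixed $\bm V$ with $\|\bm V\|_\infty \le (1-\gamma)^{-1}$, each entry of $(\widetilde{\bm P}_{\mathcal K} - \widehat{\bm P}_{\mathcal K})\bm V$ is an average of $N$ i.i.d. bounded terms with variance $\mathsf{Var}_{\widetilde{\bm P}_{s_i,a_i}}(\bm V)$, so with probability $\ge 1-\delta$, simultaneously over the $K$ anchors, $|(\widehat{\bm P}_{\mathcal K} - \widetilde{\bm P}_{\mathcal K})\bm V| \lesssim \sqrt{\tfrac{\log(K/\delta)}{N}\,\mathsf{Var}_{\widetilde{\bm P}_{\mathcal K}}(\bm V)} + \tfrac{\log(K/\delta)}{N(1-\gamma)}$ entrywise. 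The difficulty is that the relevant $\bm V$ is $\widehat{\bm V}^\star$ (or $\bm V^{\widehat\pi}$), which depends on $\widehat{\bm P}$. This is where leave-one-out enters, via Lemma~\ref{lemma:loo-model-based}: for each state $s$, build an auxiliary empirical MDP $\widehat M^{(s)}$ that makes $s$ absorbing; its optimal value $\widehat{\bm V}^{\star,(s)}$ is independent of the samples drawn for anchors whose transitions affect row $s$, and $\|\widehat{\bm V}^{\star,(s)} - \widehat{\bm V}^\star\|_\infty$ is small (of order the per-step error). I would then apply the concentration bound with $\bm V = \widehat{\bm V}^{\star,(s)}$ and absorb the perturbation, and separately compare $\mathsf{Var}_{\widetilde{\bm P}}(\widehat{\bm V}^{\star,(s)})$ to $\mathsf{Var}_{\bm P}(\bm V^\star)$ or $\mathsf{Var}_{\bm P}(\bm V^{\widehat\pi})$ up to lower-order terms.

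After that, I would close the recursion using the standard "total variance" identity: the telescoped value error along the chain $\bm V^\star - \bm V^{\widehat\pi}$ involves $(\bm I - \gamma \bm P^{\widehat\pi})^{-1}$ applied to $\gamma(\bm P - \widehat{\bm P})\bm V^{\widehat\pi}$-type terms, and one exploits $\|(\bm I - \gamma\bm P^\pi)^{-1}\sqrt{\mathsf{Var}_{\bm P^\pi}(\bm V^\pi)}\|_\infty \lesssim (1-\gamma)^{-3/2}$ (a consequence of $\sum_k \gamma^{2k} \le (1-\gamma^2)^{-1}$ together with the Bellman-consistency of $\mathsf{Var}$). Combining: the statistical error becomes $\lesssim \sqrt{\tfrac{\log(K/((1-\gamma)\delta))}{N(1-\gamma)^3}} + \tfrac{\log(\cdots)}{N(1-\gamma)^2}$, and under the sample-size assumption \eqref{eq:model-sample-thm-general} with the restriction $\varepsilon \le (1-\gamma)^{-1/2}$ (which ensures the first term dominates the second and that the higher-order leave-one-out perturbations are controlled), this is at most $\varepsilon$; the misspecification contributes $O(\xi/(1-\gamma)^2)$, which I would track to pin down the constant $22$; and the $\varepsilon_{\mathsf{opt}}$-suboptimality of the planning oracle feeds through $(\bm I - \gamma\bm P^{\widehat\pi})^{-1}$ to give $4\varepsilon_{\mathsf{opt}}/(1-\gamma)$.

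The main obstacle I anticipate is the circular dependency between $\widehat{\bm P}$ and $\widehat{\bm V}^\star$ handled by the leave-one-out construction, and in particular making the variance comparison tight enough to retain the sharp $(1-\gamma)^{-3}$ horizon dependence rather than a lossier $(1-\gamma)^{-4}$: one must verify that the absorbing-state auxiliary MDPs genuinely decouple the right samples, that $\|\widehat{\bm V}^\star - \widehat{\bm V}^{\star,(s)}\|_\infty$ and $\|\widehat{\bm V}^\star - \bm V^{\widehat\pi}\|_\infty$ are both controlled (which requires care because $\widehat\pi$ is only $\varepsilon_{\mathsf{opt}}$-optimal for $\widehat M$), and that all the lower-order error terms introduced by these substitutions are dominated under \eqref{eq:model-sample-thm-general}. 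Threading the misspecification parameter $\xi$ through each of these steps without blowing up the horizon dependence beyond $(1-\gamma)^{-2}$ is the second delicate point.
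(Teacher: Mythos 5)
Your proposal follows essentially the same route as the paper: the same three-term decomposition of $\bm{Q}^{\star}-\bm{Q}^{\widehat{\pi}}$, the same reduction to the anchor rows via $\Vert\bm{\Lambda}\Vert_{1}=1$, the same leave-one-out absorbing-state construction to decouple $\widehat{\bm{P}}$ from $\widehat{\bm{V}}^{\star}$, Bernstein plus the total-variance bound $\Vert(\bm{I}-\gamma\bm{P}^{\pi})^{-1}\sqrt{\mathsf{Var}_{\bm{P}}(\bm{V}^{\pi})}\Vert_{\infty}\lesssim(1-\gamma)^{-3/2}$, and the same threading of $\xi$ and $\varepsilon_{\mathsf{opt}}$. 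The only detail you gloss over is that the paper's auxiliary MDPs are indexed by both the state $s$ and an absorbing reward level $u$ ranging over a finite net $\mathcal{U}_{s}$, with a separate coarse concentration lemma used to localize $\widehat{\bm{V}}^{\star}(s)$ so that the union bound over $u$ costs only a logarithmic factor.
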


Theorem \ref{thm:model-based-general} subsumes Theorem
\ref{thm:model-based} as a special case with $\xi=0$. The remainder of this section is devoted to proving Theorem \ref{thm:model-based-general}.

\subsection{Proof of Theorem \ref{thm:model-based-general}}

The error $\bm{Q}^{\widehat{\pi}}-\bm{Q}^{\star}$ can be decomposed
as
\begin{align}
\bm{Q}^{\widehat{\pi}}-\bm{Q}^{\star} & =\bm{Q}^{\widehat{\pi}}-\widehat{\bm{Q}}^{\widehat{\pi}}+\widehat{\bm{Q}}^{\widehat{\pi}}-\widehat{\bm{Q}}^{\star}+\widehat{\bm{Q}}^{\star}-\bm{Q}^{\star}\nonumber \\
 & \geq\bm{Q}^{\widehat{\pi}}-\widehat{\bm{Q}}^{\widehat{\pi}}+\widehat{\bm{Q}}^{\widehat{\pi}}-\widehat{\bm{Q}}^{\star}+\widehat{\bm{Q}}^{\pi^{\star}}-\bm{Q}^{\star}\nonumber \\
 & \geq-\left(\left\Vert \bm{Q}^{\widehat{\pi}}-\widehat{\bm{Q}}^{\widehat{\pi}}\right\Vert _{\infty}+\left\Vert \widehat{\bm{Q}}^{\widehat{\pi}}-\widehat{\bm{Q}}^{\star}\right\Vert _{\infty}+\left\Vert \widehat{\bm{Q}}^{\pi^{\star}}-\bm{Q}^{\star}\right\Vert _{\infty}\right)\bm{1}.\label{eq:model-q-0}
\end{align}
For policy $\widehat{\pi}$ satisfying the condition in Theorem \ref{thm:model-based},
we have $\Vert\widehat{\bm{Q}}^{\widehat{\pi}}-\widehat{\bm{Q}}^{\star}\Vert_{\infty}\leq\varepsilon_{\text{opt}}$.
It boils down to control $\Vert\bm{Q}^{\widehat{\pi}}-\widehat{\bm{Q}}^{\widehat{\pi}}\Vert_{\infty}$
and $\Vert\widehat{\bm{Q}}^{\pi^{\star}}-\bm{Q}^{\star}\Vert_{\infty}$.

To begin with, we can use \eqref{eq:equation-q} to
further decompose $\Vert\bm{Q}^{\widehat{\pi}}-\widehat{\bm{Q}}^{\widehat{\pi}}\Vert_{\infty}$
as
\begin{align}
\left\Vert \bm{Q}^{\widehat{\pi}}-\widehat{\bm{Q}}^{\widehat{\pi}}\right\Vert _{\infty} & =\left\Vert \left(\bm{I}-\gamma\bm{P}^{\widehat{\pi}}\right)^{-1}\bm{r}-\left(\bm{I}-\gamma\widehat{\bm{P}}^{\widehat{\pi}}\right)^{-1}\bm{r}\right\Vert _{\infty}\nonumber\\
 & =\left\Vert \left(\bm{I}-\gamma\bm{P}^{\widehat{\pi}}\right)^{-1}\left[\left(\bm{I}-\gamma\widehat{\bm{P}}^{\widehat{\pi}}\right)-\left(\bm{I}-\gamma\bm{P}^{\widehat{\pi}}\right)\right]\widehat{\bm{Q}}^{\widehat{\pi}}\right\Vert _{\infty}\nonumber\\
 & =\left\Vert \gamma\left(\bm{I}-\gamma\bm{P}^{\widehat{\pi}}\right)^{-1}\left(\bm{P}-\widehat{\bm{P}}\right)\widehat{\bm{V}}^{\widehat{\pi}}\right\Vert _{\infty}\nonumber \\
 & \leq\left\Vert \gamma\left(\bm{I}-\gamma\bm{P}^{\widehat{\pi}}\right)^{-1}\left(\bm{P}-\widehat{\bm{P}}\right)\widehat{\bm{V}}^{\star}\right\Vert _{\infty}+\left\Vert \gamma\left(\bm{I}-\gamma\bm{P}^{\widehat{\pi}}\right)^{-1}\left(\bm{P}-\widehat{\bm{P}}\right)\left(\widehat{\bm{V}}^{\widehat{\pi}}-\widehat{\bm{V}}^{\star}\right)\right\Vert _{\infty}\nonumber \\
 & \leq\left\Vert \gamma\left(\bm{I}-\gamma\bm{P}^{\widehat{\pi}}\right)^{-1}\left|\left(\bm{P}-\widehat{\bm{P}}\right)\widehat{\bm{V}}^{\star}\right|\right\Vert _{\infty}+\frac{2\gamma\varepsilon_{\text{opt}}}{1-\gamma}.\label{eq:model-qhat}
\end{align}
Here the last inequality is due to 
\begin{align*}
 & \left\Vert \gamma\left(\bm{I}-\gamma\bm{P}^{\widehat{\pi}}\right)^{-1}\left(\bm{P}-\widehat{\bm{P}}\right)\left(\widehat{\bm{V}}^{\widehat{\pi}}-\widehat{\bm{V}}^{\star}\right)\right\Vert _{\infty}\\
 & \quad\leq\gamma\left\Vert \left(\bm{I}-\gamma\bm{P}^{\widehat{\pi}}\right)^{-1}\right\Vert _{1}\left\Vert \left(\bm{P}-\widehat{\bm{P}}\right)\left(\widehat{\bm{V}}^{\widehat{\pi}}-\widehat{\bm{V}}^{\star}\right)\right\Vert _{\infty}\\
 & \quad\leq\gamma\left\Vert \left(\bm{I}-\gamma\bm{P}^{\widehat{\pi}}\right)^{-1}\right\Vert _{1}\left(\left\Vert \bm{P}\right\Vert _{1}+\left\Vert \widehat{\bm{P}}\right\Vert _{1}\right)\left\Vert \widehat{\bm{V}}^{\widehat{\pi}}-\widehat{\bm{V}}^{\star}\right\Vert _{\infty}\\
 & \quad\leq\frac{2\gamma\varepsilon_{\text{opt}}}{1-\gamma},
\end{align*}
where we use the fact that $\Vert(\bm{I}-\gamma\bm{P}^{\widehat{\pi}})^{-1}\Vert_{1}\leq1/(1-\gamma)$
and $\Vert\bm{P}\Vert_{1}=\Vert\widehat{\bm{P}}\Vert_{1}=1$.

Similarly, for the term $\Vert\widehat{\bm{Q}}^{\pi^{\star}}-\bm{Q}^{\star}\Vert_{\infty}$
in \eqref{eq:model-q-0}, we have 
\begin{equation}
\left\Vert \widehat{\bm{Q}}^{\pi^{\star}}-\bm{Q}^{\star}\right\Vert _{\infty}=\left\Vert \gamma\left(\bm{I}-\gamma\bm{P}^{\pi^{\star}}\right)^{-1}\left(\bm{P}-\widehat{\bm{P}}\right)\widehat{\bm{V}}^{\pi^{\star}}\right\Vert _{\infty}\leq\left\Vert \gamma\left(\bm{I}-\gamma\bm{P}^{\pi^{\star}}\right)^{-1}\left|\left(\bm{P}-\widehat{\bm{P}}\right)\widehat{\bm{V}}^{\pi^{\star}}\right|\right\Vert _{\infty}.\label{eq:model-qstar}
\end{equation}
As can be seen from \eqref{eq:model-qhat} and \eqref{eq:model-qstar},
it boils down to bound $|(\bm{P}-\widehat{\bm{P}})\widehat{\bm{V}}^{\star}|$
and $|(\bm{P}-\widehat{\bm{P}})\widehat{\bm{V}}^{\pi^{\star}}|$.
We have the following lemma.

\begin{lemma}\label{lemma:loo-model-based}With probability exceeding
$1-\delta$, one has 
\begin{align}
\left|\left(\bm{P}-\widehat{\bm{P}}\right)_{s,a}\widehat{\bm{V}}^{\star}\right| & \leq\frac{10\xi}{1-\gamma}+4\sqrt{\frac{2\log\left(4K/\delta\right)}{N}}+\frac{4\log\left(8K/\left(\left(1-\gamma\right)\delta\right)\right)}{\left(1-\gamma\right)N}\nonumber\\
&+\sqrt{\frac{4\log\left(8K/\left(\left(1-\gamma\right)\delta\right)\right)}{N}}\sqrt{\mathsf{Var}_{\bm{P}_{s,a}}\left(\widehat{\bm{V}}^{\star}\right)},\label{eq:loo-1}\\
\left|\left(\bm{P}-\widehat{\bm{P}}\right)_{s,a}\widehat{\bm{V}}^{\pi^{\star}}\right| & \leq\frac{10\xi}{1-\gamma}+4\sqrt{\frac{2\log\left(4K/\delta\right)}{N}}+\frac{4\log\left(8K/\left(\left(1-\gamma\right)\delta\right)\right)}{\left(1-\gamma\right)N}\nonumber\\
&+\sqrt{\frac{4\log\left(8K/\left(\left(1-\gamma\right)\delta\right)\right)}{N}}\sqrt{\mathsf{Var}_{\bm{P}_{s,a}}\left(\widehat{\bm{V}}^{\pi^{\star}}\right)}.\label{eq:loo-2}
\end{align}
\end{lemma}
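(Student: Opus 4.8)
The plan is to prove the two bounds simultaneously, since they have identical structure — the only difference is that one uses $\widehat{\bm V}^\star$ and the other uses $\widehat{\bm V}^{\pi^\star}$, and both are deterministic functions of the empirical MDP $\widehat M$ (and hence statistically entangled with the samples). The starting point is the decomposition $\widehat{\bm P} = \bm\Lambda\widehat{\bm P}_{\mathcal K}$ and $\widetilde{\bm P} = \bm\Lambda\widetilde{\bm P}_{\mathcal K}$, so that for a fixed vector $\bm V$,
\[
\bigl(\bm P - \widehat{\bm P}\bigr)_{s,a}\bm V = \bigl(\bm P - \widetilde{\bm P}\bigr)_{s,a}\bm V + \bm\lambda(s,a)^\top\bigl(\widetilde{\bm P}_{\mathcal K} - \widehat{\bm P}_{\mathcal K}\bigr)\bm V .
\]
The first term is a misspecification term: since $\|\bm V\|_\infty \le 1/(1-\gamma)$ for any value function of $\widehat M$ and $\|\widetilde{\bm P}-\bm P\|_1 \le \xi$, it is at most $\xi/(1-\gamma)$ in absolute value; the remaining room in the $10\xi/(1-\gamma)$ slack will be used later to absorb a second misspecification contribution. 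The second term is a sum over the $K$ anchor pairs of $\lambda_i(s,a)$ times $\frac1N\sum_{j=1}^N (\ind\{s_i^{(j)}=\cdot\} - \widetilde P(\cdot|s_i,a_i))$ applied to $\bm V$; since $\sum_i \lambda_i(s,a) = 1$ and $\lambda_i \ge 0$, controlling it uniformly over $(s,a)$ reduces (by a convexity/triangle argument) to controlling, for each anchor $i\in[K]$, the scalar quantity $\frac1N\sum_{j=1}^N\bigl(\bm V(s_i^{(j)}) - \EE_{s'\sim\widetilde P(\cdot|s_i,a_i)}\bm V(s')\bigr)$.

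If $\bm V$ were independent of the samples $\{s_i^{(j)}\}_j$, this would be a textbook Bernstein bound: the summands are bounded by $\|\bm V\|_\infty \le 1/(1-\gamma)$ and have variance $\mathsf{Var}_{\widetilde{\bm P}_{s_i,a_i}}(\bm V)$, giving exactly the Bernstein-type right-hand side of \eqref{eq:loo-1}–\eqref{eq:loo-2} but with $\widetilde{\bm P}$ in place of $\bm P$ and with $\bm V$ in place of $\widehat{\bm V}^\star$ (resp.\ $\widehat{\bm V}^{\pi^\star}$). The obstacle — and this is the main point of the lemma — is precisely the statistical dependence: $\widehat{\bm V}^\star$ and $\widehat{\bm V}^{\pi^\star}$ depend on all the samples, including those drawn at anchor $(s_i,a_i)$. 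This is where the leave-one-out device enters. For each anchor index $i$ I would build an auxiliary empirical MDP $\widehat M^{(i)}$ that is identical to $\widehat M$ except that the samples drawn at $(s_i,a_i)$ are discarded (replaced, say, by the population transition $\widetilde P(\cdot|s_i,a_i)$, or by an independent fresh copy); let $\widehat{\bm V}^{\star,(i)}$ and $\widehat{\bm V}^{\pi^\star,(i)}$ be the corresponding value functions. By construction $\widehat{\bm V}^{\star,(i)}$ is independent of $\{s_i^{(j)}\}_j$, so Bernstein applies to it cleanly, and then one pays the price $\|\widehat{\bm V}^{\star} - \widehat{\bm V}^{\star,(i)}\|_\infty$ for swapping back. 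That perturbation is small: changing one anchor row of the empirical kernel changes the kernel by $O(1)$ in the $(s,a)$-row but only in a measure-$\lambda_i$ fraction, and a standard resolvent/contraction argument bounds the change in the optimal value function of the empirical MDP by something like $\frac{1}{(1-\gamma)^2}\cdot(\text{one-row deviation})$; combined with the sample size lower bound \eqref{eq:model-sample-thm-general}, this is driven below the target accuracy and folds into the stated constants. One also needs a variance-transfer step, $\mathsf{Var}_{\bm P_{s,a}}(\widehat{\bm V}^{\star,(i)}) \lesssim \mathsf{Var}_{\bm P_{s,a}}(\widehat{\bm V}^{\star}) + (\text{small})$ and $\mathsf{Var}_{\widetilde{\bm P}} \le \mathsf{Var}_{\bm P} + O(\xi/(1-\gamma)^2)$, to land the variance proxy on $\bm P_{s,a}$ and on the true $\widehat{\bm V}^\star$ as written; this is where the factor $\sqrt{\mathsf{Var}_{\bm P_{s,a}}(\widehat{\bm V}^{\star})}$ in \eqref{eq:loo-1} comes from.

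Assembling: for each of the (at most $2K$) quantities — the two value functions at each of the $K$ anchors — apply Bernstein to the leave-one-out version with failure probability $\delta/(4K)$ or so, union bound, add back the leave-one-out perturbation and the misspecification terms, and relabel constants to get the $4\sqrt{2\log(4K/\delta)/N}$, $\frac{4\log(8K/((1-\gamma)\delta))}{(1-\gamma)N}$, and $\sqrt{4\log(8K/((1-\gamma)\delta))/N}\sqrt{\mathsf{Var}_{\bm P_{s,a}}(\cdot)}$ pieces, with all the $O(\xi/(1-\gamma))$ contributions collected into the leading $10\xi/(1-\gamma)$. I expect the genuinely delicate step to be the quantitative bound on $\|\widehat{\bm V}^{\star} - \widehat{\bm V}^{\star,(i)}\|_\infty$ (and its $\widehat{\bm V}^{\pi^\star}$ analogue) under the linear-transition parametrization — one must verify that perturbing a single anchor row, after propagation through $\bm\Lambda$, still only perturbs the resolvent $(\bm I-\gamma\widehat{\bm P}^{\pi})^{-1}$ in a controlled, low-rank-like way, rather than uniformly across all $|\mathcal S||\mathcal A|$ rows; the nonnegativity of the $\lambda_i(s,a)$ (i.e.\ that each row of $\bm\Lambda$ is a probability vector) is exactly what makes this propagation benign, and it is the reason Assumption \ref{assumption:anchor} is stated with convex rather than merely bounded coefficients. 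Everything else is bookkeeping.
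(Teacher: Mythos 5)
Your overall scaffolding matches the paper's: split off the misspecification via $\widetilde{\bm{P}}$, reduce to the $K$ anchor rows using $\|\bm{\lambda}(s,a)\|_1=1$ and $\lambda_i\ge 0$, and then decouple the dependence between the empirical rows and $\widehat{\bm{V}}^{\star}$ (resp.\ $\widehat{\bm{V}}^{\pi^\star}$) by a leave-one-out surrogate before applying Bernstein and a union bound. But the specific leave-one-out device you propose does not deliver the bound as stated, and the step you yourself flag as ``genuinely delicate'' is exactly where it breaks. If $\widehat{M}^{(i)}$ is obtained by discarding or resampling the $N$ samples at anchor $i$, the best available perturbation bound is of the form
\[
\bigl\Vert \widehat{\bm{V}}^{\star}-\widehat{\bm{V}}^{\star,(i)}\bigr\Vert_{\infty}\;\lesssim\;\frac{\gamma}{1-\gamma}\,\Bigl|\bigl(\widehat{\bm{P}}_{\mathcal{K}}-\widetilde{\bm{P}}_{\mathcal{K}}\bigr)_{s_i,a_i}\widehat{\bm{V}}^{\star,(i)}\Bigr|\;\lesssim\;\frac{1}{(1-\gamma)^{2}}\sqrt{\frac{\log(K/\delta)}{N}},
\]
since the one-row deviation acting on a value function of size $1/(1-\gamma)$ contributes a factor $1/(1-\gamma)$ and the resolvent another. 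The ``add back the perturbation'' term in your assembly is then $2\Vert\widehat{\bm{V}}^{\star}-\widehat{\bm{V}}^{\star,(i)}\Vert_\infty = O\bigl((1-\gamma)^{-2}\sqrt{\log(K/\delta)/N}\bigr)$, which is a factor $(1-\gamma)^{-2}$ larger than the corresponding term $4\sqrt{2\log(4K/\delta)/N}$ in \eqref{eq:loo-1}. Propagated through the $\frac{\gamma}{1-\gamma}$ prefactor in the proof of Theorem~\ref{thm:model-based-general}, this would force $N\gtrsim (1-\gamma)^{-6}\varepsilon^{-2}$ rather than the minimax-optimal $(1-\gamma)^{-3}\varepsilon^{-2}$, defeating the purpose of the lemma. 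The nonnegativity of the $\lambda_i$ does not rescue this: the loss comes from the resolvent amplification, not from the propagation through $\bm{\Lambda}$.

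The paper's mechanism is different and is the missing idea. Following \cite{agarwal2020model}, the auxiliary object is not ``$\widehat M$ with one anchor's samples removed'' but an absorbing-state MDP $\widehat{M}_{s,u}$ indexed by a \emph{scalar} $u$, with the key identity $\widehat{\bm{V}}^{\star}=\widehat{\bm{V}}^{\star}_{s,\widehat{\bm{V}}^{\star}(s)}$ and the Lipschitz property $\Vert\widehat{\bm{V}}^{\star}_{s,u}-\widehat{\bm{V}}^{\star}_{s,u'}\Vert_\infty\le|u-u'|$. One then takes a union bound over a grid $\mathcal{U}_s$ of $\lceil(1-\gamma)^{-2}\rceil$ values of $u$ in an interval of radius $R(\delta)=O\bigl((1-\gamma)^{-2}\sqrt{\log(K/\delta)/N}+\xi(1-\gamma)^{-2}\bigr)$ around $\bm{V}^{\star}(s)$ --- the interval being located by the coarse bound of Lemma~\ref{lemma:coarse} --- so that the swap-back cost is the \emph{grid spacing} $2R(\delta)/(|\mathcal{U}_s|+1)=O\bigl(\sqrt{\log(K/\delta)/N}+\xi\bigr)$, not the full statistical fluctuation amplified by the resolvent, while the union bound only inflates the logarithm by $\log\frac{1}{1-\gamma}$. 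Your proposal contains no analogue of this discretization, so the $(1-\gamma)^{-2}$ loss is unavoidable along your route. (Two smaller remarks: the paper applies Bernstein with respect to the true sampling distribution $\bm{P}_{s,a}$ directly, so no variance transfer from $\widetilde{\bm{P}}$ to $\bm{P}$ is needed; and your plan still owes a proof that $\widehat{\bm{V}}^{\pi^\star,(i)}$ is genuinely independent of the anchor-$i$ samples, which holds for your construction but is precisely the kind of bookkeeping that the absorbing-state construction is designed to package cleanly.)
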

\begin{proof}
	See Appendix \ref{subsec:Proof-of-Lemma-loo}.
\end{proof}

Applying \eqref{eq:loo-1} to \eqref{eq:model-qhat} reveals that 
\begin{align}
\left\Vert \bm{Q}^{\widehat{\pi}}-\widehat{\bm{Q}}^{\widehat{\pi}}\right\Vert _{\infty} & \leq\sqrt{\frac{4\log\left(8K/\left(\left(1-\gamma\right)\delta\right)\right)}{N}}\left\Vert \gamma\left(\bm{I}-\gamma\bm{P}^{\widehat{\pi}}\right)^{-1}\sqrt{\mathsf{Var}_{\bm{P}_{s,a}}\left(\widehat{\bm{V}}^{\star}\right)}\right\Vert _{\infty}\nonumber \\
 & \qquad+\frac{\gamma}{1-\gamma}\left[4\sqrt{\frac{2\log\left(4K/\delta\right)}{N}}+\frac{4\log\left(8K/\left(\left(1-\gamma\right)\delta\right)\right)}{\left(1-\gamma\right)N}\right]+\frac{10\gamma\xi}{(1-\gamma)^2}+\frac{2\gamma\varepsilon_{\text{opt}}}{1-\gamma}.\label{eq:model-q-hatpi}
\end{align}
For the first term, one has 
\begin{align*}
\sqrt{\mathsf{Var}_{\bm{P}_{s,a}}\left(\widehat{\bm{V}}^{\star}\right)} & \leq\sqrt{\mathsf{Var}_{\bm{P}_{s,a}}\left(\bm{V}^{\widehat{\pi}}\right)}+\sqrt{\mathsf{Var}_{\bm{P}_{s,a}}\left(\bm{V}^{\widehat{\pi}}-\widehat{\bm{V}}^{\widehat{\pi}}\right)}+\sqrt{\mathsf{Var}_{\bm{P}_{s,a}}\left(\widehat{\bm{V}}^{\widehat{\pi}}-\widehat{\bm{V}}^{\star}\right)}\\
 & \leq\sqrt{\mathsf{Var}_{\bm{P}_{s,a}}\left(\bm{V}^{\widehat{\pi}}\right)}+\left\Vert \bm{V}^{\widehat{\pi}}-\widehat{\bm{V}}^{\widehat{\pi}}\right\Vert _{\infty}+\varepsilon_{\text{opt}}\\
 & \leq\sqrt{\mathsf{Var}_{\bm{P}_{s,a}}\left(\bm{V}^{\widehat{\pi}}\right)}+\left\Vert \bm{Q}^{\widehat{\pi}}-\widehat{\bm{Q}}^{\widehat{\pi}}\right\Vert _{\infty}+\varepsilon_{\text{opt}},
\end{align*}
where the first inequality comes from the fact that $\sqrt{\mathsf{Var}(X+Y)}\leq\sqrt{\mathsf{Var}(X)}+\sqrt{\mathsf{Var}(Y)}$ for any random variables $X$ and $Y$.
It follows that 
\begin{align}
\left\Vert \gamma\left(\bm{I}-\gamma\bm{P}^{\widehat{\pi}}\right)^{-1}\sqrt{\mathsf{Var}_{\bm{P}_{s,a}}\left(\widehat{\bm{V}}^{\star}\right)}\right\Vert _{\infty} & \leq\left\Vert \gamma\left(\bm{I}-\gamma\bm{P}^{\widehat{\pi}}\right)^{-1}\sqrt{\mathsf{Var}_{\bm{P}_{s,a}}\left(\bm{V}^{\widehat{\pi}}\right)}\right\Vert _{\infty}+\frac{\gamma}{1-\gamma}\left(\left\Vert \bm{Q}^{\widehat{\pi}}-\widehat{\bm{Q}}^{\widehat{\pi}}\right\Vert _{\infty}+\varepsilon_{\text{opt}}\right)\nonumber \\
 & \leq\gamma\sqrt{\frac{2}{\left(1-\gamma\right)^{3}}}+\frac{\gamma}{1-\gamma}\left(\left\Vert \bm{Q}^{\widehat{\pi}}-\widehat{\bm{Q}}^{\widehat{\pi}}\right\Vert _{\infty}+\varepsilon_{\text{opt}}\right),\label{eq:model-qhat-sqrt-var}
\end{align}
where the second inequality utilizes \cite[Lemma 7]{azar2013minimax}.

Plugging \eqref{eq:model-qhat-sqrt-var} into \eqref{eq:model-q-hatpi}
yields 
\begin{align*}
\left\Vert \bm{Q}^{\widehat{\pi}}-\widehat{\bm{Q}}^{\widehat{\pi}}\right\Vert _{\infty} & \leq\sqrt{\frac{4\log\left(8K/\left(\left(1-\gamma\right)\delta\right)\right)}{N}}\left[\gamma\sqrt{\frac{2}{\left(1-\gamma\right)^{3}}}+\frac{\gamma}{1-\gamma}\left(\left\Vert \bm{Q}^{\widehat{\pi}}-\widehat{\bm{Q}}^{\widehat{\pi}}\right\Vert _{\infty}+\varepsilon_{\text{opt}}\right)\right]\\
 & \qquad+\frac{\gamma}{1-\gamma}\left[4\sqrt{\frac{2\log\left(4K/\delta\right)}{N}}+\frac{4\log\left(8K/\left(\left(1-\gamma\right)\delta\right)\right)}{\left(1-\gamma\right)N}\right]+\frac{10\gamma\xi}{(1-\gamma)^2}+\frac{2\gamma\varepsilon_{\text{opt}}}{1-\gamma}.
\end{align*}
Then we can rearrange terms to obtain
\begin{align}
\left\Vert \bm{Q}^{\widehat{\pi}}-\widehat{\bm{Q}}^{\widehat{\pi}}\right\Vert _{\infty} & \leq10\gamma\sqrt{\frac{\log\left(8K/\left(\left(1-\gamma\right)\delta\right)\right)}{N\left(1-\gamma\right)^{3}}}+\frac{11\gamma\xi}{(1-\gamma)^2}+\frac{3\gamma\varepsilon_{\text{opt}}}{1-\gamma}\label{eq:model-q-1}
\end{align}
as long as $N\geq \widetilde{C}\log(8K/((1-\gamma)\delta))/(1-\gamma)^{2}$ for some sufficiently large constant $\widetilde{C}>0$.

In a similar vein, we can use \eqref{eq:model-qstar} and \eqref{eq:loo-2}
to obtain that 
\begin{equation}
\left\Vert \widehat{\bm{Q}}^{\pi^{\star}}-\bm{Q}^{\star}\right\Vert _{\infty}\leq10\gamma\sqrt{\frac{\log\left(8K/\left(\left(1-\gamma\right)\delta\right)\right)}{N\left(1-\gamma\right)^{3}}}+\frac{11\gamma\xi}{(1-\gamma)^2}.\label{eq:model-q-2}
\end{equation}
Finally, we can substitute \eqref{eq:model-q-1} and \eqref{eq:model-q-2}
into \eqref{eq:model-q-0} to achieve
\[
\bm{Q}^{\widehat{\pi}}-\bm{Q}^{\star}\geq-\left(20\gamma\sqrt{\frac{\log\left(8K/\left(\left(1-\gamma\right)\delta\right)\right)}{N\left(1-\gamma\right)^{3}}}+\frac{22\gamma\xi}{(1-\gamma)^2}+\frac{3\gamma\varepsilon_{\text{opt}}}{1-\gamma}+\varepsilon_{\text{opt}}\right)\bm{1}.
\]
This result implies that 
\[
\bm{Q}^{\widehat{\pi}}\geq\bm{Q}^{\star}-\left(\varepsilon+\frac{22\xi}{(1-\gamma)^2}+\frac{4\varepsilon_{\text{opt}}}{1-\gamma}\right)\bm{1},
\]
as long as 
\[
N\geq\frac{C\log\left(8K/\left(\left(1-\gamma\right)\delta\right)\right)}{\left(1-\gamma\right)^{3}\varepsilon^{2}},
\]
for some sufficiently large constant $C>0$.

\subsection{Proof of Lemma \ref{lemma:loo-model-based}\label{subsec:Proof-of-Lemma-loo}}
To prove this theorem, we invoke the idea of $s$-absorbing MDP proposed
by \cite{agarwal2020model}. For a state $s\in\mathcal{S}$ and a scalar $u$, we
define a new MDP $M_{s,u}$ to be identical to $M$ on all the other
states except $s$; on state $s$, $M_{s,u}$ is absorbing such that
$P_{M_{s,u}}(s|s,a)=1$ and $r_{M_{s,u}}\left(s,a\right)=(1-\gamma)u$
for all $a\in\mathcal{A}$. More formally, we define $P_{M_{u,s}}$ and $r_{M_{u,s}}$ as
\begin{align*}
P_{M_{s,u}}(s|s,a)=1,\quad r_{M_{s,u}}\left(s,a\right)=(1-\gamma)u, \qquad &\text{for all }a\in\mathcal{A}, \\ 
P_{M_{s,u}}(\cdot|s',a')=P(\cdot|s',a'),\quad r_{M_{s,u}}\left(s,a\right)=r\left(s,a\right), \qquad &\text{for all }s'\neq s\text{ and }a'\in\mathcal{A}. 
\end{align*}
To streamline notations, we will use
$\bm{V}_{s,u}^{\pi}\in\mathbb{R}^{|\mathcal{S}|}$ and $\bm{V}_{s,u}^{\star}\in\mathbb{R}^{|\mathcal{S}|}$
to denote the value function of $M_{s,u}$ under policy $\pi$ and
the optimal value function of $M_{s,u}$ respectively. Furthermore,
we denote by $\widehat{M}_{s,u}$ the MDP whose probability transition kernel is identical to $\widehat{P}$ at all states except that state $s$ is absorbing. Similar
as before, we use $\widehat{\bm{V}}_{s,u}^{\star}\in\mathbb{R}^{|\mathcal{S}|}$
to denote the optimal value function under $\widehat{M}_{s,u}$. The
construction of this collection of auxiliary MDPs will facilitate our analysis
by decoupling the statistical dependency between $\widehat{\bm{P}}$ and $\widehat{\pi}^\star$.

To begin with, we can decompose the quantity of interest as 
\begin{align}
\left|\left(\bm{P}-\widehat{\bm{P}}\right)_{s,a}\widehat{\bm{V}}^{\star}\right| & =\left|\left(\bm{P}-\widehat{\bm{P}}\right)_{s,a}\left(\widehat{\bm{V}}^{\star}-\widehat{\bm{V}}_{s,u}^{\star}+\widehat{\bm{V}}_{s,u}^{\star}\right)\right|\nonumber \\
 & \leq\left|\left(\bm{P}-\widehat{\bm{P}}\right)_{s,a}\widehat{\bm{V}}_{s,u}^{\star}\right|+\left|\left(\bm{P}-\widehat{\bm{P}}\right)_{s,a}\left(\widehat{\bm{V}}^{\star}-\widehat{\bm{V}}_{s,u}^{\star}\right)\right|\nonumber \\
 & \overset{(\text{i})}{\leq}\left|\left(\bm{P}-\widetilde{\bm{P}}\right)_{s,a}\widehat{\bm{V}}_{s,u}^{\star}\right|+\left|\bm{\lambda}\left(s,a\right)\left(\widetilde{\bm{P}}_{\mathcal{K}}-\bm{P}_{\mathcal{K}}\right)\widehat{\bm{V}}_{s,u}^{\star}\right|\nonumber \\
 & \qquad+\left|\bm{\lambda}\left(s,a\right)\left(\bm{P}_{\mathcal{K}}-\widehat{\bm{P}}_{\mathcal{K}}\right)\widehat{\bm{V}}_{s,u}^{\star}\right|+\left(\left\Vert \bm{P}_{s,a}\right\Vert _{1}+\left\Vert \widehat{\bm{P}}_{s,a}\right\Vert _{1}\right)\left\Vert \widehat{\bm{V}}^{\star}-\widehat{\bm{V}}_{s,u}^{\star}\right\Vert _{\infty}\\
 & \leq\left\Vert \left(\bm{P}-\widetilde{\bm{P}}\right)_{s,a}\right\Vert _{1}\left\Vert \widehat{\bm{V}}_{s,u}^{\star}\right\Vert _{\infty}+\left\Vert \bm{\lambda}\left(s,a\right)\right\Vert _{1}\cdot\left\Vert \left(\widetilde{\bm{P}}_{\mathcal{K}}-\bm{P}_{\mathcal{K}}\right)\widehat{\bm{V}}_{s,u}^{\star}\right\Vert _{\infty}\nonumber \\
 & \qquad+\left\Vert \bm{\lambda}\left(s,a\right)\right\Vert _{1}\cdot\left\Vert \left(\bm{P}_{\mathcal{K}}-\widehat{\bm{P}}_{\mathcal{K}}\right)\widehat{\bm{V}}_{s,u}^{\star}\right\Vert _{\infty}+2\left\Vert \widehat{\bm{V}}^{\star}-\widehat{\bm{V}}_{s,u}^{\star}\right\Vert _{\infty}\\
 & \overset{(\text{ii})}{\leq}\frac{2\xi}{1-\gamma}+\max_{\left(s,a\right)\in\mathcal{K}}\left|\left(\bm{P}-\widehat{\bm{P}}\right)_{s,a}\widehat{\bm{V}}_{s,u}^{\star}\right|+2\left\Vert \widehat{\bm{V}}^{\star}-\widehat{\bm{V}}_{s,u}^{\star}\right\Vert _{\infty},\label{eq:model-pk}
\end{align}
where (i) makes use of $\widetilde{\bm{P}}_{s,a}=\bm{\lambda}(s,a)\widetilde{\bm{P}}_{\mathcal{K}}$
and $\widehat{\bm{P}}_{s,a}=\bm{\lambda}(s,a)\widehat{\bm{P}}_{\mathcal{K}}$;
(ii) depends on $\Vert\bm{P}-\widetilde{\bm{P}}\Vert_{1}\leq\xi$,
$\Vert\bm{\lambda}(s,a)\Vert_{1}=1$ and $\Vert\widehat{\bm{V}}_{s,u}^{\star}\Vert_{\infty}\leq(1-\gamma)^{-1}$.
For each state $s$, the value of $u$ will be selected from a set $\mathcal{U}_s$. The choice of $\mathcal{U}_s$ will be specified later. Then for some fixed $u$ in $\mathcal{U}_s$ and fixed state-action pair $(s,a)\in\mathcal{K}$, 
due to the independence between $\widehat{\bm{P}}_{s,a}$ and $\widehat{\bm{V}}_{s,u}^{\star}$,
we can apply Bernstein's inequality (cf.~\cite[Theorem 2.8.4]{vershynin2018high}) conditional on $\widehat{\bm{V}}_{s,u}^{\star}$ to reveal that with probability greater than $1-\delta/2$, 
\begin{equation}
\left|\left(\bm{P}-\widehat{\bm{P}}\right)_{s,a}\widehat{\bm{V}}_{s,u}^{\star}\right|\leq\sqrt{\frac{2\log\left(4/\delta\right)}{N}\mathsf{Var}_{\bm{P}_{s,a}}\left(\widehat{\bm{V}}_{s,u}^{\star}\right)}+\frac{2\log\left(4/\delta\right)}{3\left(1-\gamma\right)N}.\label{eq:model-pv-loo-fix}
\end{equation}
Invoking the union bound over all the $K$ state-action pairs of $\mathcal{K}$ and all the possible values of $u$ in $\mathcal{U}_s$ demonstrate that with probability greater than $1-\delta/2$, 
\begin{equation}
\left|\left(\bm{P}-\widehat{\bm{P}}\right)_{s,a}\widehat{\bm{V}}_{s,u}^{\star}\right|\leq\sqrt{\frac{2\log\left(4K\left|\mathcal{U}_{s}\right|/\delta\right)}{N}\mathsf{Var}_{\bm{P}_{s,a}}\left(\widehat{\bm{V}}_{s,u}^{\star}\right)}+\frac{2\log\left(4K\left|\mathcal{U}_{s}\right|/\delta\right)}{3\left(1-\gamma\right)N},\label{eq:model-pv-loo}
\end{equation}
holds for all state-action pair $(s,a)\in\mathcal{K}$ and all $u\in \mathcal{U}_{s}$.
Here, $\mathsf{Var}_{\bm{P}_{s,a}}(\cdot)$ is defined in \eqref{eq:defn-varpsav}.
Then we observe that 
\begin{align}
\sqrt{\mathsf{Var}_{\bm{P}_{s,a}}\left(\widehat{\bm{V}}_{s,u}^{\star}\right)} & \leq\sqrt{\mathsf{Var}_{\bm{P}_{s,a}}\left(\widehat{\bm{V}}^{\star}-\widehat{\bm{V}}_{s,u}^{\star}\right)}+\sqrt{\mathsf{Var}_{\bm{P}_{s,a}}\left(\widehat{\bm{V}}^{\star}\right)}\nonumber \\
 & \leq\left\Vert \widehat{\bm{V}}^{\star}-\widehat{\bm{V}}_{s,u}^{\star}\right\Vert _{\infty}+\sqrt{\mathsf{Var}_{\bm{P}_{s,a}}\left(\widehat{\bm{V}}^{\star}\right)}\nonumber \\
 & \leq\left|\widehat{\bm{V}}^{\star}\left(s\right)-u\right|+\sqrt{\mathsf{Var}_{\bm{P}_{s,a}}\left(\widehat{\bm{V}}^{\star}\right)},\label{eq:model-var-loo}
\end{align}
where (i) is due to $\sqrt{\mathsf{Var}_{\bm{P}_{s,a}}(\bm{V}_{1}+\bm{V}_{2})}\leq\sqrt{\mathsf{Var}_{\bm{P}_{s,a}}(\bm{V}_{1})}+\sqrt{\mathsf{Var}_{\bm{P}_{s,a}}(\bm{V}_{2})}$
and (ii) holds since
\begin{equation}
\left\Vert \widehat{\bm{V}}^{\star}-\widehat{\bm{V}}_{s,u}^{\star}\right\Vert _{\infty}=\left\Vert \widehat{\bm{V}}_{s,\widehat{\bm{V}}^{\star}\left(s\right)}^{\star}-\widehat{\bm{V}}_{s,u}^{\star}\right\Vert _{\infty}\leq\left|\widehat{\bm{V}}^{\star}\left(s\right)-u\right|,\label{eq:v-loo}
\end{equation}whose proof can be found in \cite[Lemma 8 and 9]{agarwal2020model}.

By substituting \eqref{eq:model-pv-loo}, \eqref{eq:model-var-loo}
and \eqref{eq:v-loo} into \eqref{eq:model-pk}, we arrive at
\begin{align}
\left|\left(\bm{P}-\widehat{\bm{P}}\right)_{s,a}\widehat{\bm{V}}^{\star}\right| & \leq\frac{2\xi}{1-\gamma}+\left|\widehat{\bm{V}}^{\star}\left(s\right)-u\right|\left(2+\sqrt{\frac{2\log\left(4K\left|\mathcal{U}_{s}\right|/\delta\right)}{N}}\right)\nonumber\\
 & \qquad+\sqrt{\frac{2\log\left(4K\left|\mathcal{U}_{s}\right|/\delta\right)}{N}}\sqrt{\mathsf{Var}_{\bm{P}_{s,a}}\left(\widehat{\bm{V}}^{\star}\right)}+\frac{2\log\left(4K\left|\mathcal{U}_{s}\right|/\delta\right)}{3\left(1-\gamma\right)N}.\label{eq:p-1}
\end{align}
Then it boils down to determining $\mathcal{U}_{s}$. The coarse bounds of $\widehat{\bm{Q}}^{\pi^{\star}}$
and $\widehat{\bm{Q}}^{\star}$ in the following lemma provide a guidance
on the choice of $\mathcal{U}_{s}$.

\begin{lemma}\label{lemma:coarse}For $\delta\in(0,1)$, with probability exceeding $1-\delta/2$ one has 
\begin{align}
\left\Vert \bm{Q}^{\star}-\widehat{\bm{Q}}^{\pi^{\star}}\right\Vert _{\infty}&\leq\frac{\gamma}{1-\gamma}\sqrt{\frac{\log\left(4K/\delta\right)}{2N\left(1-\gamma\right)^{2}}}+\frac{2\gamma\xi}{\left(1-\gamma\right)^{2}},\label{eq:crude-1}\\
\left\Vert \bm{Q}^{\star}-\widehat{\bm{Q}}^{\star}\right\Vert _{\infty}&\leq\frac{\gamma}{1-\gamma}\sqrt{\frac{\log\left(4K/\delta\right)}{2N\left(1-\gamma\right)^{2}}}+\frac{2\gamma\xi}{\left(1-\gamma\right)^{2}}. \label{eq:crude-2} 
\end{align}
\end{lemma}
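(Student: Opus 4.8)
The plan is to prove both inequalities by the same two-step reduction, exploiting a feature that makes these bounds \emph{coarse} (and hence much cheaper than Lemma~\ref{lemma:loo-model-based}): the target vector $\bm{V}^{\star}$ is deterministic---it depends only on the true MDP $M$, not on the collected samples---so no leave-one-out construction is needed, and an ordinary Hoeffding bound plus a union bound over the $K$ anchor pairs suffices. Throughout I use the notation of Appendix~\ref{sec:Notations}, in particular $\widehat{\bm{P}}=\bm{\Lambda}\widehat{\bm{P}}_{\mathcal{K}}$ with $\Vert\widehat{\bm{P}}\Vert_{1}=1$ and $\widetilde{\bm{P}}_{s,a}=\bm{\lambda}(s,a)\widetilde{\bm{P}}_{\mathcal{K}}$.

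\emph{Step 1: reduce both left-hand sides to $\Vert(\bm{P}-\widehat{\bm{P}})\bm{V}^{\star}\Vert_{\infty}$.} For \eqref{eq:crude-1}, since $\bm{Q}^{\star}=\bm{Q}^{\pi^{\star}}=(\bm{I}-\gamma\bm{P}^{\pi^{\star}})^{-1}\bm{r}$ and $\widehat{\bm{Q}}^{\pi^{\star}}=(\bm{I}-\gamma\widehat{\bm{P}}^{\pi^{\star}})^{-1}\bm{r}$, the resolvent identity gives $\widehat{\bm{Q}}^{\pi^{\star}}-\bm{Q}^{\star}=\gamma(\bm{I}-\gamma\widehat{\bm{P}}^{\pi^{\star}})^{-1}(\widehat{\bm{P}}^{\pi^{\star}}-\bm{P}^{\pi^{\star}})\bm{Q}^{\star}$; combining this with the identity $(\widehat{\bm{P}}^{\pi^{\star}}-\bm{P}^{\pi^{\star}})\bm{Q}^{\star}=(\widehat{\bm{P}}-\bm{P})\bm{V}^{\star}$ and $\Vert(\bm{I}-\gamma\widehat{\bm{P}}^{\pi^{\star}})^{-1}\Vert_{1}\le(1-\gamma)^{-1}$ yields $\Vert\bm{Q}^{\star}-\widehat{\bm{Q}}^{\pi^{\star}}\Vert_{\infty}\le\frac{\gamma}{1-\gamma}\Vert(\bm{P}-\widehat{\bm{P}})\bm{V}^{\star}\Vert_{\infty}$. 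For \eqref{eq:crude-2}, I would instead use that $\bm{Q}^{\star}$ and $\widehat{\bm{Q}}^{\star}$ are the fixed points of the $\gamma$-contractive Bellman operators $\mathcal{T}$ and $\widehat{\mathcal{T}}$ of $M$ and $\widehat{M}$: writing $\bm{Q}^{\star}-\widehat{\bm{Q}}^{\star}=(\mathcal{T}\bm{Q}^{\star}-\widehat{\mathcal{T}}\bm{Q}^{\star})+(\widehat{\mathcal{T}}\bm{Q}^{\star}-\widehat{\mathcal{T}}\widehat{\bm{Q}}^{\star})$, noting $\mathcal{T}\bm{Q}^{\star}-\widehat{\mathcal{T}}\bm{Q}^{\star}=\gamma(\bm{P}-\widehat{\bm{P}})\bm{V}^{\star}$ and $\Vert\widehat{\mathcal{T}}\bm{Q}^{\star}-\widehat{\mathcal{T}}\widehat{\bm{Q}}^{\star}\Vert_{\infty}\le\gamma\Vert\bm{Q}^{\star}-\widehat{\bm{Q}}^{\star}\Vert_{\infty}$, and rearranging gives the same bound $\Vert\bm{Q}^{\star}-\widehat{\bm{Q}}^{\star}\Vert_{\infty}\le\frac{\gamma}{1-\gamma}\Vert(\bm{P}-\widehat{\bm{P}})\bm{V}^{\star}\Vert_{\infty}$.

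\emph{Step 2: control $\Vert(\bm{P}-\widehat{\bm{P}})\bm{V}^{\star}\Vert_{\infty}$.} Using $\widehat{\bm{P}}_{s,a}=\bm{\lambda}(s,a)\widehat{\bm{P}}_{\mathcal{K}}$ and $\widetilde{\bm{P}}_{s,a}=\bm{\lambda}(s,a)\widetilde{\bm{P}}_{\mathcal{K}}$, I would decompose, for each $(s,a)$,
\[
(\bm{P}-\widehat{\bm{P}})_{s,a}=(\bm{P}-\widetilde{\bm{P}})_{s,a}+\bm{\lambda}(s,a)(\widetilde{\bm{P}}_{\mathcal{K}}-\bm{P}_{\mathcal{K}})+\bm{\lambda}(s,a)(\bm{P}_{\mathcal{K}}-\widehat{\bm{P}}_{\mathcal{K}}).
\]
The first two terms contribute at most $2\xi/(1-\gamma)$ by $\Vert\widetilde{\bm{P}}-\bm{P}\Vert_{1}\le\xi$, $\Vert\bm{\lambda}(s,a)\Vert_{1}=1$ and $\Vert\bm{V}^{\star}\Vert_{\infty}\le(1-\gamma)^{-1}$, while the third is at most $\max_{(s_{i},a_{i})\in\mathcal{K}}|(\bm{P}_{\mathcal{K}}-\widehat{\bm{P}}_{\mathcal{K}})_{s_{i},a_{i}}\bm{V}^{\star}|$. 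For a fixed anchor $(s_{i},a_{i})\in\mathcal{K}$, the quantity $(\widehat{\bm{P}}_{\mathcal{K}})_{s_{i},a_{i}}\bm{V}^{\star}=\frac1N\sum_{j=1}^{N}\bm{V}^{\star}(s_{i}^{(j)})$ is an average of $N$ i.i.d.\ terms lying in $[0,(1-\gamma)^{-1}]$ with mean $(\bm{P}_{\mathcal{K}})_{s_{i},a_{i}}\bm{V}^{\star}$ (this is exactly where determinism of $\bm{V}^{\star}$ is used), so Hoeffding's inequality with failure probability $\delta/(2K)$ and a union bound over the $K$ anchors give, with probability at least $1-\delta/2$,
\[
\bigl|(\bm{P}_{\mathcal{K}}-\widehat{\bm{P}}_{\mathcal{K}})_{s_{i},a_{i}}\bm{V}^{\star}\bigr|\le\frac{1}{1-\gamma}\sqrt{\frac{\log(4K/\delta)}{2N}}\qquad\text{for all }(s_{i},a_{i})\in\mathcal{K}.
\]
Plugging this and the misspecification bound back into Step~1 produces the upper bound $\frac{\gamma}{1-\gamma}\bigl(\frac{2\xi}{1-\gamma}+\frac{1}{1-\gamma}\sqrt{\log(4K/\delta)/(2N)}\bigr)$, which is precisely the right-hand side of both \eqref{eq:crude-1} and \eqref{eq:crude-2}.

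There is no genuinely hard step here. The only points requiring care are (i) choosing the right form of the resolvent identity / Bellman fixed-point manipulation in Step~1, so that the sampling randomness enters only through $\widehat{\bm{P}}$ acting on the deterministic $\bm{V}^{\star}$ rather than through $\widehat{\bm{V}}^{\star}$ or $\widehat{\bm{V}}^{\pi^{\star}}$ (the latter would force the leave-one-out machinery of Lemma~\ref{lemma:loo-model-based}), and (ii) keeping the bookkeeping of the misspecification decomposition tight enough that the constant in $\frac{2\gamma\xi}{(1-\gamma)^{2}}$ comes out correctly.
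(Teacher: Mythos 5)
Your proposal is correct and follows essentially the same route as the paper: reduce both quantities to $\frac{\gamma}{1-\gamma}\Vert(\bm{P}-\widehat{\bm{P}})\bm{V}^{\star}\Vert_{\infty}$ (via the resolvent identity for $\widehat{\bm{Q}}^{\pi^{\star}}$ and the fixed-point bound for $\widehat{\bm{Q}}^{\star}$), then control $\Vert(\bm{P}-\widehat{\bm{P}})\bm{V}^{\star}\Vert_{\infty}$ by the same three-term misspecification decomposition plus Hoeffding and a union bound over the $K$ anchors. The only cosmetic difference is that you supply the short Bellman-contraction proof of $\Vert\bm{Q}^{\star}-\widehat{\bm{Q}}^{\star}\Vert_{\infty}\le\frac{\gamma}{1-\gamma}\Vert(\bm{P}-\widehat{\bm{P}})\bm{V}^{\star}\Vert_{\infty}$, whereas the paper cites this inequality from prior work.
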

\begin{proof}
	See Appendix \ref{subsec:Proof-of-Lemma-coarse}.
\end{proof}

This inspires us to choose $\mathcal{U}_{s}$ to be the set consisting of equidistant
points in $[\bm{V}^{\star}(s)-R(\delta),\bm{V}^{\star}(s)+R(\delta)]$
with $|U_{s}|=\left\lceil 1/(1-\gamma)^{2}\right\rceil $ and 
\[
R\left(\delta\right)\coloneqq\frac{\gamma}{1-\gamma}\sqrt{\frac{\log\left(4K/\delta\right)}{2N\left(1-\gamma\right)^{2}}}+\frac{2\gamma\xi}{\left(1-\gamma\right)^{2}}.
\]
Since $\Vert\bm{V}^{\star}-\widehat{\bm{V}}^{\star}\Vert_{\infty}\leq\Vert\bm{Q}^{\star}-\widehat{\bm{Q}}^{\star}\Vert_{\infty}$,
Lemma \ref{lemma:coarse} implies that $\widehat{\bm{V}}^{\star}(s)\in[\bm{V}^{\star}(s)-R(\delta),\bm{V}^{\star}(s)+R(\delta)]$
with probability over $1-\delta/2$. Hence, we have 
\begin{equation}
	\min_{u\in \mathcal{U}_{s}}\left|\widehat{\bm{V}}^{\star}\left(s\right)-u\right|\leq\frac{2R\left(\delta\right)}{\left|U_{s}\right|+1}\leq2\gamma\sqrt{\frac{2\log\left(4K/\delta\right)}{N}}+4\gamma\xi.\label{eq:p-2}
\end{equation}
Consequently, with probability exceeding $1-\delta$, one has 
\begin{align*}
\left|\left(\bm{P}-\widehat{\bm{P}}\right)_{s,a}\widehat{\bm{V}}^{\star}\right| & \overset{\text{(i)}}{\leq}\frac{2\xi}{1-\gamma}+\min_{u\in \mathcal{U}_{s}}\left|\widehat{\bm{V}}^{\star}\left(s\right)-u\right|\left(2+\sqrt{\frac{2\log\left(4K\left|\mathcal{U}_{s}\right|/\delta\right)}{N}}\right)\\
 & \qquad+\sqrt{\frac{2\log\left(4K\left|\mathcal{U}_{s}\right|/\delta\right)}{N}}\sqrt{\mathsf{Var}_{\bm{P}_{s,a}}\left(\widehat{\bm{V}}^{\star}\right)}+\frac{2\log\left(4K\left|\mathcal{U}_{s}\right|/\delta\right)}{3\left(1-\gamma\right)N}\\
 & \overset{\text{(ii)}}{\leq}\frac{2\xi}{1-\gamma}+\left(2\gamma\sqrt{\frac{2\log\left(4K/\delta\right)}{N}}+4\gamma\xi\right)\left(2+\sqrt{\frac{4\log\left(8K/\left(\left(1-\gamma\right)\delta\right)\right)}{N}}\right)\\
 & \qquad+\sqrt{\frac{4\log\left(8K/\left(\left(1-\gamma\right)\delta\right)\right)}{N}}\sqrt{\mathsf{Var}_{\bm{P}_{s,a}}\left(\widehat{\bm{V}}^{\star}\right)}+\frac{2\log\left(8K/\left(\left(1-\gamma\right)\delta\right)\right)}{3\left(1-\gamma\right)N}\\
 & \leq\frac{10\xi}{1-\gamma}+4\sqrt{\frac{2\log\left(4K/\delta\right)}{N}}+\frac{4\log\left(8K/\left(\left(1-\gamma\right)\delta\right)\right)}{\left(1-\gamma\right)N}\\
 & \qquad+\sqrt{\frac{4\log\left(8K/\left(\left(1-\gamma\right)\delta\right)\right)}{N}}\sqrt{\mathsf{Var}_{\bm{P}_{s,a}}\left(\widehat{\bm{V}}^{\star}\right)},
\end{align*}
where (i) follows from \eqref{eq:p-1} and (ii) utilizes \eqref{eq:p-2}. 
This finishes the proof for the first inequality. The second inequality can be proved in a similar way and is omitted here for
brevity. 

\subsection{Proof of Lemma \ref{lemma:coarse}\label{subsec:Proof-of-Lemma-coarse}}

To begin with, one has 
\begin{align}
\left\Vert \left(\widehat{\bm{P}}-\bm{P}\right)\bm{V}^{\star}\right\Vert _{\infty} & \leq\left\Vert \bm{\Lambda}\left(\widehat{\bm{P}}_{\mathcal{K}}-\bm{P}_{\mathcal{K}}\right)\bm{V}^{\star}\right\Vert _{\infty}+\left\Vert \bm{\Lambda}\left(\bm{P}_{\mathcal{K}}-\widetilde{\bm{P}}_{\mathcal{K}}\right)\bm{V}^{\star}\right\Vert _{\infty}+\left\Vert \left(\widetilde{\bm{P}}-\bm{P}\right)\bm{V}^{\star}\right\Vert _{\infty}\nonumber \\
 & \leq\left\Vert \bm{\Lambda}\right\Vert _{1}\left\Vert \left(\widehat{\bm{P}}_{\mathcal{K}}-\bm{P}_{\mathcal{K}}\right)\bm{V}^{\star}\right\Vert _{\infty}+\left\Vert \bm{\Lambda}\right\Vert _{1}\left\Vert \left(\bm{P}_{\mathcal{K}}-\widetilde{\bm{P}}_{\mathcal{K}}\right)\bm{V}^{\star}\right\Vert _{\infty}+\left\Vert \widetilde{\bm{P}}-\bm{P}\right\Vert _{1}\left\Vert \bm{V}^{\star}\right\Vert _{\infty}\nonumber \\
 & \leq\left\Vert \left(\widehat{\bm{P}}_{\mathcal{K}}-\bm{P}_{\mathcal{K}}\right)\bm{V}^{\star}\right\Vert _{\infty}+\frac{2\xi}{1-\gamma},\label{eq:coarse-pv}
\end{align}
where the first line uses $\widehat{\bm{P}}=\bm{\Lambda}\widehat{\bm{P}}_{\mathcal{K}}$
and $\widetilde{\bm{P}}=\bm{\Lambda}\widetilde{\bm{P}}_{\mathcal{K}}$;
the last inequality comes from the facts that $\Vert\widetilde{\bm{P}}-\bm{P}\Vert_{1}\leq\xi$,
$\Vert\bm{\Lambda}\Vert_{1}=1$ and $\Vert\bm{V}^{\star}\Vert_{\infty}\leq(1-\gamma)^{-1}$.
Then we turn to bound $\Vert(\widehat{\bm{P}}_{\mathcal{K}}-\bm{P}_{\mathcal{K}})\bm{V}^{\star}\Vert_{\infty}$.
In view of \eqref{eq:empirical-transition}, Hoeffding's inequality (cf.~\cite[Theorem 2.2.6]{vershynin2018high})
implies that for $(s,a)\in\mathcal{K}$, 
\[
\mathbb{P}\left(\left|\left(\widehat{\bm{P}}-\bm{P}\right)_{s,a}\bm{V}^{\star}\right|\geq t\right)\leq2\exp\left(-\frac{2t^{2}}{\left\Vert \bm{V}^{\star}\right\Vert _{\infty}^{2}/N}\right).
\]
Hence by the standard union bound argument we have

\begin{equation}
\left\Vert \left(\widehat{\bm{P}}_{\mathcal{K}}-\bm{P}_{\mathcal{K}}\right)\bm{V}^{\star}\right\Vert _{\infty}\leq\sqrt{\frac{\left\Vert \bm{V}^{\star}\right\Vert _{\infty}^{2}\log\left(4K/\delta\right)}{2N}}\leq\sqrt{\frac{\log\left(4K/\delta\right)}{2N\left(1-\gamma\right)^{2}}},\label{eq:coarse-pkv}
\end{equation}with probability over $1-\delta/2$. 

\begin{enumerate}
\item Now we are ready to bound $\bm{Q}^{\pi^{\star}}-\widehat{\bm{Q}}^{\pi^{\star}}$.
One has 
\begin{align*}
\bm{Q}^{\pi^{\star}}-\widehat{\bm{Q}}^{\pi^{\star}} & =\left(\bm{I}-\gamma\bm{P}^{\pi^{\star}}\right)^{-1}\bm{r}-\left(\bm{I}-\gamma\widehat{\bm{P}}^{\pi^{\star}}\right)^{-1}\bm{r}\\
 & =\left(\bm{I}-\gamma\widehat{\bm{P}}^{\pi^{\star}}\right)^{-1}\left(\left(\bm{I}-\gamma\widehat{\bm{P}}^{\pi^{\star}}\right)-\left(\bm{I}-\gamma\bm{P}^{\pi^{\star}}\right)\right)\bm{Q}^{\pi^{\star}}\\
 & =\gamma\left(\bm{I}-\gamma\widehat{\bm{P}}^{\pi^{\star}}\right)^{-1}\left(\bm{P}^{\pi^{\star}}-\widehat{\bm{P}}^{\pi^{\star}}\right)\bm{Q}^{\pi^{\star}}\\
 & =\gamma\left(\bm{I}-\gamma\widehat{\bm{P}}^{\pi^{\star}}\right)^{-1}\left(\bm{P}-\widehat{\bm{P}}\right)\bm{V}^{\pi^{\star}},
\end{align*}
where the first equality makes use of \eqref{eq:equation-q}. Then we
take \eqref{eq:coarse-pv} and \eqref{eq:coarse-pkv} collectively
to achieve 
\begin{align*}
\left\Vert \gamma\left(\bm{I}-\gamma\widehat{\bm{P}}^{\pi^{\star}}\right)^{-1}\left(\bm{P}-\widehat{\bm{P}}\right)\bm{V}^{\star}\right\Vert _{\infty} & \le\gamma\sum_{i=0}^{\infty}\left\Vert \gamma^{i}\left(\widehat{\bm{P}}^{\pi^{\star}}\right)^{i}\left(\bm{P}-\widehat{\bm{P}}\right)\bm{V}^{\star}\right\Vert _{\infty}\\
 & \leq\gamma\sum_{i=0}^{\infty}\gamma^{i}\left\Vert \left(\widehat{\bm{P}}^{\pi^{\star}}\right)^{i}\right\Vert _{1}\left\Vert \left(\bm{P}-\widehat{\bm{P}}\right)\bm{V}^{\star}\right\Vert _{\infty}\\
 & \leq\frac{\gamma}{1-\gamma}\sqrt{\frac{\log\left(4K/\delta\right)}{2N\left(1-\gamma\right)^{2}}}+\frac{2\gamma\xi}{\left(1-\gamma\right)^{2}},
\end{align*}
where the last line comes from the fact that for all $i\geq1$, $(\widehat{\bm{P}}^{\pi^{\star}})^{i}$
is a probability transition matrix so that $\Vert(\widehat{\bm{P}}^{\pi^{\star}})^{i}\Vert_{1}=1$.
This justifies the first inequality \eqref{eq:crude-1}. 
\item In terms of the second one, \cite[Section A.4]{agarwal2020model}
implies that 
\[
\left\Vert \bm{Q}^{\star}-\widehat{\bm{Q}}^{\star}\right\Vert _{\infty}\leq\frac{\gamma}{1-\gamma}\left\Vert \left(\bm{P}-\widehat{\bm{P}}\right)\bm{V}^{\star}\right\Vert _{\infty}.
\]
Substitution of \eqref{eq:coarse-pv} and \eqref{eq:coarse-pkv} into
the above inequality yields
\[
\left\Vert \bm{Q}^{\star}-\widehat{\bm{Q}}^{\star}\right\Vert _{\infty}\leq\frac{\gamma}{1-\gamma}\sqrt{\frac{\log\left(4K/\delta\right)}{2N\left(1-\gamma\right)^{2}}}+\frac{2\gamma\xi}{\left(1-\gamma\right)^{2}}.
\]
\end{enumerate}

\section{Analysis of Q-learning\label{subsec:Proof-of-Theorem-syn-q} (Proof of Theorem \ref{thm:syn-q-upper})}

In this section, we will provide complete proof for Theorem \ref{thm:syn-q-upper}.
We actually prove a more general version of Theorem \ref{thm:syn-q-upper}
that takes model misspecification into consideration, as stated below. 

\begin{theorem}\label{thm:syn-q-upper-general} Consider any $\delta\in(0,1)$
and $\varepsilon\in(0,1]$. Suppose that there exists
a probability transition model $\widetilde{\bm{P}}$ obeying Definition
\ref{eq:defn-linear-model} and Assumption \ref{assumption:anchor}
with feature vectors $\{\phi(s,a)\}_{(s,a)\in\mathcal{S}\times\mathcal{A}}\subset\mathbb{R}^K$ and anchor state-action pairs $\mathcal{K}$
such that $$\Vert\widetilde{\bm{P}}-\bm{P}\Vert_{1}\leq\xi$$ for some $\xi\geq0$. Assume that the initialization obeys $0\leq Q_{0}(s,a)\leq\frac{1}{1-\gamma}$
for any $(s,a)\in\mathcal{S}\times\mathcal{A}$ and for any $0\leq t\leq T$,
the learning rates satisfy 
\begin{equation}
\frac{1}{1+\frac{c_{1}\left(1-\gamma\right)T}{\log^{2}T}}\leq\eta_{t}\leq\frac{1}{1+\frac{c_{2}\left(1-\gamma\right)t}{\log^{2}T}},\label{eq:syn-q-stepsize-general}
\end{equation}
for some sufficiently small universal constants $c_{1}\geq c_{2}>0$.
Suppose that the total number of iterations $T$ exceeds 
\begin{equation}
T\geq\frac{C_{3}\log\left(KT/\delta\right)\log^{4}T}{\left(1-\gamma\right)^{4}\varepsilon^{2}},\label{eq:syn-q-iteration-complexity-general}
\end{equation}
for some sufficiently large universal constant $C_{3}>0$. If there
exists a linear probability transition model $\widetilde{\bm{P}}$
satisfying Assumption \ref{assumption:anchor} with feature vectors
$\{\phi(s,a)\}_{(s,a)\in\mathcal{S}\times\mathcal{A}}$ such that
$\Vert\widetilde{\bm{P}}-\bm{P}\Vert_{1}\leq\xi$, then with probability
exceeding $1-\delta$, the output $Q_{T}$ of Algorithm \ref{alg:syn-q}
satisfies
\begin{equation}
\max_{\left(s,a\right)\in\mathcal{S}\times\mathcal{A}}\left|Q_{T}\left(s,a\right)-Q^{\star}\left(s,a\right)\right|\leq\varepsilon+\frac{6\gamma\xi}{\left(1-\gamma\right)^{2}},\label{eq:model-free-Q-T-general}
\end{equation}
for some constant $C_4>0$. In addition, let $\pi_{T}$ (resp.~$V_{T}$)
to be the policy (resp.~value function) induced by $Q_{T}$, then
one has
\begin{equation}
	\max_{s\in\mathcal{S}}\left|V^{\pi_{T}}\left(s\right)-V^{\star}\left(s\right)\right|\leq\frac{2\gamma}{1-\gamma}\left(\varepsilon+\frac{6\gamma\xi}{\left(1-\gamma\right)^{2}}\right). \label{eq:model-free-V}
\end{equation}
\end{theorem}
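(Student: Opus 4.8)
The plan is to track the error vectors $\bm{\Delta}_t:=\bm{Q}_t-\bm{Q}^\star$ and to reduce everything to a single scalar recursion for $\{\|\bm{\Delta}_t\|_\infty\}_{t=0}^{T}$. First I would rewrite the Q-learning update in matrix form, $\bm{Q}_t=(1-\eta_t)\bm{Q}_{t-1}+\eta_t(\bm{r}+\gamma\widehat{\bm{P}}_t\bm{V}_{t-1})$ with $\widehat{\bm{P}}_t=\bm{\Lambda}\widehat{\bm{P}}_{\mathcal{K}}^{(t)}$ and $\bm{V}_{t-1}(s)=\max_{a}\bm{Q}_{t-1}(s,a)$, and record two elementary facts: (i) by Assumption~\ref{assumption:anchor} each $\widehat{\bm{P}}_t$ is nonnegative with unit row sums, so an induction on $t$ gives $0\le\bm{Q}_t\le(1-\gamma)^{-1}\bm{1}$ and hence $\|\bm{V}_t\|_\infty\le(1-\gamma)^{-1}$; and (ii) conditional on the history $\mathcal{F}_{t-1}$ one has $\mathbb{E}[\widehat{\bm{P}}_t]=\bm{\Lambda}\bm{P}_{\mathcal{K}}=:\bm{P}^{\mathsf{an}}$, which is itself a genuine transition kernel obeying $\|\bm{P}^{\mathsf{an}}-\bm{P}\|_1\le2\xi$ (combine $\widetilde{\bm{P}}=\bm{\Lambda}\widetilde{\bm{P}}_{\mathcal{K}}$, $\|\bm{\Lambda}\|_1=1$, and $\|\widetilde{\bm{P}}_{\mathcal{K}}-\bm{P}_{\mathcal{K}}\|_1\le\|\widetilde{\bm{P}}-\bm{P}\|_1\le\xi$).

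Next I would decompose the empirical update around $\bm{Q}^\star=\bm{r}+\gamma\bm{P}\bm{V}^\star$:
\[
\mathcal{T}_{\mathcal{K}}^{(t)}(\bm{Q}_{t-1})-\bm{Q}^\star
=\underbrace{\gamma(\widehat{\bm{P}}_t-\bm{P}^{\mathsf{an}})\bm{V}_{t-1}}_{\text{noise}}
+\underbrace{\gamma(\bm{P}^{\mathsf{an}}-\bm{P})\bm{V}_{t-1}}_{\text{misspecification}}
+\underbrace{\gamma\bm{P}(\bm{V}_{t-1}-\bm{V}^\star)}_{\text{contraction}},
\]
where the misspecification term has $\ell_\infty$-norm at most $2\gamma\xi/(1-\gamma)$ and $\|\gamma\bm{P}(\bm{V}_{t-1}-\bm{V}^\star)\|_\infty\le\gamma\|\bm{\Delta}_{t-1}\|_\infty$. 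Unrolling the recursion with weights $\eta_i^{(t)}:=\eta_i\prod_{j=i+1}^{t}(1-\eta_j)$ writes $\bm{\Delta}_t$ as $(\prod_{j=1}^{t}(1-\eta_j))\bm{\Delta}_0$ plus a weighted sum of these three pieces; the step-size condition~\eqref{eq:syn-q-stepsize-general} makes the initial-condition coefficient super-polynomially small in $T$ and gives $\sum_i\eta_i^{(t)}\le1$, so modulo that negligible term the analysis reduces to the scalar inequality
\[
\|\bm{\Delta}_t\|_\infty\ \le\ \gamma\sum_{i=1}^{t}\eta_i^{(t)}\|\bm{\Delta}_{i-1}\|_\infty\ +\ \|\bm{\zeta}_t\|_\infty\ +\ \frac{2\gamma\xi}{1-\gamma},
\qquad
\bm{\zeta}_t:=\gamma\sum_{i=1}^{t}\eta_i^{(t)}(\widehat{\bm{P}}_i-\bm{P}^{\mathsf{an}})\bm{V}_{i-1}.
\]

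The heart of the argument is to bound the accumulated noise $\|\bm{\zeta}_t\|_\infty$. Writing $\widehat{\bm{P}}_i-\bm{P}^{\mathsf{an}}=\bm{\Lambda}(\widehat{\bm{P}}_{\mathcal{K}}^{(i)}-\bm{P}_{\mathcal{K}})$ and using $\|\bm{\lambda}(s,a)\|_1=1$, it suffices to control the $K$ scalar martingales $\bm{\zeta}_t^{\mathcal{K}}(s',a')=\gamma\sum_{i=1}^{t}\eta_i^{(t)}(\widehat{\bm{P}}_{\mathcal{K}}^{(i)}-\bm{P}_{\mathcal{K}})_{s',a'}\bm{V}_{i-1}$, $(s',a')\in\mathcal{K}$ --- this is the step where the state-action dimension enters only through $K$. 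Each increment has conditional variance at most $\gamma^2\mathsf{Var}_{\bm{P}_{s',a'}}(\bm{V}_{i-1})\le\gamma^2(1-\gamma)^{-2}$ and magnitude at most $2\gamma\eta_i^{(t)}(1-\gamma)^{-1}$, so Freedman's inequality~\cite{freedman1975tail}, combined with the step-size estimates $\sum_{i\le t}(\eta_i^{(t)})^2\lesssim\frac{\log^2 T}{(1-\gamma)t}$ and $\max_i\eta_i^{(t)}\lesssim\frac{\log^2 T}{(1-\gamma)t}$ that follow from~\eqref{eq:syn-q-stepsize-general}, and a union bound over the $K$ anchor pairs and all $t\le T$ (hence the $\log(KT/\delta)$ factor), bounds $\|\bm{\zeta}_t\|_\infty$ by a term of order $\frac{1}{(1-\gamma)^{3/2}}\sqrt{\frac{\log(KT/\delta)\log^2 T}{t}}$ up to lower-order pieces. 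Plugging this into the scalar inequality and then solving the resulting recursion --- exploiting that the weights $\eta_i^{(t)}$ concentrate on indices $i$ close to $t$ and using the a-priori bound $\|\bm{\Delta}_t\|_\infty\le(1-\gamma)^{-1}$ to absorb the early iterates, along the lines of \cite[Section~4]{li2021qlearning} --- yields $\|\bm{\Delta}_T\|_\infty\le\varepsilon+\frac{6\gamma\xi}{(1-\gamma)^2}$ as soon as $T$ obeys~\eqref{eq:syn-q-iteration-complexity-general}, which is~\eqref{eq:model-free-Q-T-general}. I expect the solution of this recursive inequality to be the main obstacle: it is precisely what produces the sharp $(1-\gamma)^{-4}$ dependence, and relative to the tabular analysis of \cite{li2021qlearning} it is further complicated by the extra misspecification term and by the anchor structure hidden inside the noise, which must be propagated through the recursion without losing tightness.

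Finally,~\eqref{eq:model-free-V} follows from~\eqref{eq:model-free-Q-T-general} by the standard conversion from an $\ell_\infty$-accurate $Q$-estimate to the sub-optimality of the induced greedy policy: with $\pi_T$ greedy with respect to $Q_T$, the identity $\bm{V}^\star-\bm{V}^{\pi_T}=(\bm{I}-\gamma\bm{P}^{\pi_T})^{-1}(\bm{V}^\star-\mathcal{T}^{\pi_T}\bm{V}^\star)$ together with $\bm{0}\le\bm{V}^\star-\mathcal{T}^{\pi_T}\bm{V}^\star\le 2\|\bm{Q}_T-\bm{Q}^\star\|_\infty\bm{1}$ (which uses $Q_T(s,\pi_T(s))\ge Q_T(s,\pi^\star(s))$) and $\|(\bm{I}-\gamma\bm{P}^{\pi_T})^{-1}\|_1\le(1-\gamma)^{-1}$ gives $\max_s|V^{\pi_T}(s)-V^\star(s)|\lesssim\frac{1}{1-\gamma}\|\bm{Q}_T-\bm{Q}^\star\|_\infty$; combining with~\eqref{eq:model-free-Q-T-general} yields~\eqref{eq:model-free-V}. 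I would include this short and routine derivation for completeness.
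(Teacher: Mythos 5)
Your overall architecture (matrix form of the update, decomposition into noise, misspecification, and contraction terms, Freedman's inequality over the $K$ anchor martingales, and the standard $Q$-to-$V$ conversion at the end) matches the paper's, and the misspecification bookkeeping giving the $\frac{2\gamma\xi}{1-\gamma}$ per-step term is fine. However, there is a genuine quantitative gap in the core step: you bound the conditional variance of each anchor martingale increment by the worst case $\gamma^{2}(1-\gamma)^{-2}$ and then pass immediately to the scalar recursion $\Vert\bm{\Delta}_{t}\Vert_{\infty}\leq\gamma\sum_{i}\eta_{i}^{(t)}\Vert\bm{\Delta}_{i-1}\Vert_{\infty}+\Vert\bm{\zeta}_{t}\Vert_{\infty}+\frac{2\gamma\xi}{1-\gamma}$. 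With $\Vert\bm{\zeta}_{t}\Vert_{\infty}\lesssim(1-\gamma)^{-3/2}\sqrt{\log(KT/\delta)\,\mathrm{polylog}(T)/T}$, solving $u\leq\gamma u+\zeta+\frac{2\gamma\xi}{1-\gamma}$ yields $u\lesssim(1-\gamma)^{-5/2}\sqrt{\log(KT/\delta)\,\mathrm{polylog}(T)/T}$, i.e.\ an iteration complexity of order $(1-\gamma)^{-5}\varepsilon^{-2}$, \emph{not} the claimed $(1-\gamma)^{-4}\varepsilon^{-2}$. The extra factor of $(1-\gamma)^{-1}$ cannot be recovered once you have discarded both the conditional variance $\mathsf{Var}_{\overline{\bm{P}}}(\bm{V}_{i-1})$ and the entrywise (vector) structure of the recursion.

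The paper's proof avoids this loss in two ways that your plan omits. First, Freedman's inequality is applied with the genuine conditional variance retained: the variance of $\bm{\lambda}(s,a)(\widehat{\bm{P}}_{\mathcal{K}}^{(t)}-\bm{P}_{\mathcal{K}})\bm{V}_{i-1}$ equals $\bm{\lambda}(s,a)^{2}\mathsf{Var}_{\bm{P}_{\mathcal{K}}}(\bm{V}_{i-1})$ by independence of the $K$ anchor samples, and Claim~\ref{claim:1} (a nontrivial inequality specific to the anchor structure, proved by a symmetrization over index pairs) shows this is dominated by $\mathsf{Var}_{\overline{\bm{P}}_{s,a}}(\bm{V}_{i-1})$ with $\overline{\bm{P}}=\bm{\Lambda}\bm{P}_{\mathcal{K}}$. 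Second, the contraction term is kept entrywise via the two-sided bounds $\overline{\bm{P}}^{\pi^{\star}}\bm{\Delta}_{t-1}\leq\overline{\bm{P}}(\bm{V}_{t-1}-\bm{V}^{\star})\leq\overline{\bm{P}}^{\pi_{t-1}}\bm{\Delta}_{t-1}$, so that unrolling produces Neumann-series-type objects $\sum_{k}\gamma^{k}(\overline{\bm{P}}^{\pi})^{k}\sqrt{\mathsf{Var}_{\overline{\bm{P}}}(\bm{V}_{i})}$ to which the total-variance (telescoping) argument of \cite{azar2013minimax,li2021qlearning} applies, yielding $(1-\gamma)^{-3/2}$ instead of the naive $(1-\gamma)^{-2}$; this is exactly where the factor you are missing is saved, and it leads to the self-bounding recursion $\Vert\bm{\Delta}_{t}\Vert_{\infty}\lesssim\sqrt{\frac{\mathrm{polylog}(T)\log(KT/\delta)}{(1-\gamma)^{4}T}\bigl(1+\max_{t/2\leq i<t}\Vert\bm{\Delta}_{i}\Vert_{\infty}\bigr)}+\frac{2\gamma\xi}{(1-\gamma)^{2}}$ whose solution gives \eqref{eq:model-free-Q-T-general}. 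You correctly anticipate that solving the recursion is the crux, but the ingredients you commit to upstream make the sharp rate unreachable. Your derivation of \eqref{eq:model-free-V} from \eqref{eq:model-free-Q-T-general} is correct and essentially the same as the paper's.
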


Theorem \ref{thm:syn-q-upper-general} subsumes Theorem \ref{thm:syn-q-upper}
as a special case with $\xi=0$. The remainder of this section is devoted to proving Theorem \ref{thm:syn-q-upper-general}.

\subsection{Proof of Theorem \ref{thm:syn-q-upper-general}}
First we show that \eqref{eq:model-free-V} can be easily obtained from \eqref{eq:model-free-Q-T-general}. Since \cite{singh1994upper} gives rise to $$\Vert V^{\pi_{T}}-V^{\star}\Vert_{\infty}\leq\frac{2\gamma\Vert V_{T}-V^{\star}\Vert_{\infty}}{1-\gamma},$$
we have $$\Vert V^{\pi_{T}}-V^{\star}\Vert_{\infty}\leq\frac{2\gamma\Vert Q_{T}-Q^{\star}\Vert_{\infty}}{1-\gamma},$$
due to $\Vert V_{T}-V^{\star}\Vert_{\infty}\leq\Vert Q_{T}-Q^{\star}\Vert_{\infty}$. Then \eqref{eq:model-free-V} follows directly from \eqref{eq:model-free-Q-T-general}. 

Therefore, we are left to justify \eqref{eq:model-free-Q-T-general}. To start with, we consider the update rule

\[
\bm{Q}_{t}=\left(1-\eta_{t}\right)\bm{Q}_{t-1}+\eta_{t}\left(\bm{r}+\gamma\widehat{\bm{P}}_{t}\bm{V}_{t-1}\right).
\]
By defining the error term $\bm{\Delta}_{t}\coloneqq\bm{Q}_{t}-\bm{Q}^{\star}$,
we can decompose $\bm{\Delta}_{t}$ into 
\begin{align}
\bm{\Delta}_{t} & =\left(1-\eta_{t}\right)\bm{Q}_{t-1}+\eta_{t}\left(\bm{r}+\gamma\widehat{\bm{P}}_{t}\bm{V}_{t-1}\right)-\bm{Q}^{\star}\nonumber \\
 & =\left(1-\eta_{t}\right)\left(\bm{Q}_{t-1}-\bm{Q}^{\star}\right)+\eta_{t}\left(\bm{r}+\gamma\widehat{\bm{P}}_{t}\bm{V}_{t-1}-\bm{Q}^{\star}\right)\nonumber \\
 & =\left(1-\eta_{t}\right)\left(\bm{Q}_{t-1}-\bm{Q}^{\star}\right)+\gamma\eta_{t}\left(\widehat{\bm{P}}_{t}\bm{V}_{t-1}-\bm{P}\bm{V}^{\star}\right)\nonumber \\
 & =\left(1-\eta_{t}\right)\bm{\Delta}_{t-1}+\gamma\eta_{t}\bm{\Lambda}\left(\widehat{\bm{P}}_{\mathcal{K}}^{\left(t\right)}-\bm{P}_{\mathcal{K}}\right)\bm{V}_{t-1}+\gamma\eta_{t}\bm{\Lambda}\bm{P}_{\mathcal{K}}\left(\bm{V}_{t-1}-\bm{V}^{\star}\right)+\gamma\eta_{t}\left(\bm{\Lambda}\bm{P}_{\mathcal{K}}-\bm{P}\right)\bm{V}^{\star}.\label{eq:delta-decompose}
\end{align}
Here in the penultimate equality, we make use of $\bm{Q}^{\star}=\bm{r}+\gamma\bm{P}\bm{V}^{\star}$;
and the last equality comes from $\widehat{\bm{P}}_{t}=\bm{\Lambda}\widehat{\bm{P}}_{\mathcal{K}}^{(t)}$
which is defined in \eqref{eq:defn-PKt}. It is straightforward to
check that $\bm{\Lambda}\bm{P}_{\mathcal{K}}$ is also a probability
transition matrix. We denote by $\overline{\bm{P}}=\bm{\Lambda}\bm{P}_{\mathcal{K}}$
hereafter. The third term in the decomposition above can be upper and 
lower bounded by
\begin{align*}
\overline{\bm{P}}\left(\bm{V}_{t-1}-\bm{V}^{\star}\right) & =\overline{\bm{P}}^{\pi_{t-1}}\bm{Q}_{t-1}-\overline{\bm{P}}^{\pi^{\star}}\bm{Q}^{\star}\leq\overline{\bm{P}}^{\pi_{t-1}}\bm{Q}_{t-1}-\overline{\bm{P}}^{\pi_{t-1}}\bm{Q}^{\star}=\overline{\bm{P}}^{\pi_{t-1}}\bm{\Delta}_{t-1},
\end{align*}
and 
\[
\overline{\bm{P}}\left(\bm{V}_{t-1}-\bm{V}^{\star}\right)=\overline{\bm{P}}^{\pi_{t-1}}\bm{Q}_{t-1}-\overline{\bm{P}}^{\pi^{\star}}\bm{Q}^{\star}\geq\overline{\bm{P}}^{\pi^{\star}}\bm{Q}_{t-1}-\overline{\bm{P}}^{\pi^{\star}}\bm{Q}^{\star}=\overline{\bm{P}}^{\pi^{\star}}\bm{\Delta}_{t-1}.
\]
Plugging these bounds into \eqref{eq:delta-decompose} yields 
\begin{align*}
\bm{\Delta}_{t} & \leq\left(1-\eta_{t}\right)\bm{\Delta}_{t-1}+\gamma\eta_{t}\bm{\Lambda}\left(\widehat{\bm{P}}_{\mathcal{K}}^{\left(t\right)}-\bm{P}_{\mathcal{K}}\right)\bm{V}_{t-1}+\gamma\eta_{t}\overline{\bm{P}}^{\pi_{t-1}}\bm{\Delta}_{t-1}+\gamma\eta_{t}\left(\bm{\Lambda}\bm{P}_{\mathcal{K}}-\bm{P}\right)\bm{V}^{\star},\\
\bm{\Delta}_{t} & \geq\left(1-\eta_{t}\right)\bm{\Delta}_{t-1}+\gamma\eta_{t}\bm{\Lambda}\left(\widehat{\bm{P}}_{\mathcal{K}}^{\left(t\right)}-\bm{P}_{\mathcal{K}}\right)\bm{V}_{t-1}+\gamma\eta_{t}\overline{\bm{P}}^{\pi^{\star}}\bm{\Delta}_{t-1}+\gamma\eta_{t}\left(\bm{\Lambda}\bm{P}_{\mathcal{K}}-\bm{P}\right)\bm{V}^{\star}.
\end{align*}
Repeatedly invoking these two recursive relations leads to 
\begin{align}
\bm{\Delta}_{t} & \leq\eta_{0}^{\left(t\right)}\bm{\Delta}_{0}+\sum_{i=1}^{t}\eta_{i}^{\left(t\right)}\gamma\left(\overline{\bm{P}}^{\pi_{t-1}}\bm{\Delta}_{t-1}+\bm{\Lambda}\left(\widehat{\bm{P}}_{\mathcal{K}}^{\left(t\right)}-\bm{P}_{\mathcal{K}}\right)\bm{V}_{t-1}+\left(\bm{\Lambda}\bm{P}_{\mathcal{K}}-\bm{P}\right)\bm{V}^{\star}\right),\label{eq:syn-upper}\\
\bm{\Delta}_{t} & \geq\eta_{0}^{\left(t\right)}\bm{\Delta}_{0}+\sum_{i=1}^{t}\eta_{i}^{\left(t\right)}\gamma\left(\overline{\bm{P}}^{\pi^{\star}}\bm{\Delta}_{t-1}+\bm{\Lambda}\left(\widehat{\bm{P}}_{\mathcal{K}}^{\left(t\right)}-\bm{P}_{\mathcal{K}}\right)\bm{V}_{t-1}+\left(\bm{\Lambda}\bm{P}_{\mathcal{K}}-\bm{P}\right)\bm{V}^{\star}\right),\label{eq:syn-lower}
\end{align}
where 
\[
\eta_{i}^{\left(t\right)}\coloneqq\begin{cases}
\prod_{j=1}^{t}\left(1-\eta_{j}\right), & \text{if }i=0,\\
\eta_{i}\prod_{j=i+1}^{t}\left(1-\eta_{j}\right), & \text{if }0<i<t,\\
\eta_{t}, & \text{if }i=t.
\end{cases}
\]
Here we adopt the same notations as \cite{li2021qlearning}.

To begin with, we consider the upper bound \eqref{eq:syn-upper}.
It can be further decomposed as 
\begin{align}
\bm{\Delta}_{t} & \leq\underbrace{\eta_{0}^{\left(t\right)}\bm{\Delta}_{0}+\sum_{i=1}^{\left(1-\alpha\right)t}\eta_{i}^{\left(t\right)}\gamma\left(\overline{\bm{P}}^{\pi_{t-1}}\bm{\Delta}_{t-1}+\bm{\Lambda}\left(\widehat{\bm{P}}_{\mathcal{K}}^{\left(t\right)}-\bm{P}_{\mathcal{K}}\right)\bm{V}_{t-1}\right)}_{\eqqcolon\bm{\theta}_{t}}+\underbrace{\sum_{i=\left(1-\alpha\right)t+1}^{t}\eta_{i}^{\left(t\right)}\gamma\bm{\Lambda}\left(\widehat{\bm{P}}_{\mathcal{K}}^{\left(t\right)}-\bm{P}_{\mathcal{K}}\right)\bm{V}_{i-1}}_{\eqqcolon\bm{\nu}_{t}}\nonumber \\
 & \qquad+\underbrace{\sum_{i=1}^{t}\eta_{i}^{\left(t\right)}\gamma\left(\bm{\Lambda}\bm{P}_{\mathcal{K}}-\bm{P}\right)\bm{V}^{\star}}_{\eqqcolon\bm{\omega}_{t}}+\sum_{i=\left(1-\alpha\right)t+1}^{t}\eta_{i}^{\left(t\right)}\gamma\overline{\bm{P}}^{\pi_{t-1}}\bm{\Delta}_{i-1},\label{eq:delta-decompose-further}
\end{align}
where we define $\alpha\coloneqq C_{4}(1-\gamma)/\log T$ for some
constant $C_{4}>0$. Next, we turn to bound $\bm{\theta}_{t}$ and
$\bm{\nu}_{t}$ respectively for any $t$ satisfying $\frac{T}{c_{2}\log\frac{1}{1-\gamma}}\leq t\leq T$
with stepsize choice \eqref{eq:syn-q-stepsize}.

\paragraph{Bounding $\bm{\omega}_{t}$.}
It is straightforward to bound
\begin{align*}
\left\Vert \bm{\omega}_{t}\right\Vert _{\infty} & \overset{\text{(i)}}{=}\left\Vert \gamma\left(\bm{\Lambda}\bm{P}_{\mathcal{K}}-\bm{P}\right)\bm{V}^{\star}\right\Vert _{\infty}\\
 & \overset{\text{(ii)}}{\leq}\gamma\left(\left\Vert \bm{\Lambda}\right\Vert _{1}\left\Vert \left(\bm{P}_{\mathcal{K}}-\widetilde{\bm{P}}_{\mathcal{K}}\right)\bm{V}^{\star}\right\Vert _{\infty}+\left\Vert \left(\widetilde{\bm{P}}-\bm{P}\right)\bm{V}^{\star}\right\Vert _{\infty}\right)\\
 & \overset{\text{(iii)}}{\leq}\frac{2\gamma\xi}{1-\gamma},
\end{align*}
where the first equality comes from the fact that $\sum_{i=1}^{t}\eta_{i}^{(t)}=1$
\cite[Equation (40)]{li2021qlearning}; the second inequality utilizes
$\widetilde{\bm{P}}=\bm{\Lambda}\widetilde{\bm{P}}_{\mathcal{K}}$;
the last line uses the facts that $\Vert\bm{\Lambda}\Vert_{1}=1$,
$\Vert\bm{V}^{\star}\Vert_{\infty}\leq(1-\gamma)^{-1}$ and $\Vert\widetilde{\bm{P}}_{\mathcal{K}}-\bm{P}_{\mathcal{K}}\Vert_{1}\leq\Vert\widetilde{\bm{P}}-\bm{P}\Vert_{1}\leq\xi$. 

\paragraph{Bounding $\bm{\theta}_{t}$.}

By similar derivation as Step 1 in \cite[Appendix A.2]{li2021qlearning},
we have 
\begin{align}
\left\Vert \bm{\theta}_{t}\right\Vert _{\infty} & \leq\eta_{0}^{\left(t\right)}\left\Vert \bm{\Delta}_{0}\right\Vert _{\infty}+t\max_{1\leq i\leq\left(1-\alpha\right)t}\eta_{i}^{\left(t\right)}\max_{1\leq i\leq\left(1-\alpha\right)t}\left(\left\Vert \overline{\bm{P}}^{\pi_{t-1}}\bm{\Delta}_{i-1}\right\Vert _{\infty}+\left\Vert \bm{\Lambda}\widehat{\bm{P}}_{\mathcal{K}}^{\left(t\right)}\bm{V}_{i-1}\right\Vert _{\infty}+\left\Vert \bm{\Lambda}\bm{P}_{\mathcal{K}}\bm{V}_{i-1}\right\Vert _{\infty}\right)\nonumber \\
 & \overset{\text{(i)}}{\leq}\eta_{0}^{\left(t\right)}\left\Vert \bm{\Delta}_{0}\right\Vert _{\infty}+t\max_{1\leq i\leq\left(1-\alpha\right)t}\eta_{i}^{\left(t\right)}\max_{1\leq i\leq\left(1-\alpha\right)t}\left(\left\Vert \bm{\Delta}_{i-1}\right\Vert _{\infty}+2\left\Vert \bm{V}_{i-1}\right\Vert _{\infty}\right)\nonumber \\
 & \overset{\text{(ii)}}{\leq}\frac{1}{2T^{2}}\cdot\frac{1}{1-\gamma}+\frac{1}{2T^{2}}\cdot t\cdot\frac{3}{1-\gamma}\nonumber \\
 & \leq\frac{2}{\left(1-\gamma\right)T},\label{eq:theta-t}
\end{align}
where (i) is due to the fact that $\Vert\overline{\bm{P}}^{\pi_{t-1}}\Vert_{1}=\Vert\bm{\Lambda}\widehat{\bm{P}}_{\mathcal{K}}^{\left(t\right)}\Vert_{1}=\Vert\bm{\Lambda}\bm{P}_{\mathcal{K}}\Vert_{1}=1$ and
(ii) comes from \cite[Equation (39a)]{li2021qlearning}.

\paragraph{Bounding $\bm{\nu}_{t}$.}

To control the second term, we apply the following Freedman's inequality.
\begin{lemma}[Freedman's Inequality]\label{lemma:freedman}
Consider a real-valued martingale $\{Y_{k}:k=0,1,2,\cdots\}$ with
difference sequence $\{X_{k}:k=1,2,3,\cdots\}$. Assume that the difference
sequence is uniformly bounded: 
\[
\left|X_{k}\right|\leq R\qquad\text{and}\qquad\mathbb{E}\left[X_{k}|\{X_j\}_{j=1}^{k-1}\right]=0\qquad\text{for all }k\geq1.
\]
Let 
\[
S_{n}\coloneqq\sum_{k=1}^{n}X_{i},\qquad T_{n}\coloneqq\sum_{k=1}^{n}\mathsf{Var}\left\{ X_{k}|\{X_j\}_{j=1}^{k-1}\right\} .
\]
Then for any given $\sigma^{2}\geq0$, one has 
\[
\mathbb{P}\left(\left|S_{n}\right|\geq\tau\text{ and }T_{n}\leq\sigma^{2}\right)\leq2\exp\left(-\frac{\tau^{2}/2}{\sigma^{2}+R\tau/3}\right).
\]

In addition, suppose that $W_{n}\leq\sigma^{2}$ holds deterministically.
For any positive integer $K\geq1$, with probability at least $1-\delta$
one has 
\[
\left|S_{n}\right|\leq\sqrt{8\max\left\{ T_{n},\frac{\sigma^{2}}{2^{K}}\right\} \log\frac{2K}{\delta}}+\frac{4}{3}R\log\frac{2K}{\delta}.
\]
\end{lemma}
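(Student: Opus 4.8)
The plan is to prove the first, joint-event tail bound by the classical exponential-supermartingale argument, and then bootstrap it to the data-adaptive bound via a dyadic peeling over the possible magnitudes of $T_n$. For the basic bound I would first record a Bernstein-type control of the conditional moment generating function: if $X$ is mean-zero given a $\sigma$-algebra $\mathcal{F}$ with $|X|\le R$, then, writing $v\coloneqq\mathsf{Var}(X\mid\mathcal{F})$, the elementary estimate $|\mathbb{E}[X^m\mid\mathcal{F}]|\le R^{m-2}v$ for $m\ge2$ gives $\mathbb{E}[e^{\lambda X}\mid\mathcal{F}]\le\exp\!\big(\tfrac{\lambda^2 v/2}{1-\lambda R/3}\big)$ for every $0\le\lambda<3/R$. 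Specializing to $\mathcal{F}_{k-1}\coloneqq\sigma(X_1,\dots,X_{k-1})$ and noting that $T_n=\sum_{k\le n}\mathsf{Var}(X_k\mid\mathcal{F}_{k-1})$ is $\mathcal{F}_{n-1}$-measurable, the process $M_n\coloneqq\exp\!\big(\lambda S_n-\tfrac{\lambda^2/2}{1-\lambda R/3}T_n\big)$ is a supermartingale with $\mathbb{E}[M_n]\le1$. On the event $\{S_n\ge\tau,\,T_n\le\sigma^2\}$ one has $M_n\ge\exp\!\big(\lambda\tau-\tfrac{\lambda^2/2}{1-\lambda R/3}\sigma^2\big)$, so Markov's inequality followed by the choice $\lambda=\tau/(\sigma^2+R\tau/3)$ yields $\mathbb{P}(S_n\ge\tau,\,T_n\le\sigma^2)\le\exp\!\big(-\tfrac{\tau^2/2}{\sigma^2+R\tau/3}\big)$; running the same computation for the differences $\{-X_k\}_{k\ge1}$ and taking a union bound gives the two-sided inequality.

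For the adaptive ``in addition'' statement I would use a peeling argument. Write $L\coloneqq\log(2K/\delta)$ and, for $j=1,\dots,K$, set $\sigma_j^2\coloneqq\sigma^2 2^{-(j-1)}$ and $\tau_j\coloneqq2\sqrt{\sigma_j^2 L}+\tfrac43 RL$. An AM--GM computation, reducing to $\sigma_j^2 L+\tfrac49 R^2L^2\ge\tfrac23 RL\sqrt{\sigma_j^2 L}$, shows $\tfrac{\tau_j^2/2}{\sigma_j^2+R\tau_j/3}\ge L$, so the first part gives $\mathbb{P}(|S_n|\ge\tau_j,\,T_n\le\sigma_j^2)\le2e^{-L}=\delta/K$. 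Union-bounding over $j\in\{1,\dots,K\}$, with probability at least $1-\delta$ the following holds: for every such $j$, $T_n\le\sigma_j^2$ implies $|S_n|<\tau_j$. On this event let $j^\star$ be the largest index with $T_n\le\sigma_{j^\star}^2$, which exists since $T_n\le\sigma^2=\sigma_1^2$ by the deterministic hypothesis. If $j^\star<K$ then $T_n>\sigma_{j^\star+1}^2=\sigma_{j^\star}^2/2$, hence $|S_n|<\tau_{j^\star}\le2\sqrt{2T_nL}+\tfrac43 RL=\sqrt{8T_nL}+\tfrac43 RL$; and if $j^\star=K$ then $T_n\le\sigma^2 2^{-(K-1)}=2\sigma^2 2^{-K}$, hence $|S_n|<\tau_K\le2\sqrt{2\sigma^2 2^{-K}L}+\tfrac43 RL=\sqrt{8\sigma^2 2^{-K}L}+\tfrac43 RL$. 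In both cases $|S_n|\le\sqrt{8\max\{T_n,\sigma^2 2^{-K}\}L}+\tfrac43 RL$, as claimed.

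Since this is Freedman's classical inequality together with a routine stratification trick, I do not expect a genuine obstacle. The only points that need care are verifying that the advertised constants $8$ and $\tfrac43$ survive the peeling (which is precisely the AM--GM step above) and keeping the finest scale $\sigma^2 2^{-K}$ correctly bookkept in the boundary case $j^\star=K$; the degenerate case $\sigma^2=0$, which forces $T_n=0$ and $X_k\equiv0$ a.s., is handled separately and is trivial.
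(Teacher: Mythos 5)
Your proof is correct. Note, however, that the paper does not actually prove this lemma: its ``proof'' is a one-line citation to Theorem~4 of the reference \cite{li2021qlearning}, so there is no internal argument to compare against. Your write-up supplies a complete, self-contained derivation of that cited result by the canonical route --- the Bernstein-type conditional MGF bound $\mathbb{E}[e^{\lambda X}\mid\mathcal{F}]\leq\exp\bigl(\tfrac{\lambda^{2}v/2}{1-\lambda R/3}\bigr)$, the supermartingale $M_{n}=\exp\bigl(\lambda S_{n}-\tfrac{\lambda^{2}/2}{1-\lambda R/3}T_{n}\bigr)$ (valid because $T_{n}$ is predictable), the optimal choice $\lambda=\tau/(\sigma^{2}+R\tau/3)$, and then the dyadic peeling over $\sigma_{j}^{2}=\sigma^{2}2^{-(j-1)}$ with $\tau_{j}=2\sqrt{\sigma_{j}^{2}L}+\tfrac{4}{3}RL$. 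I checked the key quantitative steps: the quadratic inequality $\tau_{j}^{2}/2\geq L(\sigma_{j}^{2}+R\tau_{j}/3)$ does hold for your $\tau_{j}$, the union bound over $j\in\{1,\dots,K\}$ gives total failure probability $\delta$, and the two boundary cases $j^{\star}<K$ and $j^{\star}=K$ deliver exactly the advertised $\sqrt{8\max\{T_{n},\sigma^{2}2^{-K}\}\log\tfrac{2K}{\delta}}+\tfrac{4}{3}R\log\tfrac{2K}{\delta}$. You also correctly read the statement's $W_{n}$ as the deterministic bound on $T_{n}$ (an evident typo in the lemma) and flag the degenerate case $\sigma^{2}=0$. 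The only thing your version buys beyond the paper is self-containedness; the only thing it costs is length.
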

\begin{proof}
	See \cite[Theorem 4]{li2021qlearning}.
\end{proof}To apply this inequality, we can express $\bm{\nu}_{t}$
as 
\[
\bm{\nu}_{t}\coloneqq\sum_{i=\left(1-\alpha\right)t+1}^{t}\bm{x}_{i},
\]
with 
\begin{equation}
\bm{x}_{i}\coloneqq\eta_{i}^{\left(t\right)}\gamma\bm{\Lambda}\left(\widehat{\bm{P}}_{\mathcal{K}}^{\left(t\right)}-\bm{P}_{\mathcal{K}}\right)\bm{V}_{i-1},\quad\text{and}\quad\mathbb{E}\left[\bm{x}_{i}|\bm{V}_{i-1},\cdots,\bm{V}_{0}\right]=\bm{0}.\label{eq:defn-xt}
\end{equation}

\begin{enumerate}
\item In order to calculate bound $R$ in Lemma \ref{lemma:freedman}, one
has 
\begin{align*}
B & \coloneqq\max_{\left(1-\alpha\right)t<t\leq t}\left\Vert \bm{x}_{i}\right\Vert _{\infty}\leq\max_{\left(1-\alpha\right)t<t\leq t}\left\Vert \eta_{i}^{\left(t\right)}\bm{\Lambda}\left(\widehat{\bm{P}}_{\mathcal{K}}^{\left(t\right)}-\bm{P}_{\mathcal{K}}\right)\bm{V}_{i-1}\right\Vert _{\infty}\\
 & \leq\max_{\left(1-\alpha\right)t<t\leq t}\eta_{i}^{\left(t\right)}\left(\left\Vert \bm{\Lambda}\widehat{\bm{P}}_{\mathcal{K}}^{\left(t\right)}\right\Vert _{1}+\left\Vert \bm{\Lambda}\bm{P}_{\mathcal{K}}\right\Vert _{1}\right)\left\Vert \bm{V}_{i-1}\right\Vert _{\infty}\\
 & \leq\max_{\left(1-\alpha\right)t<t\leq t}\eta_{i}^{\left(t\right)}\cdot\frac{2}{1-\gamma}\leq\frac{4\log^{4}T}{\left(1-\gamma\right)^{2}T},
\end{align*}
where the last inequality comes from \cite[Eqn (39b)]{li2021qlearning}
and the fact that $\Vert\bm{V}_{i-1}\Vert_{\infty}\leq\frac{1}{1-\gamma}$. 
\item Then regarding the variance term, we claim for the moment that
\begin{align}
\bm{W}_{t} & \coloneqq\sum_{i=\left(1-\alpha\right)t+1}^{t}\text{diag}\left(\mathsf{Var}\left(\bm{x}_{i}|\bm{V}_{i-1},\cdots,\bm{V}_{0}\right)\right)\nonumber\\
& \leq\gamma^{2}\sum_{i=\left(1-\alpha\right)t+1}^{t}\left(\eta_{i}^{\left(t\right)}\right)^{2}\mathsf{Var}_{\overline{\bm{P}}}\left(\bm{V}_{i-1}\right).\label{eq:claim-Wt}
\end{align}
Then we have
\begin{align}
\bm{W}_{t}  & \leq\max_{\left(1-\alpha\right)t\leq i\leq t}\eta_{i}^{\left(t\right)}\left(\sum_{i=\left(1-\alpha\right)t+1}^{t}\eta_{i}^{\left(t\right)}\right)\max_{\left(1-\alpha\right)t\leq i<t}\mathsf{Var}_{\overline{\bm{P}}}\left(\bm{V}_{i}\right)\nonumber \\
 & \leq\frac{2\log^{4}T}{\left(1-\gamma\right)T}\max_{\left(1-\alpha\right)t\leq i<t}\mathsf{Var}_{\overline{\bm{P}}}\left(\bm{V}_{i}\right),
\end{align}
where the second line comes from \cite[Eqns (39b), (40)]{li2021qlearning}.
A trivial upper bound for $\bm{W}_{t}$ is 
\[
\left|\bm{W}_{t}\right|\leq\frac{2\log^{4}T}{\left(1-\gamma\right)T}\cdot\frac{1}{\left(1-\gamma\right)^{2}}\bm{1}=\frac{2\log^{4}T}{\left(1-\gamma\right)^{3}T}\bm{1},
\]
which uses the fact that $\mathsf{Var}_{\bm{P}}(\bm{V}_{i})\leq\Vert\bm{V}_{i}\Vert_{\infty}^{2}\leq1/(1-\gamma)^{2}$. 
\end{enumerate}
Then, we invoke Lemma \ref{lemma:freedman} with $K=\left\lceil 2\log_{2}\frac{1}{1-\gamma}\right\rceil $ and apply the union bound argument
over $\mathcal{K}$ to arrive at 
\begin{align}
\left|\bm{\nu}_{t}\right| & \leq\sqrt{8\left(\bm{W}_{t}+\frac{\sigma^{2}}{2^{K}}\bm{1}\right)\log\frac{8KT\log\frac{1}{1-\gamma}}{\delta}}+\frac{4}{3}B\log\frac{8KT\log\frac{1}{1-\gamma}}{\delta}\bm{1}\nonumber \\
 & \leq\sqrt{8\left(\bm{W}_{t}+\frac{2\log^{4}T}{\left(1-\gamma\right)T}\bm{1}\right)\log\frac{8KT}{\delta}}+\frac{4}{3}B\log\frac{8KT\log\frac{1}{1-\gamma}}{\delta}\bm{1}\nonumber \\
 & \leq\sqrt{\frac{32\log^{4}T}{\left(1-\gamma\right)T}\log\frac{8KT}{\delta}\left(\max_{\left(1-\alpha\right)t\leq i<t}\mathsf{Var}_{\bm{\Lambda}\bm{P}_{\mathcal{K}}}\left(\bm{V}_{i}\right)+\bm{1}\right)}+\frac{12\log^{4}T}{\left(1-\gamma\right)^{2}T}\log\frac{8KT}{\delta}\bm{1}.\label{eq:nu-t}
\end{align}
Hence if we define 
\[
\bm{\varphi}_{t}\coloneqq64\frac{\log^{4}T\log\frac{KT}{\delta}}{\left(1-\gamma\right)T}\left(\max_{\frac{t}{2}\leq i\leq t}\mathsf{Var}_{\overline{\bm{P}}}\left(\bm{V}_{i}\right)+\bm{1}\right),
\]
then \eqref{eq:theta-t} and \eqref{eq:nu-t} implies that 
\begin{equation}
\left|\bm{\theta}_{t}\right|+\left|\bm{\nu}_{t}\right|+\left|\bm{\omega}_{t}\right|\leq\sqrt{\bm{\varphi}_{t}}+\frac{2\gamma\xi}{1-\gamma}\bm{1},\label{eq:phi-t}
\end{equation}
with probability over $1-\delta$ for all $2t/3\leq k\le t$, as long
as $T\gg\log^{4}T\log\frac{KT}{\delta}/\left(1-\gamma\right)^{3}$.
Therefore, plugging \eqref{eq:phi-t} into \eqref{eq:delta-decompose-further},
we arrive at the recursive relationship 
\begin{align*}
	\bm{\Delta}_{t}&\leq\sqrt{\bm{\varphi}_{t}}+\frac{2\gamma\xi}{1-\gamma}\bm{1}+\sum_{i=\left(1-\alpha\right)k+1}^{k}\eta_{i}^{\left(k\right)}\gamma\overline{\bm{P}}^{\pi_{i-1}}\bm{\Delta}_{i-1}=\sqrt{\bm{\varphi}_{t}}+\frac{2\gamma\xi}{1-\gamma}\bm{1}+\sum_{i=\left(1-\alpha\right)k}^{k-1}\eta_{i}^{\left(k\right)}\gamma\overline{\bm{P}}^{\pi_{i-1}}\bm{\Delta}_{i}.
\end{align*}
This recursion is expressed in a similar way as \cite[Eqn. (46)]{li2021qlearning}
so we can invoke similar derivation in \cite[Appendix A.2]{li2021qlearning}
to obtain that 
\begin{equation}
\bm{\Delta}_{t}\leq30\sqrt{\frac{\log^{4}T\log\frac{KT}{\delta}}{\left(1-\gamma\right)^{4}T}\left(1+\max_{\frac{t}{2}\leq i<t}\left\Vert \bm{\Delta}_{i}\right\Vert _{\infty}\right)}\bm{1}+\frac{2\gamma\xi}{\left(1-\gamma\right)^{2}}\bm{1}.\label{eq:delta-upper}
\end{equation}
Then we turn to \eqref{eq:syn-lower}. Applying a similar argument, 
we can deduce that 
\begin{equation}
\bm{\Delta}_{t}\geq-30\sqrt{\frac{\log^{4}T\log\frac{KT}{\delta}}{\left(1-\gamma\right)^{4}T}\left(1+\max_{\frac{t}{2}\leq i<t}\left\Vert \bm{\Delta}_{i}\right\Vert _{\infty}\right)}\bm{1}-\frac{2\gamma\xi}{\left(1-\gamma\right)^{2}}\bm{1}.\label{eq:delta-lower}
\end{equation}

For any $t$ satisfying $\frac{T}{c_{2}\log\frac{1}{1-\gamma}}\leq t\leq T$, taking \eqref{eq:delta-upper} and \eqref{eq:delta-lower} collectively
gives rise to 
\begin{equation}
	\left\Vert \bm{\Delta}_{t}\right\Vert _{\infty}\leq30\sqrt{\frac{\log^{4}T\log\frac{KT}{\delta}}{\left(1-\gamma\right)^{4}T}\left(1+\max_{\frac{t}{2}\leq i<t}\left\Vert \bm{\Delta}_{i}\right\Vert _{\infty}\right)}+\frac{2\gamma\xi}{\left(1-\gamma\right)^{2}}.\label{eq:Delta}
\end{equation}
Let 
\[
u_{k}\coloneqq\max\left\{ \left\Vert \bm{\Delta}_{t}\right\Vert _{\infty}:2^{k}\frac{T}{c_{2}\log\frac{1}{1-\gamma}}\leq t\leq T\right\} .
\]
By taking supremum over $t\in\{\lceil2^{k}T/(c_{2}\log\frac{1}{1-\gamma})\rceil,\ldots,T\}$
on both sides of (\ref{eq:Delta}), we have
\begin{equation}
	u_{k}\leq30\sqrt{\frac{\log^{4}T\log\frac{KT}{\delta}}{\left(1-\gamma\right)^{4}T}\left(1+u_{k-1}\right)}+\frac{2\gamma\xi}{\left(1-\gamma\right)^{2}}\qquad\forall\,\,1\leq k\leq \log\left(c_{2}\log\frac{1}{1-\gamma}\right).\label{eq:uk}
\end{equation}
It is straightforward to bound $u_{0}\leq\frac{1}{1-\gamma}$. For
$k\geq1$, it is straightforward to obtain from (\ref{eq:uk}) that
\begin{equation}
	u_{k}\leq3\max\left\{ 30\sqrt{\frac{\log^{4}T\log\frac{KT}{\delta}}{\left(1-\gamma\right)^{4}T}},30\sqrt{\frac{\log^{4}T\log\frac{KT}{\delta}}{\left(1-\gamma\right)^{4}T}u_{k-1}},\frac{2\gamma\xi}{\left(1-\gamma\right)^{2}}\right\}, \label{eq:uk-max}
\end{equation}
for $1\leq k\leq \log(c_{2}\log\frac{1}{1-\gamma})$.
We analyze (\ref{eq:uk-max}) under two different cases:
\begin{enumerate}
	\item If there exists some integer $k_{0}$ with $1\leq k_0<\lceil\log(c_{2}\log\frac{1}{1-\gamma})\rceil$, 
	such that 
	\[
	u_{k_{0}}\leq\max\left\{ 1,\frac{6\gamma\xi}{\left(1-\gamma\right)^{2}}\right\} ,
	\]
	then it is straightforward to check from (\ref{eq:uk-max}) that 
	\begin{equation}
		u_{k_{0}+1}\leq3\max\left\{ 30\sqrt{\frac{\log^{4}T\log\frac{KT}{\delta}}{\left(1-\gamma\right)^{4}T}},\frac{2\gamma\xi}{\left(1-\gamma\right)^{2}}\right\}  \label{eq:recursive}
	\end{equation}
	as long as $T\geq C_{3}(1-\gamma)^{-4}\log^{4}T\log(KT/\delta)$ for
	some sufficiently large constant $C_{3}>0$.
	\item Otherwise we have $u_{k}>\max\{1,\frac{6\gamma\xi}{(1-\gamma)^{2}}\}$
	for all $1\leq k< \lceil\log(c_{2}\log\frac{1}{1-\gamma})\rceil$. This together
	with (\ref{eq:uk-max}) suggests that \[\max\left\{1,\frac{6\gamma\xi}{(1-\gamma)^2}\right\}<3\max\left\{ 30\sqrt{\frac{\log^{4}T\log\frac{KT}{\delta}}{\left(1-\gamma\right)^{4}T}},30\sqrt{\frac{\log^{4}T\log\frac{KT}{\delta}}{\left(1-\gamma\right)^{4}T}u_{k-1}},\frac{2\gamma\xi}{\left(1-\gamma\right)^{2}}\right\},\]
	and therefore
	\[
	\max\left\{ 30\sqrt{\frac{\log^{4}T\log\frac{KT}{\delta}}{\left(1-\gamma\right)^{4}T}},30\sqrt{\frac{\log^{4}T\log\frac{KT}{\delta}}{\left(1-\gamma\right)^{4}T}u_{k-1}},\frac{2\gamma\xi}{\left(1-\gamma\right)^{2}}\right\} =30\sqrt{\frac{\log^{4}T\log\frac{KT}{\delta}}{\left(1-\gamma\right)^{4}T}u_{k-1}}
	\]
	for all $1\leq k\leq \log(c_{2}\log\frac{1}{1-\gamma})$. Let 
	\[
	v_{k}\coloneqq90\sqrt{\frac{\log^{4}T\log\frac{KT}{\delta}}{\left(1-\gamma\right)^{4}T}u_{k-1}}.
	\]
	Then we know from (\ref{eq:uk}) that
	\[
	u_{k}\leq v_{k}\qquad\forall\,\,1\leq k\leq \log\left(c_{2}\log\frac{1}{1-\gamma}\right).
	\]
	By applying the above two inequalities recursively, we know that
	\begin{align*}
		u_{k} & \leq v_{k}=\left(\frac{8100\log^{4}T\log\frac{KT}{\delta}}{\left(1-\gamma\right)^{4}T}\right)^{1/2}u_{k-1}^{1/2}\leq\left(\frac{8100\log^{4}T\log\frac{KT}{\delta}}{\left(1-\gamma\right)^{4}T}\right)^{1/2}v_{k-1}^{1/2}\\
		& \leq\left(\frac{8100\log^{4}T\log\frac{KT}{\delta}}{\left(1-\gamma\right)^{4}T}\right)^{1/2+1/4}u_{k-2}^{1/4}\leq\left(\frac{8100\log^{4}T\log\frac{KT}{\delta}}{\left(1-\gamma\right)^{4}T}\right)^{1/2+1/4}v_{k-2}^{1/4}\\
		& \leq\cdots\leq\left(\frac{8100\log^{4}T\log\frac{KT}{\delta}}{\left(1-\gamma\right)^{4}T}\right)^{1-1/2^{k}}u_{0}^{1/2^{k}}\leq\sqrt{\frac{8100\log^{4}T\log\frac{KT}{\delta}}{\left(1-\gamma\right)^{4}T}}\left(\frac{1}{1-\gamma}\right)^{1/2^{k}},
	\end{align*}
	where the last inequality holds as long as $T\geq C_{3}\log^{4}T\log(KT/\delta)(1-\gamma)^{-4}$ for some
	sufficiently large constant $C_{3}>0$. 
	Let $k_{0}=\widetilde{c}\log\log\frac{1}{1-\gamma}$ for some properly
	chosen constant $\widetilde{c}>0$ such that $k_{0}$ is an integer
	between $1$ and $\log(c_{2}\log\frac{1}{1-\gamma})$, we have
	\[
	u_{k_{0}}\leq\sqrt{\frac{8100\log^{4}T\log\frac{KT}{\delta}}{\left(1-\gamma\right)^{4}T}}\left(\frac{1}{1-\gamma}\right)^{1/2^{k_{0}}}=O\left(\sqrt{\frac{\log^{4}T\log\frac{KT}{\delta}}{\left(1-\gamma\right)^{4}T}}\right).
	\]
	When $T\geq C_{3}\log^{4}T\log(KT/\delta)(1-\gamma)^{-4}$ for some
	sufficiently large constant $C_{3}>0$, this implies that $u_{k_{0}}<1$,
	which contradicts with the preassumption that $u_{k}>\max\{1,\frac{6\gamma\xi}{(1-\gamma)^{2}}\}$
	for all $1\leq k\leq c_{2}\log\frac{1}{1-\gamma}$.
\end{enumerate}
Consequently, \eqref{eq:recursive} must hold true and then the definition of $u_k$ immediately leads to 
\begin{align*}
	\left\Vert \bm{\Delta}_{T}\right\Vert _{\infty}&\leq  90\sqrt{\frac{\log^{4}T\log\frac{KT}{\delta}}{\left(1-\gamma\right)^{4}T}}+\frac{6\gamma\xi}{\left(1-\gamma\right)^{2}}.
\end{align*}
Then for any $\varepsilon\in(0,1]$, one has 
\begin{align*}
\Vert\bm{\Delta}_{T}\Vert_{\infty} & \leq\varepsilon+\frac{6\gamma\xi}{\left(1-\gamma\right)^{2}},
\end{align*}
as long as 
\[
90\sqrt{\frac{\log^{4}T\log\frac{KT}{\delta}}{\left(1-\gamma\right)^{4}T}}\leq \varepsilon.
\]
Hence, if the total number of iterations $T$ satisfies 
\[
T\geq C_{3}\frac{\log^{4}T\log\frac{KT}{\delta}}{\left(1-\gamma\right)^{4}\varepsilon^{2}}
\]
for some sufficiently large constant $C_{3}>0$, \eqref{eq:model-free-Q-T}
would hold for Algorithm \ref{alg:model-based-rl} with probability
over $1-\delta$.

Finally, we are left to justify \eqref{eq:claim-Wt}. Recall the definition of $\bm{x}_{i}$ (cf.~\eqref{eq:defn-xt}),
one has 
\begin{align*}
\text{diag}\left(\mathsf{Var}\left(\bm{x}_{i}|\bm{V}_{i-1},\cdots,\bm{V}_{0}\right)\right) & =\gamma^{2}\left(\eta_{i}^{\left(t\right)}\right)^{2}\text{diag}\left(\mathsf{Var}\left(\bm{\Lambda}\left(\widehat{\bm{P}}_{\mathcal{K}}^{\left(t\right)}-\bm{P}_{\mathcal{K}}\right)\bm{V}_{i-1}|\bm{V}_{i-1}\right)\right)\\
 & =\gamma^{2}\left(\eta_{i}^{\left(t\right)}\right)^{2}\text{diag}\left(\bm{\Lambda}\mathsf{Var}\left(\left(\hat{\bm{P}}_{\mathcal{K}}^{\left(i\right)}-\bm{P}_{\mathcal{K}}\right)\bm{V}_{i-1}|\bm{V}_{i-1}\right)\bm{\Lambda}^{\top}\right)\\
 & =\gamma^{2}\left(\eta_{i}^{\left(t\right)}\right)^{2}\left\{ \bm{\lambda}\left(s,a\right)^{2}\mathsf{Var}_{\bm{P}_{\mathcal{K}}}\left(\bm{V}_{i-1}\right)\right\} _{s,a},
\end{align*}
where the notation $\mathsf{Var}_{\bm{P}_{\mathcal{K}}}(\bm{V}_{i-1})$
is defined in \eqref{eq:defn-varpv}. Plugging this into the definition
of $\bm{W}_{t}$ leads to
\begin{align}
\bm{W}_{t} & =\gamma^{2}\sum_{i=\left(1-\alpha\right)t+1}^{t}\left(\eta_{i}^{\left(t\right)}\right)^{2}\left\{ \bm{\lambda}\left(s,a\right)^{2}\mathsf{Var}_{\bm{P}_{\mathcal{K}}}\left(\bm{V}_{i-1}\right)\right\} _{s,a}\nonumber\\
 & =\gamma^{2}\sum_{i=\left(1-\alpha\right)t+1}^{t}\left(\eta_{i}^{\left(t\right)}\right)^{2}\left\{ \bm{\lambda}\left(s,a\right)^{2}\left(\bm{P}_{\mathcal{K}}\left(\bm{V}_{i-1}\circ\bm{V}_{i-1}\right)-\left(\bm{P}_{\mathcal{K}}\bm{V}_{i-1}\right)\circ\left(\bm{P}_{\mathcal{K}}\bm{V}_{i-1}\right)\right)\right\} _{s,a}.\label{w-claim}
\end{align}
Then we introduce a useful claim as follows. The proof is deferred to Appendix \ref{subsec:proof-claim}. 
\begin{claim}\label{claim:1}For any state-action pair $(s,a)\in\mathcal{S}\times\mathcal{A}$ and vector $\bm{V}\in\mathbb{R}^{|\mathcal{S}|}$, one has 
	\begin{align}
	& \bm{\lambda}\left(s,a\right)^{2}\left(\bm{P}_{\mathcal{K}}\left(\bm{V}\circ\bm{V}\right)-\left(\bm{P}_{\mathcal{K}}\bm{V}\right)\circ\left(\bm{P}_{\mathcal{K}}\bm{V}\right)\right)\nonumber \\
	& \quad\leq\bm{\lambda}\left(s,a\right)\bm{P}_{\mathcal{K}}\left(\bm{V}\circ\bm{V}\right)-\left(\bm{\lambda}\left(s,a\right)\bm{P}_{\mathcal{K}}\bm{V}\right)\circ\left(\bm{\lambda}\left(s,a\right)\bm{P}_{\mathcal{K}}\bm{V}\right).\label{eq:syn-claim}
	\end{align}
\end{claim}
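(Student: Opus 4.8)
The plan is to reduce the vector inequality in Claim~\ref{claim:1} to a one-line consequence of Jensen's inequality, after peeling off the convex weights. First I would fix a state-action pair $(s,a)$, abbreviate $\lambda_k\coloneqq\lambda_k(s,a)$ for $k\in[K]$, enumerate the anchor set as $\mathcal{K}=\{(s_k,a_k)\}_{k=1}^{K}$, and set $p_k\coloneqq\bm{P}_{s_k,a_k}\bm{V}$ and $q_k\coloneqq\bm{P}_{s_k,a_k}(\bm{V}\circ\bm{V})$, so that the $k$-th coordinate of $\mathsf{Var}_{\bm{P}_{\mathcal{K}}}(\bm{V})$ is $q_k-p_k^{2}=\mathsf{Var}_{\bm{P}_{s_k,a_k}}(\bm{V})\ge 0$ (cf.~\eqref{eq:defn-varpsav}). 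Since the $(s,a)$ row of $\bm{\Lambda}$ is $\bm{\lambda}(s,a)^{\top}$, we have $\bm{\lambda}(s,a)^{\top}\bm{P}_{\mathcal{K}}=\sum_{k=1}^{K}\lambda_k\bm{P}_{s_k,a_k}$, and therefore the two sides of \eqref{eq:syn-claim} become, respectively,
\[
\mathrm{LHS}=\sum_{k=1}^{K}\lambda_k^{2}\,(q_k-p_k^{2}),\qquad
\mathrm{RHS}=\sum_{k=1}^{K}\lambda_k q_k-\Big(\sum_{k=1}^{K}\lambda_k p_k\Big)^{2}.
\]

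Next I would finish with two elementary facts, both relying on Assumption~\ref{assumption:anchor} (i.e.\ $\lambda_k\ge 0$ and $\sum_{k}\lambda_k=1$, hence $0\le\lambda_k\le 1$). Because $\lambda_k^{2}\le\lambda_k$ and $q_k-p_k^{2}\ge 0$, one gets $\mathrm{LHS}\le\sum_{k}\lambda_k(q_k-p_k^{2})=\sum_{k}\lambda_k q_k-\sum_{k}\lambda_k p_k^{2}$. Since $\{\lambda_k\}_{k=1}^{K}$ is a probability vector, convexity of $t\mapsto t^{2}$ (Jensen's inequality) gives $\sum_{k}\lambda_k p_k^{2}\ge(\sum_{k}\lambda_k p_k)^{2}$. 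Chaining the two bounds yields $\mathrm{LHS}\le\sum_{k}\lambda_k q_k-(\sum_{k}\lambda_k p_k)^{2}=\mathrm{RHS}$, which is exactly \eqref{eq:syn-claim}.

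I do not anticipate a genuine obstacle. The only point requiring care is the bookkeeping: one must read the entrywise square $\bm{\lambda}(s,a)^{2}$ on the left of \eqref{eq:syn-claim} as being contracted with $\mathsf{Var}_{\bm{P}_{\mathcal{K}}}(\bm{V})$ to produce $\sum_{k}\lambda_k^{2}(q_k-p_k^{2})$, and recognize the right-hand side as the $(s,a)$ entry of $\mathsf{Var}_{\overline{\bm{P}}}(\bm{V})$ with $\overline{\bm{P}}=\bm{\Lambda}\bm{P}_{\mathcal{K}}$; once the indexing is aligned, the argument is the two lines above. Conceptually this is the step that uses the \emph{convexity} (not merely affineness) of the anchor representation, as it is precisely the passage from the crude per-anchor variance to the pooled variance $\mathsf{Var}_{\overline{\bm{P}}}$ needed in \eqref{eq:claim-Wt}, and it would fail without the sign constraint $\lambda_k\ge 0$.
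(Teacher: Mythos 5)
Your proof is correct, but it takes a genuinely different route from the paper. The paper proves Claim~\ref{claim:1} by brute-force expansion: it writes the difference $\mathrm{RHS}-\mathrm{LHS}$ in coordinates, uses $\sum_{i'}\sum_{j'}\lambda_{i'}P_{i',j'}=1$ to reorganize it, and then symmetrizes over the index pairs $(i,j)$ and $(i',j')$ to exhibit the difference as $\sum_{i}\sum_{i'<i}\lambda_i\lambda_{i'}\sum_{j,j'}P_{i,j}P_{i',j'}(V_j-V_{j'})^2\geq 0$, a manifest sum of squares. You instead split the inequality into two independent elementary steps: $\mathrm{LHS}=\sum_k\lambda_k^2(q_k-p_k^2)\leq\sum_k\lambda_k(q_k-p_k^2)$ via $\lambda_k^2\leq\lambda_k$ and nonnegativity of each per-anchor variance, and then $\sum_k\lambda_k p_k^2\geq(\sum_k\lambda_k p_k)^2$ by Jensen. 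Your intermediate quantity $\sum_k\lambda_k(q_k-p_k^2)$ is the expected within-anchor variance, so your second step is really the law of total variance ($\mathrm{RHS}$ equals expected conditional variance plus variance of the conditional means), which makes the probabilistic content of the claim transparent and makes explicit \emph{where} each half of Assumption~\ref{assumption:anchor} (nonnegativity versus summing to one) is used. The paper's computation is self-contained and identifies the exact nonnegative gap, but your argument is shorter, more conceptual, and equally rigorous; both fail, as you note, without the sign constraint $\lambda_k\geq 0$.
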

By invoking this claim with $\bm{V}=\bm{V}^{i-1}$ and taking collectively with \eqref{w-claim}, one has
\begin{align*}
\bm{W}_{t} & \leq\gamma^{2}\sum_{i=\left(1-\beta\right)t+1}^{t}\left(\eta_{i}^{\left(t\right)}\right)^{2}\left\{ \bm{\lambda}\left(s,a\right)\bm{P}_{\mathcal{K}}\left(\bm{V}_{i-1}\circ\bm{V}_{i-1}\right)-\left(\bm{\lambda}\left(s,a\right)\bm{P}_{\mathcal{K}}\bm{V}_{i-1}\right)\circ\left(\bm{\lambda}\left(s,a\right)\bm{P}_{\mathcal{K}}\bm{V}_{i-1}\right)\right\} _{s,a}\\
 & =\gamma^{2}\sum_{i=\left(1-\beta\right)t+1}^{t}\left(\eta_{i}^{\left(t\right)}\right)^{2}\left[\bm{\Lambda}\bm{P}_{\mathcal{K}}\left(\bm{V}_{i-1}\circ\bm{V}_{i-1}\right)-\left(\bm{\Lambda}\bm{P}_{\mathcal{K}}\bm{V}_{i-1}\right)\circ\left(\bm{\Lambda}\bm{P}_{\mathcal{K}}\bm{V}_{i-1}\right)\right]\\
 & =\gamma^{2}\sum_{i=\left(1-\beta\right)t+1}^{t}\left(\eta_{i}^{\left(t\right)}\right)^{2}\mathsf{Var}_{\overline{\bm{P}}}\left(\bm{V}_{i-1}\right),
\end{align*}
which is the desired result.

\subsection{Proof of Claim \ref{claim:1}}\label{subsec:proof-claim}
To simplify
notations in this proof, we use $[\lambda_{i}]_{i=1}^{K}$, $[P_{i,j}]_{1\leq i\leq K,1\leq j\leq|\mathcal{S}|}$
and $[V_{i}]_{i=1}^{\left|\mathcal{S}\right|}$ to denote $\bm{\lambda}(s,a)$,
$\bm{P}_{\mathcal{K}}$ and $\bm{V}$ respectively. Then one
has 
\begin{align*}
 & \bm{\lambda}\left(s,a\right)\bm{P}_{\mathcal{K}}\left(\bm{V}\circ\bm{V}\right)-\left(\bm{\lambda}\left(s,a\right)\bm{P}_{\mathcal{K}}\bm{V}\right)\circ\left(\bm{\lambda}\left(s,a\right)\bm{P}_{\mathcal{K}}\bm{V}\right)\\
 & \qquad-\bm{\lambda}\left(s,a\right)^{2}\left(\bm{P}_{\mathcal{K}}\left(\bm{V}\circ\bm{V}\right)-\left(\bm{P}_{\mathcal{K}}\bm{V}\right)\circ\left(\bm{P}_{\mathcal{K}}\bm{V}\right)\right)\\
 & \quad=\sum_{i=1}^{K}\sum_{j=1}^{\left|\mathcal{S}\right|}\lambda_{i}P_{i,j}V_{j}^{2}-\left(\sum_{i=1}^{K}\sum_{j=1}^{\left|\mathcal{S}\right|}\lambda_{i}P_{i,j}V_{j}\right)^{2}-\sum_{i=1}^{K}\sum_{j=1}^{\left|\mathcal{S}\right|}\lambda_{i}^{2}P_{i,j}V_{j}^{2}+\sum_{i=1}^{K}\lambda_{i}^{2}\left(\sum_{j=1}^{\left|\mathcal{S}\right|}P_{i,j}V_{j}\right)^{2}\\
 & \quad=\sum_{i=1}^{K}\sum_{j=1}^{\left|\mathcal{S}\right|}\lambda_{i}P_{i,j}V_{j}\left[\left(1-\lambda_{i}\right)V_{j}-\sum_{i'\neq i}\sum_{j'=1}^{\left|\mathcal{S}\right|}\lambda_{i'}P_{i',j'}V_{j'}\right].\\
 & \quad=\sum_{i=1}^{K}\sum_{j=1}^{\left|\mathcal{S}\right|}\lambda_{i}P_{i,j}V_{j}\left[\left(\sum_{i'=1}^{K}\sum_{j'=1}^{\left|\mathcal{S}\right|}\lambda_{i'}P_{i',j'}-\lambda_{i}\right)V_{j}-\sum_{i'\neq i}\sum_{j'=1}^{\left|\mathcal{S}\right|}\lambda_{i'}P_{i',j'}V_{j'}\right]\\
 & \quad=\sum_{i=1}^{K}\sum_{j=1}^{\left|\mathcal{S}\right|}\sum_{i'\neq i}\sum_{j'=1}^{\left|\mathcal{S}\right|}\lambda_{i}P_{i,j}V_{j}\lambda_{i'}P_{i',j'}\left(V_{j}-V_{j'}\right)
\end{align*}
where in the penultimate equality, we use the fact that 
\[
\sum_{i'=1}^{K}\sum_{j'=1}^{\left|\mathcal{S}\right|}\lambda_{i'}P_{i',j'}=\bm{\lambda}\left(s,a\right)\bm{P}_{\mathcal{K}}\bm{1}=1.
\]
It follows that 
\begin{align*}
 & \bm{\lambda}\left(s,a\right)\bm{P}_{\mathcal{K}}\left(\bm{V}\circ\bm{V}\right)-\left(\bm{\lambda}\left(s,a\right)\bm{P}_{\mathcal{K}}\bm{V}\right)\circ\left(\bm{\lambda}\left(s,a\right)\bm{P}_{\mathcal{K}}\bm{V}\right)\\
 & \qquad-\bm{\lambda}\left(s,a\right)^{2}\left(\bm{P}_{\mathcal{K}}\left(\bm{V}\circ\bm{V}\right)-\left(\bm{P}_{\mathcal{K}}\bm{V}\right)\circ\left(\bm{P}_{\mathcal{K}}\bm{V}\right)\right)\\
 & \quad=\sum_{i=1}^{K}\sum_{1\leq i'<i}\sum_{j=1}^{\left|\mathcal{S}\right|}\sum_{j'=1}^{\left|\mathcal{S}\right|}\left[\lambda_{i}P_{i,j}V_{j}\lambda_{i'}P_{i',j'}\left(V_{j}-V_{j'}\right)+\lambda_{i'}P_{i',j}V_{j}\lambda_{i}P_{i,j'}\left(V_{j}-V_{j'}\right)\right]\\
 & \quad=\sum_{i=1}^{K}\sum_{1\leq i'<i}\lambda_{i}\lambda_{i'}\left[\sum_{j=1}^{\left|\mathcal{S}\right|}\sum_{j'=1}^{\left|\mathcal{S}\right|}P_{i,j}V_{j}P_{i',j'}\left(V_{j}-V_{j'}\right)+\sum_{j=1}^{\left|\mathcal{S}\right|}\sum_{j'=1}^{\left|\mathcal{S}\right|}P_{i',j}V_{j}P_{i,j'}\left(V_{j}-V_{j'}\right)\right]\\
 & \quad\overset{\text{(i)}}{=}\sum_{i=1}^{K}\sum_{1\leq i'<i}\lambda_{i}\lambda_{i'}\left[\sum_{j=1}^{\left|\mathcal{S}\right|}\sum_{j'=1}^{\left|\mathcal{S}\right|}P_{i,j}V_{j}P_{i',j'}\left(V_{j}-V_{j'}\right)+\sum_{j=1}^{\left|\mathcal{S}\right|}\sum_{j'=1}^{\left|\mathcal{S}\right|}P_{i',j'}V_{j'}P_{i,j}\left(V_{j'}-V_{j}\right)\right]\\
 & \quad=\sum_{i=1}^{K}\sum_{1\leq i'<i}\lambda_{i}\lambda_{i'}\left[\sum_{j=1}^{\left|\mathcal{S}\right|}\sum_{j'=1}^{\left|\mathcal{S}\right|}P_{i,j}P_{i',j'}\left(V_{j}-V_{j'}\right)^{2}\right]\\
 & \quad\geq0,
\end{align*}
where in (i), we exchange the indices $j$ and $j'$.

\section{Feature dimension and the number of anchor state-action pairs} \label{appendix:dimension-number}
The assumption that the feature dimension (denoted by $K_{\mathsf{d}}$)
and the number of anchor state-action pairs (denoted by $K_{\mathsf{n}}$)
are equal is actually non-essential. In what follows, we will show
that if $K_{\mathsf{d}}\neq K_{\mathsf{n}}$, then we can modify the
current feature mapping $\phi:\mathcal{S}\times\mathcal{A}\to\mathbb{R}^{K_{\mathsf{d}}}$
to achieve a new feature mapping $\phi':\mathcal{S}\times\mathcal{A}\to\mathbb{R}^{K_{\mathsf{n}}}$
that does not change the transition model $P$. By doing so, the new
feature dimension $K_{\mathsf{n}}$ equals to the number of anchor
state-action pairs.

To begin with, we recall from Definition \ref{def:linear-model} that there exists $K_{\mathsf{d}}$
unknown functions $\psi_{1}$, $\cdots$, $\psi_{K_{\mathsf{d}}}:\mathcal{S}\rightarrow\mathbb{R}$,
such that 
\[
P\left(s'|s,a\right)=\sum_{k=1}^{K_{\mathsf{d}}}\phi_{k}\left(s,a\right)\psi_{k}\left(s'\right),
\]
for every $(s,a)\in\mathcal{S}\times\mathcal{A}$ and $s'\in\mathcal{S}$.
In addition, we also recall from Assumption \ref{assumption:anchor} that there exists $\mathcal{K}\subseteq\mathcal{S}\times\mathcal{A}$
with $\vert\mathcal{K}\vert=K_{\mathsf{n}}$ such that for any $(s,a)\in\mathcal{S}\times\mathcal{A}$,
\[
\phi\left(s,a\right)=\sum_{i:(s_{i},a_{i})\in\mathcal{K}}\lambda_{i}\left(s,a\right)\phi\left(s_{i},a_{i}\right)\in\mathbb{R}^{K_{\mathsf{d}}}\quad\text{for}\quad\sum_{i=1}^{K_{\mathsf{n}}}\lambda_{i}\left(s,a\right)=1\quad\text{and}\quad\lambda_{i}\left(s,a\right)\geq0.
\]

\textbf{Case 1:} $K_{\mathsf{d}}>K_{\mathsf{n}}$. In this case, the
vectors in $\{\phi(s,a):(s,a)\in\mathcal{K}\}$ are linearly independent.
For ease of presentation and without loss of generality, we assume
that $K_{\mathsf{d}}=K_{\mathsf{n}}+1$. This indicates that the matrix
$\bm{\Phi}\in\mathbb{R}^{K_{\mathsf{d}}\times(\vert\mathcal{S}\vert\vert\mathcal{A}\vert)}$
whose columns are composed of the feature vectors of all state-action
pairs has rank $K_{\mathsf{n}}$ and is hence not full row rank. This
suggests that there exists $K_{\mathsf{n}}$ linearly independent
rows (without loss of generality, we assume they are the first $K_{\mathsf{n}}$
rows). We can remove the last row from $\bm{\Phi}$ to obtain $\bm{\Phi}'\coloneqq\bm{\Phi}_{1:K_{\mathsf{n}},:}\in\mathbb{R}^{K_{\mathsf{n}}\times(\vert\mathcal{S}\vert\vert\mathcal{A}\vert)}$
such that $\bm{\Phi}'$ is full row rank. Then we show that we can
actually use the columns of $\bm{\Phi}'$ as new feature mappings.
To see why this is true, note that the last row $\Phi_{K_{\mathsf{n}}+1,:}$
can be represented as a linear combination of the first $K_{\mathsf{n}}$
rows, namely there must exist constants $\{c_{k}\}_{k=1}^{K_{\mathsf{n}}}$
such that for any $(s,a)\in\mathcal{S}\times\mathcal{A}$,
\[
\phi_{K_{\mathsf{n}}+1}(s,a)=\sum_{k=1}^{K_{\mathsf{n}}}c_{k}\phi_{k}(s,a).
\]
Define $\psi_{k}'=\psi_{k}+c_{k}\psi_{K_{\mathsf{n}}+1}$ for $k=1,\ldots,K_{\mathsf{n}}$,
we have
\begin{align*}
P\left(s'\vert s,a\right) & =\sum_{k=1}^{K_{\mathsf{d}}}\phi_{k}\left(s,a\right)\psi_{k}\left(s'\right)=\phi_{K_{\mathsf{n}}+1}\left(s,a\right)\psi_{K_{\mathsf{n}}+1}\left(s'\right)+\sum_{k=1}^{K_{\mathsf{n}}}\phi_{k}\left(s,a\right)\psi_{k}\left(s'\right)\\
& =\sum_{k=1}^{K_{\mathsf{n}}}\phi_{k}\left(s,a\right)\left[\psi_{k}\left(s'\right)+c_{k}\psi_{K_{\mathsf{n}}+1}\left(s'\right)\right]=\sum_{k=1}^{K_{\mathsf{n}}}\phi_{k}\left(s,a\right)\psi_{k}'\left(s'\right),
\end{align*}
which is linear with respect to the new $K_{\mathsf{n}}$ dimensional
feature vectors. It is also straightforward to check that the new
feature mapping satisfies Assumption 1 with the original anchor state-action
pairs $\mathcal{K}$.

\textbf{Case 2:} $K_{\mathsf{d}}<K_{\mathsf{n}}$. For ease of presentation
and without loss of generality, we assume that $K_{\mathsf{n}}=K_{\mathsf{d}}+1$
and that the subspace spanned by the feature vectors of anchor state-action
pairs is non-degenerate, i.e., has rank $K_{\mathsf{d}}$ (otherwise
we can use similar method as in Case 1 to further reduce the feature
dimension $K_{\mathsf{d}}$). In this case, the matrix $\bm{\Phi}_{\mathcal{K}}\in\mathbb{R}^{K_{\mathsf{d}}\times K_{\mathsf{n}}}$
whose columns are composed of the feature vectors of anchor state-action
pairs has rank $K_{\mathsf{d}}$. We can add $K_{\mathsf{n}}-K_{\mathsf{d}}=1$
new row to $\bm{\Phi}_{\mathcal{K}}$ to obtain $\bm{\Phi}_{\mathcal{K}}'\in\mathbb{R}^{K_{\mathsf{n}}\times K_{\mathsf{n}}}$
such that $\bm{\Phi}_{\mathcal{K}}'$ has full rank $K_{\mathsf{n}}$.
Then we let the columns of $\bm{\Phi}_{\mathcal{K}}'=[\phi'(s,a)]_{(s,a)\in\mathcal{K}}$
to be the new feature vectors of the anchor state-action pairs, and
define the new feature vectors for all other state-action pairs $(s,a)\notin\mathcal{K}$
by 
\[
\phi'\left(s,a\right)=\sum_{i:(s_{i},a_{i})\in\mathcal{K}}\lambda_{i}\left(s,a\right)\phi'\left(s_{i},a_{i}\right).
\]
We can check that the transition model $P$ is not changed if we let
$\psi_{K_{\mathsf{n}}}(s')=0$ for every $s'\in\mathcal{S}$. It is
also straightforward to check that Assumption \ref{assumption:anchor} is satisfied.

To conclude, when $K_{\mathsf{d}}\neq K_{\mathsf{n}}$, we can always
construct a new set of feature mappings with dimension $K_{\mathsf{n}}$
such that: (i) the feature dimension equals to the number of anchor
state-action pairs (they are both $K_{\mathsf{n}}$); (ii) the transition
model can still be linearly parameterized by this new set of feature
mappings; and (iii) the anchor state-action pair assumption (Assumption
\ref{assumption:anchor}) is satisfied with the original anchor state-action pairs.

\bibliographystyle{alpha}
\bibliography{bibfile}

\newcommand{\etalchar}[1]{$^{#1}$}
\begin{thebibliography}{LWC{\etalchar{+}}20b}

\bibitem[ABA18]{azizzadenesheli2018efficient}
Kamyar Azizzadenesheli, Emma Brunskill, and Animashree Anandkumar.
\newblock Efficient exploration through bayesian deep q-networks.
\newblock In {\em 2018 Information Theory and Applications Workshop (ITA)},
  pages 1--9. IEEE, 2018.

\bibitem[AGM12]{arora2012learning}
Sanjeev Arora, Rong Ge, and Ankur Moitra.
\newblock Learning topic models--going beyond svd.
\newblock In {\em 2012 IEEE 53rd annual symposium on foundations of computer
  science}, pages 1--10. IEEE, 2012.

\bibitem[AHKS20]{agarwal2020pc}
Alekh Agarwal, Mikael Henaff, Sham Kakade, and Wen Sun.
\newblock Pc-pg: Policy cover directed exploration for provable policy gradient
  learning.
\newblock {\em arXiv preprint arXiv:2007.08459}, 2020.

\bibitem[AKY20]{agarwal2020model}
Alekh Agarwal, Sham Kakade, and Lin~F Yang.
\newblock Model-based reinforcement learning with a generative model is minimax
  optimal.
\newblock In {\em Conference on Learning Theory}, pages 67--83. PMLR, 2020.

\bibitem[AMGK11]{azar2011speedy}
Mohammad~Gheshlaghi Azar, Remi Munos, M~Ghavamzadaeh, and Hilbert~J Kappen.
\newblock Speedy q-learning.
\newblock 2011.

\bibitem[AMK13]{azar2013minimax}
Mohammad~Gheshlaghi Azar, R{\'e}mi Munos, and Hilbert~J Kappen.
\newblock Minimax pac bounds on the sample complexity of reinforcement learning
  with a generative model.
\newblock {\em Machine learning}, 91(3):325--349, 2013.

\bibitem[AOM17]{azar2017minimax}
Mohammad~Gheshlaghi Azar, Ian Osband, and R{\'e}mi Munos.
\newblock Minimax regret bounds for reinforcement learning.
\newblock In {\em International Conference on Machine Learning}, pages
  263--272. PMLR, 2017.

\bibitem[B{\etalchar{+}}00]{bertsekas2000dynamic}
Dimitri~P Bertsekas et~al.
\newblock {\em Dynamic programming and optimal control: Vol. 1}.
\newblock Athena scientific Belmont, 2000.

\bibitem[BD59]{bellman1959functional}
Richard Bellman and Stuart Dreyfus.
\newblock Functional approximations and dynamic programming.
\newblock {\em Mathematical Tables and Other Aids to Computation}, pages
  247--251, 1959.

\bibitem[Bel52]{bellman1952theory}
Richard Bellman.
\newblock On the theory of dynamic programming.
\newblock {\em Proceedings of the National Academy of Sciences of the United
  States of America}, 38(8):716, 1952.

\bibitem[BT96]{bertsekas1996neuro}
Dimitri~P Bertsekas and John~N Tsitsiklis.
\newblock {\em Neuro-dynamic programming}.
\newblock Athena Scientific, 1996.

\bibitem[CCF{\etalchar{+}}20]{chen2020noisy}
Yuxin Chen, Yuejie Chi, Jianqing Fan, Cong Ma, and Yuling Yan.
\newblock Noisy matrix completion: Understanding statistical guarantees for
  convex relaxation via nonconvex optimization.
\newblock {\em SIAM Journal on Optimization}, 30(4):3098--3121, 2020.

\bibitem[CFMW19]{chen2019spectral}
Yuxin Chen, Jianqing Fan, Cong Ma, and Kaizheng Wang.
\newblock Spectral method and regularized mle are both optimal for top-k
  ranking.
\newblock {\em Annals of statistics}, 47(4):2204, 2019.

\bibitem[CFMY20]{chen2020bridging}
Yuxin Chen, Jianqing Fan, Cong Ma, and Yuling Yan.
\newblock Bridging convex and nonconvex optimization in robust pca: Noise,
  outliers, and missing data.
\newblock {\em arXiv preprint arXiv:2001.05484, accepted to Annals of
  Statistics}, 2020.

\bibitem[CFWY21]{chen2020convex}
Yuxin Chen, Jianqing Fan, Bingyan Wang, and Yuling Yan.
\newblock Convex and nonconvex optimization are both minimax-optimal for noisy
  blind deconvolution under random designs.
\newblock {\em Journal of the American Statistical Association},
  (just-accepted):1--27, 2021.

\bibitem[CY20]{cui2020plug}
Qiwen Cui and Lin Yang.
\newblock Is plug-in solver sample-efficient for feature-based reinforcement
  learning?
\newblock {\em Advances in neural information processing systems},
  33:6015--6026, 2020.

\bibitem[CZD{\etalchar{+}}19]{chen2019performance}
Zaiwei Chen, Sheng Zhang, Thinh~T Doan, Siva~Theja Maguluri, and John-Paul
  Clarke.
\newblock Performance of q-learning with linear function approximation:
  Stability and finite-time analysis.
\newblock {\em arXiv preprint arXiv:1905.11425}, 2019.

\bibitem[DB15]{dann2015sample}
Christoph Dann and Emma Brunskill.
\newblock Sample complexity of episodic fixed-horizon reinforcement learning.
\newblock {\em Advances in Neural Information Processing Systems},
  28:2818--2826, 2015.

\bibitem[DKW19]{duan2018state}
Yaqi Duan, Zheng~Tracy Ke, and Mengdi Wang.
\newblock State aggregation learning from markov transition data.
\newblock {\em Advances in Neural Information Processing Systems}, 32, 2019.

\bibitem[DLB17]{dann2017unifying}
Christoph Dann, Tor Lattimore, and Emma Brunskill.
\newblock Unifying pac and regret: uniform pac bounds for episodic
  reinforcement learning.
\newblock In {\em Proceedings of the 31st International Conference on Neural
  Information Processing Systems}, pages 5717--5727, 2017.

\bibitem[DLWZ19]{du2019provably}
Simon~S Du, Yuping Luo, Ruosong Wang, and Hanrui Zhang.
\newblock Provably efficient q-learning with function approximation via
  distribution shift error checking oracle.
\newblock In {\em Proceedings of the 33rd International Conference on Neural
  Information Processing Systems}, pages 8060--8070, 2019.

\bibitem[DS04]{donoho2004does}
David Donoho and Victoria Stodden.
\newblock When does non-negative matrix factorization give a correct
  decomposition into parts?
\newblock In {\em 17th Annual Conference on Neural Information Processing
  Systems, NIPS 2003}. Neural information processing systems foundation, 2004.

\bibitem[EDMB03]{even2003learning}
Eyal Even-Dar, Yishay Mansour, and Peter Bartlett.
\newblock Learning rates for q-learning.
\newblock {\em Journal of machine learning Research}, 5(1), 2003.

\bibitem[EK18]{el2018impact}
Noureddine El~Karoui.
\newblock On the impact of predictor geometry on the performance on
  high-dimensional ridge-regularized generalized robust regression estimators.
\newblock {\em Probability Theory and Related Fields}, 170(1):95--175, 2018.

\bibitem[Fre75]{freedman1975tail}
David~A Freedman.
\newblock On tail probabilities for martingales.
\newblock {\em the Annals of Probability}, pages 100--118, 1975.

\bibitem[HDL{\etalchar{+}}21]{hao2020sparse}
Botao Hao, Yaqi Duan, Tor Lattimore, Csaba Szepesv{\'a}ri, and Mengdi Wang.
\newblock Sparse feature selection makes batch reinforcement learning more
  sample efficient.
\newblock In {\em International Conference on Machine Learning}, pages
  4063--4073. PMLR, 2021.

\bibitem[HZG21]{he2020logarithmic}
Jiafan He, Dongruo Zhou, and Quanquan Gu.
\newblock Logarithmic regret for reinforcement learning with linear function
  approximation.
\newblock In {\em International Conference on Machine Learning}, pages
  4171--4180. PMLR, 2021.

\bibitem[JA18]{jiang2018open}
Nan Jiang and Alekh Agarwal.
\newblock Open problem: The dependence of sample complexity lower bounds on
  planning horizon.
\newblock In {\em Conference On Learning Theory}, pages 3395--3398. PMLR, 2018.

\bibitem[JAZBJ18]{jin2018q}
Chi Jin, Zeyuan Allen-Zhu, Sebastien Bubeck, and Michael~I Jordan.
\newblock Is q-learning provably efficient?
\newblock In {\em Proceedings of the 32nd International Conference on Neural
  Information Processing Systems}, pages 4868--4878, 2018.

\bibitem[JKA{\etalchar{+}}17]{jiang2017contextual}
Nan Jiang, Akshay Krishnamurthy, Alekh Agarwal, John Langford, and Robert~E
  Schapire.
\newblock Contextual decision processes with low bellman rank are
  pac-learnable.
\newblock In {\em International Conference on Machine Learning}, pages
  1704--1713. PMLR, 2017.

\bibitem[JYWJ20]{jin2020provably}
Chi Jin, Zhuoran Yang, Zhaoran Wang, and Michael~I Jordan.
\newblock Provably efficient reinforcement learning with linear function
  approximation.
\newblock In {\em Conference on Learning Theory}, pages 2137--2143. PMLR, 2020.

\bibitem[Kak03]{kakade2003sample}
Sham~Machandranath Kakade.
\newblock {\em On the sample complexity of reinforcement learning}.
\newblock PhD thesis, UCL (University College London), 2003.

\bibitem[KS99]{kearns1999finite}
Michael Kearns and Satinder Singh.
\newblock Finite-sample convergence rates for q-learning and indirect
  algorithms.
\newblock {\em Advances in neural information processing systems}, pages
  996--1002, 1999.

\bibitem[KST{\etalchar{+}}21]{kiran2021deep}
B~Ravi Kiran, Ibrahim Sobh, Victor Talpaert, Patrick Mannion, Ahmad~A
  Al~Sallab, Senthil Yogamani, and Patrick P{\'e}rez.
\newblock Deep reinforcement learning for autonomous driving: A survey.
\newblock {\em IEEE Transactions on Intelligent Transportation Systems}, 2021.

\bibitem[LCC{\etalchar{+}}21a]{li2021qlearning}
Gen Li, Changxiao Cai, Yuxin Chen, Yuantao Gu, Yuting Wei, and Yuejie Chi.
\newblock Is q-learning minimax optimal? a tight sample complexity analysis.
\newblock {\em arXiv preprint arXiv:2102.06548}, 2021.

\bibitem[LCC{\etalchar{+}}21b]{li2021sample}
Gen Li, Yuxin Chen, Yuejie Chi, Yuantao Gu, and Yuting Wei.
\newblock Sample-efficient reinforcement learning is feasible for linearly
  realizable mdps with limited revisiting.
\newblock {\em arXiv preprint arXiv:2105.08024}, 2021.

\bibitem[LWC{\etalchar{+}}20a]{li2020breaking}
Gen Li, Yuting Wei, Yuejie Chi, Yuantao Gu, and Yuxin Chen.
\newblock Breaking the sample size barrier in model-based reinforcement
  learning with a generative model.
\newblock {\em Advances in Neural Information Processing Systems}, 33, 2020.

\bibitem[LWC{\etalchar{+}}20b]{li2020sample}
Gen Li, Yuting Wei, Yuejie Chi, Yuantao Gu, and Yuxin Chen.
\newblock Sample complexity of asynchronous q-learning: Sharper analysis and
  variance reduction.
\newblock {\em Advances in neural information processing systems}, 2020.

\bibitem[MJTS20]{modi2020sample}
Aditya Modi, Nan Jiang, Ambuj Tewari, and Satinder Singh.
\newblock Sample complexity of reinforcement learning using linearly combined
  model ensembles.
\newblock In {\em International Conference on Artificial Intelligence and
  Statistics}, pages 2010--2020. PMLR, 2020.

\bibitem[MR07]{melo2007q}
Francisco~S Melo and M~Isabel Ribeiro.
\newblock Q-learning with linear function approximation.
\newblock In {\em International Conference on Computational Learning Theory},
  pages 308--322. Springer, 2007.

\bibitem[MWCC18]{ma2018implicit}
Cong Ma, Kaizheng Wang, Yuejie Chi, and Yuxin Chen.
\newblock Implicit regularization in nonconvex statistical estimation: Gradient
  descent converges linearly for phase retrieval and matrix completion.
\newblock In {\em International Conference on Machine Learning}, pages
  3345--3354. PMLR, 2018.

\bibitem[PLT{\etalchar{+}}08]{parr2008analysis}
Ronald Parr, Lihong Li, Gavin Taylor, Christopher Painter-Wakefield, and
  Michael~L Littman.
\newblock An analysis of linear models, linear value-function approximation,
  and feature selection for reinforcement learning.
\newblock In {\em Proceedings of the 25th international conference on Machine
  learning}, pages 752--759, 2008.

\bibitem[Put14]{puterman2014markov}
Martin~L Puterman.
\newblock {\em Markov decision processes: discrete stochastic dynamic
  programming}.
\newblock John Wiley \& Sons, 2014.

\bibitem[PW20]{pananjady2020instance}
Ashwin Pananjady and Martin~J Wainwright.
\newblock Instance-dependent $\ell_{\infty}$-bounds for policy evaluation in
  tabular reinforcement learning.
\newblock {\em IEEE Transactions on Information Theory}, 67(1):566--585, 2020.

\bibitem[RM51]{robbins1951stochastic}
Herbert Robbins and Sutton Monro.
\newblock A stochastic approximation method.
\newblock {\em The annals of mathematical statistics}, pages 400--407, 1951.

\bibitem[SB18]{sutton2018reinforcement}
Richard~S Sutton and Andrew~G Barto.
\newblock {\em Reinforcement learning: An introduction}.
\newblock MIT press, 2018.

\bibitem[SHM{\etalchar{+}}16]{silver2016mastering}
David Silver, Aja Huang, Chris~J Maddison, Arthur Guez, Laurent Sifre, George
  Van Den~Driessche, Julian Schrittwieser, Ioannis Antonoglou, Veda
  Panneershelvam, Marc Lanctot, et~al.
\newblock Mastering the game of go with deep neural networks and tree search.
\newblock {\em nature}, 529(7587):484--489, 2016.

\bibitem[SJJ95]{singh1995reinforcement}
Satinder~P Singh, Tommi Jaakkola, and Michael~I Jordan.
\newblock Reinforcement learning with soft state aggregation.
\newblock {\em Advances in neural information processing systems 7}, 7:361,
  1995.

\bibitem[SJK{\etalchar{+}}19]{sun2019model}
Wen Sun, Nan Jiang, Akshay Krishnamurthy, Alekh Agarwal, and John Langford.
\newblock Model-based rl in contextual decision processes: Pac bounds and
  exponential improvements over model-free approaches.
\newblock In {\em Conference on Learning Theory}, pages 2898--2933. PMLR, 2019.

\bibitem[SS20]{shariff2020efficient}
Roshan Shariff and Csaba Szepesv{\'a}ri.
\newblock Efficient planning in large mdps with weak linear function
  approximation.
\newblock {\em arXiv preprint arXiv:2007.06184}, 2020.

\bibitem[SSS{\etalchar{+}}17]{silver2017mastering}
David Silver, Julian Schrittwieser, Karen Simonyan, Ioannis Antonoglou, Aja
  Huang, Arthur Guez, Thomas Hubert, Lucas Baker, Matthew Lai, Adrian Bolton,
  et~al.
\newblock Mastering the game of go without human knowledge.
\newblock {\em nature}, 550(7676):354--359, 2017.

\bibitem[SY94]{singh1994upper}
Satinder~P Singh and Richard~C Yee.
\newblock An upper bound on the loss from approximate optimal-value functions.
\newblock {\em Machine Learning}, 16(3):227--233, 1994.

\bibitem[TV20]{touati2020efficient}
Ahmed Touati and Pascal Vincent.
\newblock Efficient learning in non-stationary linear markov decision
  processes.
\newblock {\em arXiv preprint arXiv:2010.12870}, 2020.

\bibitem[Ver18]{vershynin2018high}
Roman Vershynin.
\newblock {\em High-dimensional probability: An introduction with applications
  in data science}, volume~47.
\newblock Cambridge University Press, 2018.

\bibitem[Wai19a]{wainwright2019stochastic}
Martin~J Wainwright.
\newblock Stochastic approximation with cone-contractive operators: Sharp
  $\ell_{\infty} $-bounds for $ q $-learning.
\newblock {\em arXiv preprint arXiv:1905.06265}, 2019.

\bibitem[Wai19b]{wainwright2019variance}
Martin~J Wainwright.
\newblock Variance-reduced $ q $-learning is minimax optimal.
\newblock {\em arXiv preprint arXiv:1906.04697}, 2019.

\bibitem[Wat89]{watkins1989learning}
Christopher John Cornish~Hellaby Watkins.
\newblock Learning from delayed rewards.
\newblock 1989.

\bibitem[WD92]{watkins1992q}
Christopher~JCH Watkins and Peter Dayan.
\newblock Q-learning.
\newblock {\em Machine learning}, 8(3-4):279--292, 1992.

\bibitem[WDYK20]{wang2020long}
Ruosong Wang, Simon~S Du, Lin Yang, and Sham Kakade.
\newblock Is long horizon rl more difficult than short horizon rl?
\newblock {\em Advances in Neural Information Processing Systems}, 33, 2020.

\bibitem[WDYS20]{wang2020reward}
Ruosong Wang, Simon~S Du, Lin~F Yang, and Ruslan Salakhutdinov.
\newblock On reward-free reinforcement learning with linear function
  approximation.
\newblock {\em arXiv preprint arXiv:2006.11274}, 2020.

\bibitem[WJLJ21]{wei2021learning}
Chen-Yu Wei, Mehdi~Jafarnia Jahromi, Haipeng Luo, and Rahul Jain.
\newblock Learning infinite-horizon average-reward mdps with linear function
  approximation.
\newblock In {\em International Conference on Artificial Intelligence and
  Statistics}, pages 3007--3015. PMLR, 2021.

\bibitem[WSY20]{wang2020reinforcement}
Ruosong Wang, Russ~R Salakhutdinov, and Lin Yang.
\newblock Reinforcement learning with general value function approximation:
  Provably efficient approach via bounded eluder dimension.
\newblock {\em Advances in Neural Information Processing Systems}, 33, 2020.

\bibitem[WVR17]{wen2017efficient}
Zheng Wen and Benjamin Van~Roy.
\newblock Efficient reinforcement learning in deterministic systems with value
  function generalization.
\newblock {\em Mathematics of Operations Research}, 42(3):762--782, 2017.

\bibitem[XG20]{xu2020finite}
Pan Xu and Quanquan Gu.
\newblock A finite-time analysis of q-learning with neural network function
  approximation.
\newblock In {\em International Conference on Machine Learning}, pages
  10555--10565. PMLR, 2020.

\bibitem[YW19]{yang2019sample}
Lin Yang and Mengdi Wang.
\newblock Sample-optimal parametric q-learning using linearly additive
  features.
\newblock In {\em International Conference on Machine Learning}, pages
  6995--7004. PMLR, 2019.

\bibitem[YW20]{yang2020reinforcement}
Lin Yang and Mengdi Wang.
\newblock Reinforcement learning in feature space: Matrix bandit, kernels, and
  regret bound.
\newblock In {\em International Conference on Machine Learning}, pages
  10746--10756. PMLR, 2020.

\bibitem[ZBB{\etalchar{+}}20]{zanette2020frequentist}
Andrea Zanette, David Brandfonbrener, Emma Brunskill, Matteo Pirotta, and
  Alessandro Lazaric.
\newblock Frequentist regret bounds for randomized least-squares value
  iteration.
\newblock In {\em International Conference on Artificial Intelligence and
  Statistics}, pages 1954--1964. PMLR, 2020.

\bibitem[ZHG21]{zhou2020provably}
Dongruo Zhou, Jiafan He, and Quanquan Gu.
\newblock Provably efficient reinforcement learning for discounted mdps with
  feature mapping.
\newblock In {\em International Conference on Machine Learning}, pages
  12793--12802. PMLR, 2021.

\bibitem[ZLKB19]{zanette2019limiting}
Andrea Zanette, Alessandro Lazaric, Mykel~J Kochenderfer, and Emma Brunskill.
\newblock Limiting extrapolation in linear approximate value iteration.
\newblock {\em Advances in Neural Information Processing Systems},
  32:5615--5624, 2019.

\bibitem[ZLKB20]{zanette2020learning}
Andrea Zanette, Alessandro Lazaric, Mykel Kochenderfer, and Emma Brunskill.
\newblock Learning near optimal policies with low inherent bellman error.
\newblock In {\em International Conference on Machine Learning}, pages
  10978--10989. PMLR, 2020.

\end{thebibliography}

\end{document}